\newcolumntype{Y}{>{\raggedright\arraybackslash}X}
\newcolumntype{s}{>{\normalsize\centering\arraybackslash}X}
\setlist[enumerate]{leftmargin=*, label=(\roman*)}
\g@addto@macro\normalsize{
  \setlength{\abovedisplayskip}{6pt}
  \setlength{\belowdisplayskip}{6pt}
  \setlength{\abovedisplayshortskip}{4pt}
  \setlength{\belowdisplayshortskip}{4pt}
}
\renewcommand{\emph}[1]{\textit{#1}}
\newcommand{\maybeClearpage}{}
  \renewcommand{\maybeClearpage}{\clearpage}
\crefname{hypothesis}{Hypothesis}{Hypotheses}
\crefname{fact}{Fact}{Facts}
\DeclareMathOperator \GP {{\mathrm GP}}
\DeclareMathOperator \Var {{\mathrm var}}
\DeclareMathOperator*{\argmin}{arg\,min}
\newcommand \Ccal   {\mathcal{C}}
\newcommand \Dcal   {\mathcal{D}}
\newcommand \U      {\mathcal{U}}
\renewcommand \P    {\mathsf{P}}
\newcommand \E      {\mathsf{E}}
\newcommand \N      {\mathbb{N}}
\newcommand \R      {\mathbb{R}}
\newcommand \XX     {\mathbb{X}}
\newcommand {\one}  {\mathds{1}}
\newcommand \dd     {\text{d}}
\newcommand{\jpgp}{\textsc{j\textsuperscript{+}--gp}\xspace}
\newcommand{\bcrgp}{\textsc{bcr--gp}\xspace}
\newcommand{\cpsgp}{\textsc{cps--gp}\xspace}
\newcommand{\fcp}{\textsc{fcp}\xspace}
\newcommand{\fnsb}[1]{\uline{{#1}}}
\newcommand{\smbf}[1]{{\small\textbf{#1}}}
\title{Design-marginal calibration of Gaussian process predictive
  distributions: Bayesian and conformal approaches
  \thanks{This work was funded by Transvalor S.A.}}
\author{
  Aurélien Pion\thanks{Transvalor S.A., Biot, France; 
    Univ. Paris-Saclay, CNRS, CentraleSupélec, L2S, 
    Gif-sur-Yvette, France.} \and
  Emmanuel Vazquez\thanks{Univ. Paris-Saclay, CNRS,
    CentraleSupélec, L2S, Gif-\-sur-Yvette, France. \email{firstname.lastname@centralesupelec.fr}}
}
\begin{document}

\maketitle
\begin{abstract}
  We study the calibration of Gaussian process (GP) predictive 
  distributions in the interpolation setting from a design-marginal 
  perspective. Conditioning on the data and averaging over a design 
  measure~$\mu$, we formalize $\mu$-coverage for central intervals and 
  $\mu$-probabilistic calibration through randomized probability integral 
  transforms. 
  We introduce two methods. \cpsgp adapts conformal predictive systems 
  to GP interpolation using standardized leave-one-out residuals, yielding 
  stepwise predictive distributions with finite-sample marginal calibration. 
  \bcrgp retains the GP posterior mean and replaces the Gaussian residual by 
  a generalized normal model fitted to cross-validated standardized residuals. 
  A Bayesian selection rule---based either on a posterior upper quantile of the 
  variance for conservative prediction or on a cross-posterior 
  Kolmogorov–Smirnov criterion for probabilistic calibration---controls 
  dispersion and tail behavior while producing smooth predictive 
  distributions suitable for sequential design. 
  Numerical experiments on benchmark functions compare \cpsgp, \bcrgp, 
  Jackknife+ for GPs, and the full conformal Gaussian process, using 
  calibration metrics (coverage, Kolmogorov–Smirnov, integral absolute error) 
  and accuracy or sharpness through the scaled continuous ranked probability score.
\end{abstract}

\noindent\textbf{Keywords:} Gaussian processes; calibration;
$\mu$-calibration; PIT; conformal prediction; conformal
predictive systems; generalized normal residuals; Bayesian
calibration; interpolation; uncertainty quantification; predictive
intervals; proper scoring rules.

\section{Introduction}
\label{sec:intro}

Gaussian processes (GPs) are classical Bayesian models used to
approximate an unknown real-valued deterministic function $f$ from
limited evaluations over a design space $\XX \subset \mathbb{R}^d$
\citep[see, e.g.,][]{stein_interpolation_1999, santner2003design,
  rasmussen}.  They provide not only point predictions but also a
measure of uncertainty, which makes them central to tasks where
decisions depend on both accuracy and risk. Applications include
Bayesian optimization \citep[see, e.g.,][]{Jones1998:article_ego,
  villemonteix_informational_2009, feliot2017bayesian} and the
estimation of excursion sets \citep{bect2017bayesian,
  azzimonti2021adaptive}, where uncertainty quantification directly
drives the exploration of the design space.

In this work we focus on the interpolation setting, where observations
are assumed exact. A GP prior with mean function $m$ and covariance
kernel $k$ induces a Gaussian posterior predictive distribution at any
location $x\in\XX$. The posterior mean serves as an interpolator of the observed data, while its variance
quantifies predictive uncertainty. Both quantities are available in
closed form.

The combination of exact interpolation and quantified predictive
uncertainty explains the success of GPs in applications where
uncertainty guides the exploration of the input space. Yet, the
quality of decisions relies on the calibration of predictive
distributions: the empirical frequency with which $f(x)$ falls within
nominal prediction intervals, when $x$ varies, should match the intended 
coverage. We assess these frequencies with respect to a design measure $\mu$ on
$\XX$, which governs how observation points are drawn.
Although this design-marginal notion of calibration has appeared in the
literature under various forms, we examine it more closely in this article and refer to it as \emph{$\mu$-calibration}.
In practice, GP-based predictive intervals
are often miscalibrated, leading to overconfident or overly
conservative predictions. Such discrepancies have been documented in the interpolation setting,
for instance with a constant mean function and Matérn covariance
\citep{lod_pion_vazquez}.

Figure~\ref{fig:intro-pareto} illustrates this issue. It displays the
trade-off between predictive accuracy, measured by RMSE, and calibration
quality, measured by the Kolmogorov--Smirnov (KS) metric of
probability integral transform (PIT) values (calibration metrics are
detailed in Section~\ref{sec:background}). The maximum-likelihood (ML)
hyperparameters achieve low RMSE, consistent with the empirical findings
of \cite{petit_parameter_2023}, but poor calibration. In contrast,
post-hoc calibration using the methods presented in this article
improves calibration without degrading accuracy.

\begin{figure}[h!]
  \centering
  \includegraphics[width=\textwidth]{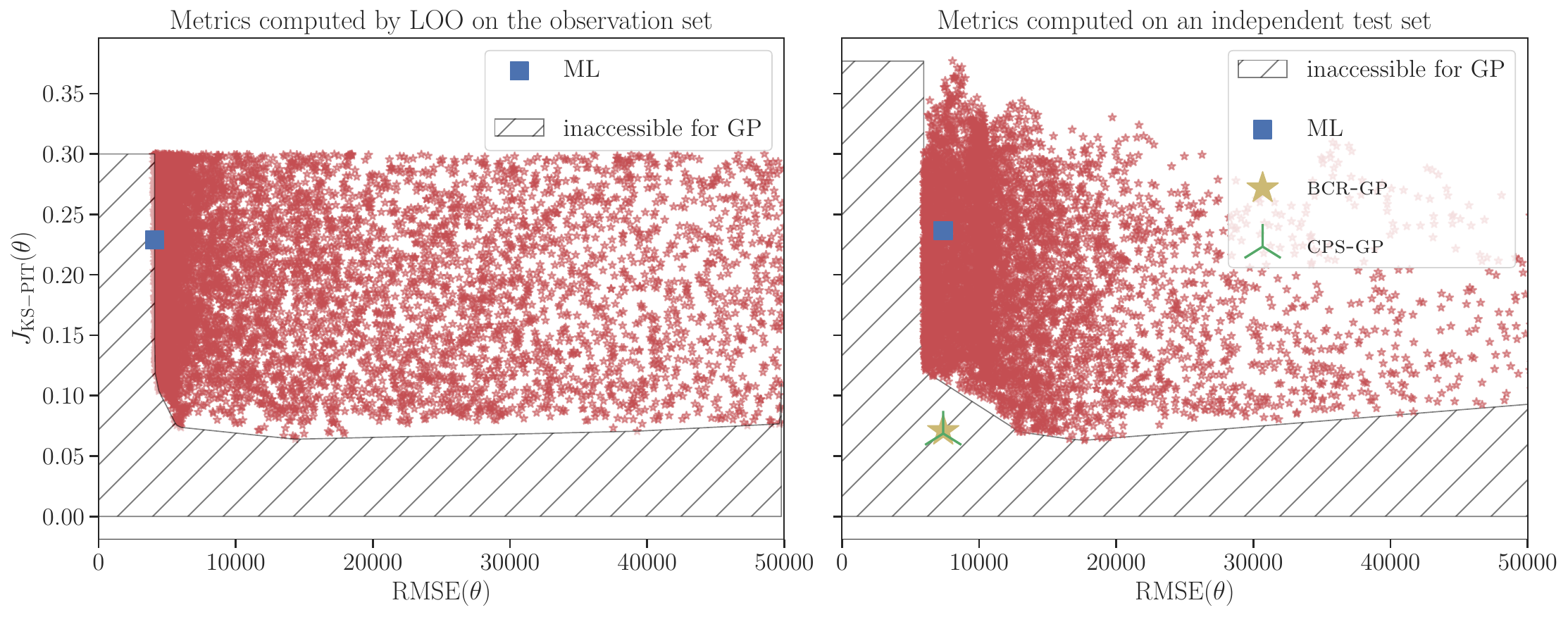}
  \caption{Trade-off between predictive accuracy (RMSE) and calibration quality
    (KS--PIT, smaller is better, see Section~\ref{sec:background}) for a uniform
    random sample of GP kernel hyperparameters (red points).
    We interpolate the Goldstein--Price function using a GP with constant mean and Matérn covariance.
    The left panel shows
    metrics computed by leave-one-out (LOO) on the observation set (150 points),
    and the right
    panel shows metrics computed on an independent test set drawn from $\mu$ (1500 points).
    The ML-selected hyperparameters (blue square) yield accurate but poorly
    calibrated predictions. Post-processed predictors using \cpsgp\ (green
    symbol) and \bcrgp\ (gold star) improve calibration on the test set without
    degrading accuracy. The hatched region corresponds to RMSE--KS--PIT pairs
    that cannot be attained by any GP posterior under the considered GP
    family.}
  \label{fig:intro-pareto}
\end{figure}

Among the approaches developed to address miscalibration, conformal
prediction (CP) is particularly attractive. It is model-agnostic and
provides distribution-free guarantees on marginal coverage
\citep{vovk_Gammerman_2005}. Several recent works adapt CP to Gaussian
processes in order to construct calibrated predictive intervals at
user-specified levels \citep{Papadopoulos_2024, jaber_conformal_2024}.
Beyond interval prediction, \citet{vovk19:_nonpar} introduced the
\emph{conformal predictive systems} (CPS), which extend CP to produce a
calibrated predictive distribution at each test point.

Building on these ideas, we develop two calibration approaches for GP
interpolation: an adaptation of CPS to the interpolation setting
(\cpsgp) and a new Bayesian parametric method (\bcrgp).

The \cpsgp\ construction yields predictive distributions whose associated
prediction sets satisfy the standard conformal finite-sample marginal
coverage guarantee under exchangeability, but their stepwise,
non-differentiable form limits their direct use in standard Bayesian
optimization or excursion-set estimation algorithms.

The second approach, \bcrgp, short for \emph{Bayesian-calibrated
  residuals for Gaussian processes}, retains the GP posterior mean but
replaces the predictive distribution with a parametric family fitted
to normalized GP residuals. We adopt a generalized normal
distribution, with shape and scale parameters selected through a
Bayesian strategy inspired by tolerance-interval constructions
\citep{meeker2017statistical}. The resulting predictive distributions
support inference at arbitrary confidence levels and can be tuned to
favour more or less conservative uncertainty quantification. Unlike
\cpsgp, this parametric formulation produces smooth predictive CDFs,
directly usable in sequential-design algorithms that rely on
closed-form expressions (e.g.\ for expected improvement or related
criteria).

We compare \bcrgp\ with \cpsgp, the Jackknife+ Gaussian process 
method of \citet{jaber_conformal_2024}, and the full conformal 
approach of \citet{Papadopoulos_2024}, using calibration diagnostics 
and proper scoring rules that assess both calibration and sharpness.

The remainder of the article is organized as follows.
Section~\ref{sec:related-work} reviews calibration of GP predictive
distributions, including hyperparameter selection, conformal
prediction, and post-hoc recalibration. Section~\ref{sec:background}
fixes notation and setup, introduces $\mu$-calibration (coverage and
PIT), recalls randomized PIT, and summarizes calibration metrics.
Section~\ref{sec:cps} develops \cpsgp for interpolation and
discusses its properties and practical limitations.
Section~\ref{sec:calibr-using-gener} presents \bcrgp
(generalized normal residual model), parameter estimation, and
implementation details. Section~\ref{sec:numerical} reports
experiments on benchmarks, comparing all methods.
Section~\ref{sec:discussion} concludes with main findings and future
directions.

\section{Related work}
\label{sec:related-work}
We summarize results on GP hyperparameter selection and post-hoc calibration,
including conformal and distributional adjustments.

\subsection{Effect of hyperparameter selection in GPs}
\label{sec:effect-hyperp-select}

Calibration of GP predictive intervals is strongly influenced by how
kernel hyperparameters are selected. Karvonen et
al.~\citep{karvonen2020maximum} study noiseless GP interpolation with
Sobolev/Matérn kernels when only a global scale parameter is estimated
by maximum likelihood. In the RKHS setting, they show that the ratio
between the interpolation error and the posterior standard deviation
is bounded by a factor of order $n^{1/2}$ as the number of points $n$
increases, with a complementary characterization of underconfidence in
terms of a specific subspace of the RKHS. For Sobolev classes that are
smoother or rougher than the RKHS, related polynomial bounds are
obtained under additional assumptions on the design and
smoothness. These results quantify a form of slow overconfidence in a
worst-case sense (supremum over $\XX$ and over the relevant function
class) when only a global scale is fitted by maximum likelihood.

Beyond scale-only fitting, adaptive empirical Bayes also faces structural
limitations. In the Gaussian white-noise model, \citet{szabo2015} show that
credible balls based on marginal-likelihood tuning of prior smoothness cannot
achieve nominal frequentist coverage uniformly over a Sobolev class, even
though the corresponding posteriors are (near) minimax rate-adaptive. They also
show that asymptotically correct coverage can be recovered after
restricting attention to a suitably regular subclass of functions.
For GPs with squared-exponential kernels, \citet{hadji2021can} show that
empirical-Bayes $L^2$-credible balls, with length-scale chosen by marginal
maximum likelihood, can be severely overconfident for a large subclass of
truths, with frequentist coverage converging to zero. They further prove that
coverage can be restored under additional regularity assumptions by inflating
the credible sets or by modifying the empirical-Bayes estimator. Taken
together, these results indicate that credible sets based on empirical-Bayes
hyperparameter tuning do not generally provide reliable uncertainty
quantification without additional corrections or restrictions.

Empirical studies also document the sensitivity of coverage to the
selection criterion. \citet{petit_parameter_2023} compare
likelihood-based and leave-one-out criteria cast as proper scoring
rules and find that the choice of model family often matters more than
the specific selection criterion, with several criteria yielding
comparable performance for Matérn
models. \citet{marrel2024probabilistic} survey estimation and
validation diagnostics and introduce a multi-objective hyperparameter
estimation algorithm targeting improved predictive
distributions. \citet{acharki2023robust} propose a two-step,
coverage-oriented adjustment: they first tune hyperparameters to match
a target leave-one-out coverage and then calibrate prediction
intervals at a prescribed level. Implementing a $(1-\alpha)$ interval
in this approach requires fitting two quantile-oriented GP models (for
the $\alpha/2$ and $1-\alpha/2$ bounds).

\subsection{Post-hoc calibration methods}
\label{sec:post-hoc-calibration}

Conformal prediction (CP) methods are a widely used class of post-hoc
calibration techniques. They wrap around any base predictive model to
produce valid marginal coverage guarantees without assuming model
correctness. Jackknife+ and full conformal procedures have been adapted
to GPs \citep{jaber_conformal_2024, Papadopoulos_2024},
and are recommended for robust calibration \citep{lod_pion_vazquez}.
However, these methods construct prediction intervals only at fixed
confidence levels and do not provide a full calibrated predictive
distribution.

\cite{vovk17:_nonpar, vovk19:_nonpar} introduce the conformal 
predictive system (CPS), which provides a full predictive distribution and 
supports interval estimation at arbitrary levels.
\cite{vovk17:_confor_predic_distr_kernel} adapt CPS to kernel ridge regression, and in 
this work we extend this approach to GP interpolation. The CPS framework and its
adaptation to our setting are detailed in
Section~\ref{sec:cps}.

\subsection{Auxiliary-model and distributional adjustment methods}
\label{sec:auxil-model-distr}

Another line of post-hoc approaches recalibrates predictions by
fitting an auxiliary model. In the GP setting, \citet{capone2023sharp}
keep the base GP for the mean and compute predictive quantiles with a
second GP whose hyperparameters are tuned on a hold-out calibration
set to meet coverage; this uses data splitting (or cross-fitting).

More generally, distributional recalibration learns a
(typically monotone) mapping from predicted to empirical conditional
distributions using a hold-out set. Examples include monotone
quantile/CDF mappings \citep{kuleshov2018accurate} and
density-estimation--based distribution calibration
\citep{kuleshov2022calibrated}. These methods are model-agnostic and
operate on pointwise marginals; they do not exploit GP prior structure
and act only on marginal distributions. In the same spirit,
\citet{Dey2024LADaR} propose LADaR, which applies a local
probability--probability map (Cal-PIT) for instance-wise calibration of
conditional CDFs.

\section{Background and setup}
\label{sec:background}

\subsection{Setting and notation}
\label{sec:bg-setting}

We consider noise-free observations of a deterministic but unknown function
$f : \XX \subset \R^d \to \R$ of the form
$$
  Z_i = f(X_i), \quad i=1,\dots,n,
$$
where the design points $X_i$ are i.i.d.\ from a probability measure
$\mu$ on $\XX$, that will be referred to as the \emph{design measure}. The dataset is
$\Dcal_n = \{(X_i,Z_i)\}_{i=1}^n$.

Throughout, the Bayesian GP framework is used only as a
\emph{construction device} for building predictive distributions; we
do not adopt a Bayesian interpretation ($f$ is fixed, non-random).
We write
$$
  \P_n(\,\cdot\,) := \P\bigl(\,\cdot\,\mid \Dcal_n\bigr)
$$
for probabilities conditional on the observed dataset.

Given $\Dcal_n$, the remaining randomness under $\P_n$ comes solely
from auxiliary draws (e.g., fresh
test points $X_{i}^{\star}\sim\mu$, $i=1,\, 2\, \ldots$), all independent of $\Dcal_n$.

\subsection{Predictive distributions and prediction intervals}
\label{sec:bg-predictive}

Let $\hat F_n(\cdot\mid x)$ denote a predictive CDF for $f(x)$, at $x\in\XX$, constructed from
$\Dcal_n$. In the GP framework, $\hat F_n(\cdot\mid x)$ is taken as a Gaussian CDF with mean and
variance given by the kriging equations \citep[see, e.g.,][]{chiles1999geostatistics, stein_interpolation_1999}.

For any predictive CDF (not necessarily continuous), we use the generalized (left-conti\-nuous) quantile
\begin{equation}
  \label{eq:generalized-inverse}
  \hat F_n^{-1}(p\mid x) \;:=\; \inf\{ z\in\R:\ \hat F_n(z\mid x)\ge p\}, \qquad p\in(0,1),  
\end{equation}
and define the central $(1-\alpha)$ interval
\begin{equation}
  \label{eq:prediction-interval}
  \Ccal_{n,1-\alpha}(x)
  \;=\; \bigl[\hat F_n^{-1}(\alpha/2\mid x),\,\hat F_n^{-1}(1-\alpha/2\mid x)\bigr].  
\end{equation}
If $V \sim \hat F_n(\cdot\mid x)$ independently of $\Dcal_n$, then, under $\P_n$,
$$
  \P_n \left\{\hat F_n^{-1}(\alpha/2\mid x)\ \le\ V\ \le\ \hat F_n^{-1}(1-\alpha/2\mid x)\right\}
  \;\ge\; 1-\alpha,
$$
with equality when $\hat F_n(\cdot\mid x)$ is continuous at both endpoints. Thus,
when $\hat F_n$ has atoms, $\Ccal_{n,1-\alpha}(x)$ is conservative: its predictive
mass under $\hat F_n(\cdot\mid x)$ exceeds $1-\alpha$ whenever an endpoint coincides
with a jump of the CDF.

\paragraph{Exact mass via boundary randomization}
To obtain exact $(1-\alpha)$ mass in the presence of discontinuities,
introduce randomization at the jumps. Fix $x\in\XX$ and let
$\tau\sim\U(0,1)$ be independent of $\Dcal_n$. Define the randomized
predictive CDF
$$
\hat F_{n,\,\tau}(z\mid x)
= \hat F_n(z^-\mid x)
  + \tau\bigl(\hat F_n(z\mid x)-\hat F_n(z^-\mid x)\bigr),
$$
and its (random) generalized inverse
$\hat F_{n,\,\tau}^{-1}( \cdot \mid x)$, defined analogously to~\eqref{eq:generalized-inverse}.

\begin{proposition}[Boundary randomization and exact interval mass]
\label{prop:randomized-pit}
Let $V \sim \hat F_n(\cdot\mid x)$, $x\in\XX$,
be independent of $(\tau,\,\Dcal_n)$, and set $U :=\hat F_{n,\,\tau}(V \mid x)$.
Then, $U \mid \Dcal_n\sim \U(0,1)$. Moreover, the half-open randomized interval
$$
\Ccal_{n,\,\tau,\, 1-\alpha}(x)
:= \bigl[\hat F_{n,\,\tau}^{-1}(\alpha/2 \mid x),\ \hat F_{n,\,\tau}^{-1}(1 - \alpha/2 \mid x)\bigr)
$$
satisfies
$$
\P_n\{\,V \in \Ccal_{n,\tau, 1-\alpha}(x)\,\} = 1-\alpha\,.
$$
\end{proposition}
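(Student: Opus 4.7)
Both assertions follow from the classical randomized probability integral transform, applied conditionally on~$\Dcal_n$. Under~$\P_n$, the CDF $F := \hat F_n(\cdot\mid x)$ is a deterministic (possibly discontinuous) function, and $V \sim F$ is independent of $\tau \sim \U(0,1)$. I would first establish uniformity of $U$ and then deduce the interval mass identity by the standard duality between a monotone function and its generalized inverse.

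\textbf{Uniformity of $U$.} For any $u \in (0,1)$, I would compute $\P_n(U \le u)$ by conditioning on $V$ and using the Lebesgue decomposition of~$F$. At continuity points the randomization is inactive and $U = F(V)$, so the ordinary PIT applies; at an atom $z_k$ of mass $p_k = F(z_k) - F(z_k^-) > 0$, we have $U = F(z_k^-) + \tau\,p_k$ conditionally on $V = z_k$, which is uniform on the subinterval $[F(z_k^-),\,F(z_k)]$. Since these subintervals, together with the image of the continuous part of~$F$, tile $[0,1]$ without overlap, summing the conditional contributions yields $\P_n(U \le u) = u$ for every $u \in (0,1)$; hence $U \mid \Dcal_n \sim \U(0,1)$.

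\textbf{Interval mass.} With $U \sim \U(0,1)$ in hand, I would rewrite the event $\{V \in \Ccal_{n,\tau,1-\alpha}(x)\}$ in terms of $U$ by invoking the standard duality between the nondecreasing map $z \mapsto \hat F_{n,\tau}(z)$ and its generalized inverse. For each $p \in (0,1)$, the events $\{V \ge \hat F_{n,\tau}^{-1}(p)\}$ and $\{\hat F_{n,\tau}(V) \ge p\}$ coincide up to $\P_n$-null sets, and similarly $\{V < \hat F_{n,\tau}^{-1}(p)\}$ matches $\{\hat F_{n,\tau}(V) < p\}$. Combining these equivalences at $p = \alpha/2$ and $p = 1 - \alpha/2$ gives $\{V \in \Ccal_{n,\tau,1-\alpha}(x)\} = \{U \in [\alpha/2,\,1-\alpha/2)\}$ up to a null set, and its probability is $1 - \alpha$ by the first step.

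\textbf{Main obstacle.} The delicate point is the duality at atoms of~$F$, where $\hat F_{n,\tau}$ fails to be right-continuous. The resolution is that the auxiliary $\tau$ interpolates each jump of~$F$ exactly, so that $(V,\tau) \mapsto U$ is a measure-preserving transformation from $F \otimes \U(0,1)$ onto $\U(0,1)$. Checking that the pre-image of $[\alpha/2,\,1-\alpha/2)$ under this map coincides with $\Ccal_{n,\tau,1-\alpha}(x)$ up to a $\P_n$-null set reduces to verifying that, at each atom crossing a boundary level $p \in \{\alpha/2, 1-\alpha/2\}$, the fraction of the jump assigned on either side of $\hat F_{n,\tau}^{-1}(p)$ matches the corresponding $U$-mass. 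This boundary bookkeeping is the only nontrivial step; everywhere else the duality is automatic.
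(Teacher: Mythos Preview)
Your plan matches the paper's proof in both structure and substance. For uniformity, the paper fixes $t\in[0,1]$, sets $z_t=F^{-1}(t)$, and decomposes $\{U\le t\}$ into $\{V<z_t\}\cup\bigl(\{V=z_t\}\cap\{\tau\le (t-F(z_t^-))/\Delta F(z_t)\}\bigr)$, then computes probabilities directly; your tiling argument over atoms and the continuous part is the same computation reorganized. For the interval mass, the paper invokes its generalized-inverse lemma (equation~\eqref{eq:lemma3}) to obtain the event identity $\{V\in\Ccal_{n,\tau,1-\alpha}(x)\}\Leftrightarrow\{\alpha/2\le U<1-\alpha/2\}$ outright, rather than arguing ``up to null sets''; the boundary obstacle you flag is exactly what that lemma is meant to absorb, so your plan and the paper's proof coincide.
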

\begin{proof}
  See Appendix~\ref{sec:proof-randomized-pit}.
\end{proof}

Note that the random variable $U=\hat F_{n,\, \tau}(V \mid x)$ in
Proposition~\ref{prop:randomized-pit} is a
\emph{probability integral transform (PIT)}, a standard tool for
calibration assessment. Its general role will be developed in
Section~\ref{sec:bg-pit}.

\subsection{\texorpdfstring{$\mu$-calibration: general principle}
  {mu-calibration: general principle}}
\label{sec:bg-mu-calibration}

In this work, calibration is assessed relative to the design
measure $\mu$ on $\XX$. After conditioning on the data $\Dcal_n$, we
consider a fresh input $X\sim\mu$ (independent of $\Dcal_n$) and the
corresponding value $f(X)$. A predictive family
$\{\hat F_n(\cdot\mid x):x\in\XX\}$ is said to be
\emph{$\mu$-calibrated} if its predictive statements match the distribution of
$f(X)$ under $X\sim\mu$. Two forms will be considered: \emph{$\mu$-coverage}, 
which concerns the frequency with which $f(X)$ falls inside prediction 
intervals $\Ccal_{n,1-\alpha}(X)$, and \emph{$\mu$-probabilistic calibration}, 
which requires that the PIT values 
$\hat F_n(f(X)\mid X)$ are uniform on~$[0,1]$ under $X\sim\mu$.

The focus on $\mu$-calibration is motivated by empirical testability.
In the classical Bayesian GP framework, the GP posterior predictive distribution at
any fixed $x$ admits a density that is positive on every open interval
(away from observation points where it degenerates in the noise-free
case), so a single realization $f(x)$ cannot empirically falsify
it. By contrast, $\mu$-coverage and $\mu$-probabilistic calibration
are \emph{spatial} properties: they concern the distribution of $f(X)$
under $X\sim\mu$ and can be estimated and tested on an independent
test design, then rejected if they fail. This perspective is
consonant with Matheron’s theory of \emph{regionalized variables},
where spatial, design-dependent properties are regarded as estimable
and models are assessed through observable consequences under the
sampling protocol \citep{matheron89:_estim_choos}.

\subsection{\texorpdfstring{$\mu$-coverage and integrated error}
  {mu-coverage and integrated error}}
\label{sec:bg-coverage}

As introduced above, $\mu$-coverage assesses interval calibration after conditioning
on $\Dcal_n$ and marginalizing over $X\sim\mu$: for a family of centered
prediction intervals $\Ccal_{n,1-\alpha}(x)$, constructed from
predictive distributions $\hat F_n(\cdot \mid x)$, define its $\mu$-coverage as
\begin{align}
  \label{eq:mu-coverage}
  \delta_{\alpha}(\hat F_n;\mu)
  &= \P_n \bigl\{\,f(X)\in \Ccal_{n,1-\alpha}(X)\,\bigr\}\,,   \qquad X\sim\mu\,,  \\[4pt]
  & = \mu \bigl(\{x:\ f(x) \in \Ccal_{n,\, 1-\alpha}(x) \}\bigr)\,. \nonumber
\end{align}
Calibration at level $1-\alpha$ means $\delta_{\alpha}(\hat F_n;\mu)=1-\alpha$.

Given an independent test design $\{X_j^\star\}_{j=1}^{m}$ with
$X_j^\star \sim \mu$, the natural Monte Carlo estimator of
$\delta_{\alpha}(\hat F_n;\mu)$ is
$$
\hat\delta_{\alpha,\, m} (\hat F_n) =
\frac{1}{m}\sum_{j=1}^{m} 
\one\{f(X_j^\star) \in \Ccal_{n,\, 1-\alpha}(X_j^\star)\}.
$$

\begin{remark}
When no separate test set is available, estimating
$\delta_{\alpha}(\hat F_n;\mu)$ on the design $\Dcal_n$ is not
informative. In GP interpolation, $\hat F_n(\cdot\mid X_i)$ is a Dirac
mass at $Z_i=f(X_i)$, so for all $\alpha\in(0,1)$
$$
\Ccal_{n,1-\alpha}(X_i)
=\bigl[\hat F_n^{-1}(\alpha/2\mid X_i),\,\hat F_n^{-1}(1-\alpha/2\mid X_i)\bigr]
=\{Z_i\},
$$
and hence
$$
\one\{f(X_i) \in \Ccal_{n,1-\alpha}(X_i)\}=1.
$$
Empirical coverage computed on $\Dcal_n$ is therefore identically equal
to $1$ for every $\alpha$ and carries no information about
$\delta_{\alpha}(\hat F_n;\mu)$. Cross-validation (e.g., leave-one-out)
can avoid this degeneracy by removing each $(X_i,Z_i)$ when assessing
its inclusion, but introduces additional biases due to data reuse and
dependence.
\end{remark}

To summarize deviations across all confidence levels,
\citet{marrel2024probabilistic} introduced the
\emph{Integrated Absolute Error} (IAE),
$$
J_{\mathrm{IAE}, \mu}(\hat F_n) = \int_0^1 \bigl|\delta_{\alpha}(\hat F_n;\mu) - (1-\alpha)\bigr|\,d\alpha.
$$
The empirical IAE $\hat J_{\mathrm{IAE},\, m}(\hat F_n)$ is obtained by substituting
$\hat\delta_{\alpha,\,m}(\hat F_n)$ for $\delta_{\alpha}(\hat F_n;\mu)$
in the definition above.

\subsection{\texorpdfstring{Probability integral transform and $\mu$-probabilistic calibration}
{Probability integral transform and mu-probabilistic calibration}}
\label{sec:bg-pit}

\paragraph{Probability integral transform (PIT)}
Beyond interval coverage, calibration can also be assessed through
PIT, a classical notion in
probabilistic forecasting (see Appendix~\ref{app:forecasting-primer}). In its
simplest form, given a continuous predictive CDF $\hat F$ for a random variable 
$Z$, the PIT is
$$
  U_{\hat F}^Z = \hat F(Z).
$$
If $\hat F$ coincides with the true distribution of $Z$, then
$U_{\hat F}^Z \sim \U(0,1)$. This property underlies the
standard use of PIT values in forecast evaluation: when multiple
forecast--observation pairs $(\hat F_i,Z_i)$ are available, the
empirical distribution of $\{\hat F_i(Z_i)\}$ is compared to the
uniform distribution, typically through histograms or empirical CDF
plots \citep{Dawid1984, Gneiting:2023}. Departures from uniformity
reveal systematic miscalibration such as over- or underdispersion (see
Figure~\ref{fig:pit}).

\begin{figure}[h]
  \includegraphics[width=\textwidth]{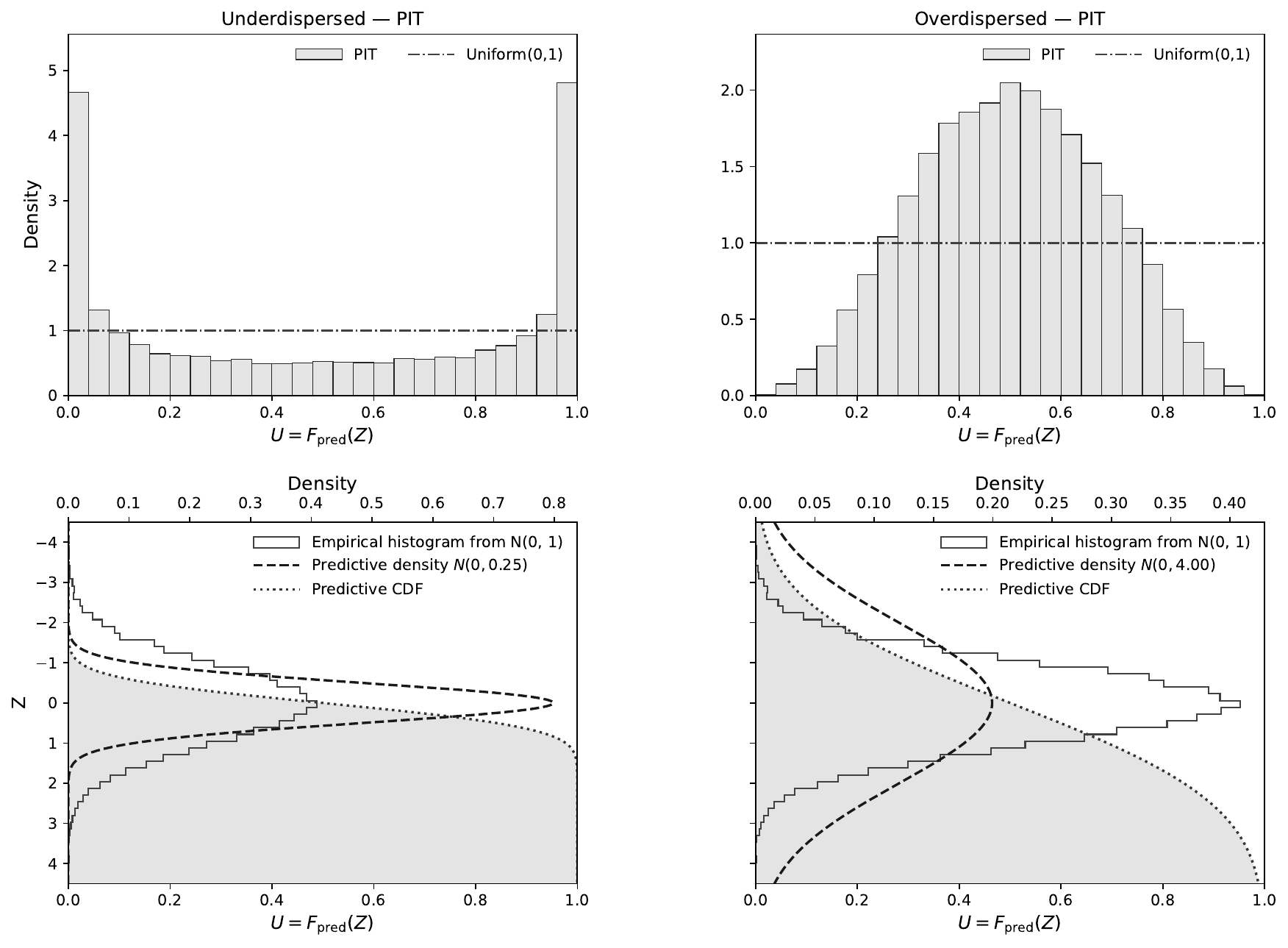}
  \caption{
    \emph{First row}: PIT histograms with the uniform density (dashed line)
    as reference. A \emph{$\cup$-shaped} PIT (mass near $0$ and $1$)
    indicates \emph{underdispersion}; predictive intervals are too
    narrow and observations fall outside too often (optimistic
    coverage). A \emph{$\cap$-shaped} PIT (mass near $1/2$) indicates
    \emph{overdispersion}; intervals are too wide and observations fall
    inside too often (pessimistic coverage). \emph{Second row (rotated
    view):} vertical axis is $z$. Bottom axis shows the predictive CDF
    $u=F_{\mathrm{pred}}(z)$ (shaded area), while top axis shows the
    density scale: horizontal empirical histogram (outline) and
    predictive pdf (dashed). A horizontal slice at a given $z$ maps to
    a CDF value $u$ on the bottom axis, which is precisely the PIT
    value contributing to the histogram in the first row. The empirical
    sample is drawn from the standard normal distribution
    $\mathcal N(0,1)$. Predictive distributions are normal with the same mean
    but different scales: $\mathcal N(0,0.5^{2})$ (underdispersed) and
    $\mathcal N(0,2^{2})$ (overdispersed).}
  \label{fig:pit}
\end{figure}

For discontinuous predictive CDFs, one may use the \emph{randomized PIT}
$$
  U_{\hat F}^{Z,\tau}
  := \hat F(Z^-) + \tau\bigl(\hat F(Z)-\hat F(Z^-)\bigr),
$$
with $\tau \sim \U(0,1)$ independent of $(\hat F,Z)$, which preserves the
uniformity property (see Appendix~\ref{app:forecasting-primer}).

\paragraph{$\mu$-probabilistic calibration}
In our setting, after conditioning on the observed data $\Dcal_n$, we
do not have multiple forecast--observation pairs but instead a single
fitted predictive family $\hat F_n(\cdot\mid x)$ indexed by~$x$. To
apply the PIT idea, we  define a (possibly
randomized) $\mu$-PIT as
\begin{align}
\label{eq:randomized-PIT}
U_{\hat F_n}^{f,\,\mu,\,\tau}
  &:= \hat F_{n,\,\tau}(f(X)\mid X) \\
  & = \hat F_n(f(X)^-\mid X)
   + \tau\bigl(\hat F_n(f(X)\mid X)-\hat F_n(f(X)^-\mid X)\bigr),  \nonumber
\end{align}
with $X \sim \mu$, and where $\tau\sim\U(0,1)$ is independent
of $(\hat F_n,\, X,\, \Dcal_n)$.  Randomization only takes effect when
$\hat F_n(\cdot\mid x)$ has jumps; in the continuous and strictly
increasing case (for $\mu$-a.e.\ $x$), $\mu$-PIT reduces to
$$
U_{\hat F_n}^{f,\,\mu,\,\tau} = U_{\hat F_n}^{f,\,\mu}
= \hat F_n(f(X)\mid X), \qquad X\sim\mu.
$$
We will say that $\hat F_n$ is \emph{$\mu$-probabilistically calibrated} if
$$
U_{\hat F_n}^{f,\,\mu,\,\tau} \sim \U(0,1).
$$

\begin{remark}
\label{rem:pit-under-mu}
Define the distribution function
\begin{equation}
\label{eq:G-mu}
G_\mu(u) := \P_n\bigl(U_{\hat F_n}^{f,\,\mu,\,\tau} \le u\bigr),
\qquad u\in[0,1].  
\end{equation}
Thus,
$\mu$-probabilistic calibration is equivalent to
$$
G_\mu(u) = u,
\qquad \forall u\in[0,1],
$$
In the continuous and strictly monotone case (for $\mu$-a.e.\ $x$),
randomization does not take effect, so that
\begin{align*}
  G_\mu(u)
  &= \P_n \left(U_{\hat F_n}^{f,\,\mu} \le u\right)
   = \E_n \left[\one\{\hat F_n(f(X)\mid X)\le u\}\right] \\[4pt]
  &= \int_{\XX} \one\{\hat F_n(f(x)\mid x)\le u\}\,{\rm d}\mu(x) \\[4pt]
  &= \mu \left(\{x:\ f(x)\le \hat F_n^{-1}(u\mid x)\}\right),
\end{align*}
where $\E_n$ denotes expectation under $\P_n$.
Thus, we obtain the following spatial characterization of
$\mu$-probabilistic calibration: for all $u\in (0,\,1)$,
$$
G_\mu(u) = \mu\bigl(\{x:\ f(x)\le \hat F_n^{-1}(u\mid x)\}\bigr) = u,
\qquad u\in(0,1).
$$

(Without strict monotonicity of $\hat F_n(\cdot \mid x)$, the following bounds hold for all $u\in(0,1)$:
$$
\mu \bigl(\{x:\ f(x)< \hat F_n^{-1}(u\mid x)\}\bigr)\ \le\
G_\mu(u)\ \le\
\mu \bigl(\{x:\ f(x)\le \hat F_n^{-1}(u\mid x)\}\bigr).
$$
See Appendix~\ref{sec:proof-rem-pit-under-mu} for the detailed justification.)
\end{remark}

\paragraph{Example (constant $f$ and non-degenerate $\mu$-probabilistically calibrated $\hat F_n$)}
Assume $f(x)\equiv c$. Let
$\XX=[0,1]$, $\mu=\U [0,1]$, and define
$$
  \hat F_n(z\mid x)=\Phi\big(z-\big(c+\Phi^{-1}(x)\big)\big),
$$
where $\Phi$ is the standard normal CDF. Then, with $X\sim\mu$,
$$
  U_{\hat F_n}^{f,\,\mu}
  \;=\; \hat F_n(c\mid X)
  \;=\; \Phi\big(c-(c+\Phi^{-1}(X))\big)
  \;=\; \Phi\big(-\Phi^{-1}(X)\big)
  \;=\; 1-X \quad\text{a.s.}
$$
Hence $U_{\hat F_n}^{f,\,\mu}\sim \U(0,1)$ and $\hat F_n$ is
$\mu$-probabilistically calibrated, while remaining non-degenerate
(Gaussian predictive law with unit variance at every $x$).
Note, however, that this predictive family is clearly misspecified: although
$f$ is constant, the predictive mean $c + \Phi^{-1}(x)$ varies with $x$
and ranges over $\R$, so $\hat F_n(\cdot\mid x)$ does not represent a
reasonable belief about $f$.

This example highlights a limitation: for a given $f$, many
``incorrect'' predictive families can still be $\mu$-probabilistically
calibrated. However, $\mu$-probabilistic
calibration alone does not penalize overly diffuse or uninformative
predictive distributions. If meaningful uncertainty quantification is
sought, calibration is necessary but not sufficient: one must also
assess \emph{sharpness}---the concentration of predictive
distributions---using proper scoring rules (see
Section~\ref{sec:scoring-rules}), or enforce structural constraints on
the predictive family, as in the \bcrgp method
(see Section~\ref{sec:calibr-using-gener}).

\begin{remark}
\label{rem:mu-coverage-via-PIT}
As already mentioned in Section~\ref{sec:bg-predictive} and
Proposition~\ref{prop:randomized-pit}, $\mu$-coverage and the $\mu$-PIT
are linked as follows (see Appendix~\ref{sec:proof-mu-coverage-via-PIT}
for the detailed derivations).
\begin{enumerate}[label=(\roman*)]
\item If, for $\mu$-a.e.\ $x$, the map
$z\mapsto \hat F_n(z\mid x)$ is continuous and strictly increasing, then
$$
\delta_{\alpha}(\hat F_n;\mu) 
= \P_n\bigl\{\, a \le U_{\hat F_n}^{f,\mu} \le b \,\bigr\},
\qquad \text{with } a=\alpha/2,\ b=1-\alpha/2.
$$

\item Without continuity, one always has the bounds
$$
\P_n\bigl\{\, a \le U_{\hat F_n}^{f,\mu} < b \,\bigr\}
\le \delta_{\alpha}(\hat F_n;\mu)
$$
and
$$
\delta_{\alpha}(\hat F_n;\mu)
\le
\P_n\bigl\{\, a \le U_{\hat F_n}^{f,\mu} \le b \,\bigr\}
+
\mu\bigl(\{x:\ f(x) = q_b,\ \hat F_n( q_b \mid x) > b\}\bigr),
$$
where $q_{b} = \hat F_n^{-1}(b\mid x)$.
(The extra term captures boundary mass when $\hat F_n(\cdot\mid x)$
jumps at the upper endpoint.)

\item When $\hat F_n(\cdot\mid x)$ has jumps, boundary randomization
restores exact equivalence: for every fixed $\tau$,
$$
\mu\bigl(\{x:\ f(x)\in [\,\hat F_{n,\tau}^{-1}(a\mid x),\,\hat
F_{n,\tau}^{-1}(b\mid x)\,)\}\bigr)
= \P_n\bigl\{\, a \le U_{\hat F_n}^{f,\mu,\tau} < b \mid \tau \bigr\}.
$$
\end{enumerate}
Thus, $\mu$-coverage at level $1-\alpha$ can  be expressed as a
PIT inclusion probability, with boundary randomization ensuring exact
equality in the discontinuous case.
\end{remark}

\subsection{\texorpdfstring{Metrics for $\mu$-probabilistic calibration}
  {Metrics for mu-probabilistic calibration}}
\label{sec:mu-pit-metrics}

To evaluate $\mu$-probabilistic calibration we use PIT values on an independent
test design $\{X_j^\star\}_{j=1}^{m}$ with $X_j^\star\sim\mu$.
Given predictive CDFs $\hat F_n(\cdot \mid x)$,  define, for each $j$,
$$
a_j := \hat F_n(f(X_j^\star)\mid X_j^\star),
\qquad
a_j^- := \hat F_n(f(X_j^\star)^-\mid X_j^\star).
$$
The PIT values are then
$$
U_j =
\begin{cases}
a_j, & \text{if $\hat F_n(\cdot\mid X_j^\star)$ is continuous},\\[6pt]
a_j^- + \tau_j (a_j - a_j^-), & \text{otherwise},
\end{cases}
$$
with $\tau_j \sim \U(0,1)$ i.i.d., independent of $(\Dcal_n,\{X_j^\star\})$.

Several metrics can be used to quantify deviations of $\{U_j\}$ from
uniformity.

\paragraph{Variance-based metric}
If $U \sim \U(0,1)$ then $\Var (U)=1/12$. A natural measure of
departure from uniformity is
$$
  J_{\mathrm{Var\text{-}PIT},\,m}(\hat F_n)
  = \frac{1}{m}\sum_{j=1}^{m}(U_j-\tfrac12)^2 - \tfrac{1}{12}.
$$
When the PIT is uniform, $J_{\mathrm{Var\text{-}PIT},\,m}(\hat F_n)\to 0$ as
$m\to\infty$. In particular, for approximately symmetric PIT
distributions, a positive deviation
($J_{\mathrm{Var\text{-}PIT},\,m} > 0$) corresponds to a $\cup$-shaped PIT
histogram: mass accumulates near $0$ and $1$, which means predictive
intervals are too narrow and observed values fall outside too often
(optimistic coverage). A negative deviation
($J_{\mathrm{Var\text{-}PIT},\,m} < 0$) corresponds to a $\cap$-shaped
histogram: mass concentrates near $1/2$, which means predictive intervals
are too wide and observed values fall inside too often (pessimistic
coverage).

\paragraph{KS--PIT metric}
Another possibility is to compare the full empirical distribution of PIT values
to the uniform law using the Kolmogorov--Smirnov (KS) distance. Applied to PIT
values, this yields the \emph{KS--PIT metric} \citep{Diebold1998}:
$$
  J_{\mathrm{KS\text{-}PIT},\,m}(\hat F_n)
  = \sup_{u\in[0,1]}
  \biggl|\frac{1}{m}\sum_{j=1}^{m}\one\{U_j\le u\}-u\biggr|.
$$
Smaller values indicate a PIT distribution closer to uniformity, hence better
$\mu$-probabilistic calibration.

It is natural to introduce the corresponding population quantity.
As defined in Remark~\ref{rem:pit-under-mu} (Equation~\eqref{eq:G-mu}),
$G_\mu(u)$ denotes the distribution function of the $\mu$-PIT values.
The population KS--PIT distance is then
$$
  J_{\mathrm{KS\text{-}PIT},\, \mu}(\hat F_n)
  \;=\; \sup_{u\in[0,1]} \bigl|G_\mu(u)-u\bigr|.
$$
\begin{proposition}
\label{prop:ks_pit_mu_consistency}
Condition on $\Dcal_n$. If $X_j^\star\stackrel{\mathrm{i.i.d.}}{\sim}\mu$,
$\tau_j\stackrel{\mathrm{i.i.d.}}{\sim}\U(0,1)$, indepen\-dent of $(\Dcal_n,\{X_j^\star\})$,
then
$$
  J_{\mathrm{KS\text{-}PIT},\,m}(\hat F_n)
  \xrightarrow[m\to\infty]{\mathrm{a.s.}}
  J_{\mathrm{KS\text{-}PIT},\,\mu}(\hat F_n).
$$
\end{proposition}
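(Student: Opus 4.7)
The plan is to reduce the claim to the classical Glivenko--Cantelli theorem applied to the (possibly randomized) PIT values.

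First, I would argue that, conditional on $\Dcal_n$, the sequence $\{U_j\}_{j\ge 1}$ is i.i.d.\ with common distribution function $G_\mu$ defined in \eqref{eq:G-mu}. By assumption, $(X_j^\star,\tau_j)_{j\ge 1}$ are i.i.d.\ with law $\mu\otimes\U(0,1)$ and independent of $\Dcal_n$. Since $\hat F_n(\cdot\mid\cdot)$ is a fixed (measurable) predictive family once $\Dcal_n$ is fixed, each
$$
U_j = \hat F_n(f(X_j^\star)^-\mid X_j^\star) + \tau_j\bigl(\hat F_n(f(X_j^\star)\mid X_j^\star)-\hat F_n(f(X_j^\star)^-\mid X_j^\star)\bigr)
$$
is a measurable function of $(X_j^\star,\tau_j)$, so the $U_j$ are i.i.d.\ under $\P_n$. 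Their common CDF is $G_\mu$ by the very definition~\eqref{eq:G-mu}.

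Next, I would apply the Glivenko--Cantelli theorem to the empirical CDF
$$
\hat G_m(u) := \frac{1}{m}\sum_{j=1}^{m}\one\{U_j\le u\},\qquad u\in[0,1],
$$
which yields, conditional on $\Dcal_n$,
$$
\sup_{u\in[0,1]}\bigl|\hat G_m(u)-G_\mu(u)\bigr|\xrightarrow[m\to\infty]{\mathrm{a.s.}} 0.
$$

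Finally, I would combine this uniform convergence with the reverse triangle inequality for the supremum norm. Writing $h(u):=u$ and using $\bigl|\|\hat G_m-h\|_\infty-\|G_\mu-h\|_\infty\bigr|\le\|\hat G_m-G_\mu\|_\infty$, one obtains
$$
\bigl|J_{\mathrm{KS\text{-}PIT},\,m}(\hat F_n)-J_{\mathrm{KS\text{-}PIT},\,\mu}(\hat F_n)\bigr|
\le \sup_{u\in[0,1]}\bigl|\hat G_m(u)-G_\mu(u)\bigr|\xrightarrow[m\to\infty]{\mathrm{a.s.}}0,
$$
which gives the announced convergence. There is no real obstacle here; the only subtlety worth mentioning is measurability of $(X_j^\star,\tau_j)\mapsto U_j$, which is immediate from the fact that $\hat F_n$ and $f$ are fixed measurable maps once we condition on $\Dcal_n$, so that Glivenko--Cantelli applies without any continuity assumption on $\hat F_n(\cdot\mid x)$.
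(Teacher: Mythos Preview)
Your proof is correct and follows essentially the same approach as the paper: argue that the $U_j$ are i.i.d.\ with law $G_\mu$ conditional on $\Dcal_n$, apply Glivenko--Cantelli to obtain uniform convergence of $\hat G_m$ to $G_\mu$, and conclude via the reverse triangle inequality for the sup norm. The paper's proof is slightly terser but structurally identical; your added remark on measurability is a harmless elaboration.
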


\begin{proof}
  See Appendix~\ref{sec:proof_ks_pit_mu_consistency}.
\end{proof}

\begin{remark}[On reuse of observation points]
As with coverage, evaluating KS--PIT on observations using cross-validation can be misleading.
The target is post-data, design-marginal uniformity under $X\sim\mu$, so an
independent test design (or a proper sample split / cross-fitting) should be
used.
\end{remark}

\paragraph{Relation to the Integrated Absolute Error (IAE)}
The Integrated Absolute Error (IAE), recalled in
Section~\ref{sec:bg-coverage}, quantifies deviations between nominal and
empirical coverage of central prediction intervals. The KS--PIT metric,
by contrast, evaluates the full distribution of PIT values against the
uniform law. The two notions are linked:

\begin{proposition}[IAE bounded by KS--PIT]
\label{prop:ks_pit_vs_iae_general}
For both empirical and population measures, the following inequalities hold:
$$
J_{\mathrm{IAE}, m}(\hat F_n) \;\le\; 2\,J_{\mathrm{KS\text{-}PIT}, m}(\hat F_n),
\qquad
J_{\mathrm{IAE}, \mu}(\hat F_n) \;\le\; 2\,J_{\mathrm{KS\text{-}PIT}, \mu}(\hat F_n).
$$
\end{proposition}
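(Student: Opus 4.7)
The plan is to bound the integrand of $J_{\mathrm{IAE},\mu}$ pointwise, showing that for every $\alpha\in(0,1)$,
$$
|\delta_\alpha(\hat F_n;\mu) - (1-\alpha)| \le 2\,J_{\mathrm{KS\text{-}PIT},\mu}(\hat F_n),
$$
and then integrating over $\alpha$. I would start with the continuous and strictly monotone case of Remark~\ref{rem:mu-coverage-via-PIT}(i), which gives the identity $\delta_\alpha(\hat F_n;\mu) = G_\mu(1-\alpha/2) - G_\mu(\alpha/2)$. Writing $1-\alpha = (1-\alpha/2) - \alpha/2$ and subtracting term by term yields
$$
\delta_\alpha - (1-\alpha) = \bigl[G_\mu(1-\alpha/2) - (1-\alpha/2)\bigr] - \bigl[G_\mu(\alpha/2) - \alpha/2\bigr].
$$
The triangle inequality combined with the definition $J_{\mathrm{KS\text{-}PIT},\mu}(\hat F_n) = \sup_u|G_\mu(u)-u|$ bounds each bracket in absolute value by $J_{\mathrm{KS\text{-}PIT},\mu}(\hat F_n)$, and integrating the resulting $\alpha$-independent bound over $(0,1)$ delivers the population claim.

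For the general case where $\hat F_n(\cdot\mid x)$ may have atoms, I would replace the identity of Remark~\ref{rem:mu-coverage-via-PIT}(i) by the sandwich bounds in~(ii), or equivalently by the exact randomized identity in~(iii). A key auxiliary fact is that the KS distance automatically controls left-limits: since $|G_\mu(v)-v| \le J_{\mathrm{KS\text{-}PIT},\mu}$ for every $v$, passing to the limit $v \uparrow u$ yields $|G_\mu(u^-) - u| \le J_{\mathrm{KS\text{-}PIT},\mu}$. The boundary-mass term appearing in the upper sandwich bound corresponds to an atom of the PIT CDF at the endpoint $b = 1-\alpha/2$, whose size is controlled by $G_\mu(b) - G_\mu(b^-) \le |G_\mu(b)-b| + |G_\mu(b^-)-b| \le 2J_{\mathrm{KS\text{-}PIT},\mu}$, and is thus absorbed into the same budget.

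The empirical version is structurally identical: substitute $\hat\delta_{\alpha,m}$, the empirical PIT CDF $\hat G_m(u) = m^{-1}\sum_{j=1}^m \one\{U_j \le u\}$, and $J_{\mathrm{KS\text{-}PIT},m}$ for their population counterparts. In the continuous case one has $\hat\delta_{\alpha,m} = \hat G_m(1-\alpha/2) - \hat G_m((\alpha/2)^-)$ by the indicator identity $\one\{f(X_j^\star) \in \Ccal_{n,1-\alpha}(X_j^\star)\} = \one\{\alpha/2 \le U_j \le 1-\alpha/2\}$, and analogous sandwich bounds cover the discontinuous case via the randomization of the empirical PITs $U_j$ described in Section~\ref{sec:mu-pit-metrics}. \textbf{Main obstacle:} the discontinuous case, where the clean identity of Remark~\ref{rem:mu-coverage-via-PIT}(i) breaks down. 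The observation that jumps of $G_\mu$ (resp.\ of $\hat G_m$) are themselves bounded by $2J_{\mathrm{KS\text{-}PIT}}$ is what makes the bound go through uniformly in $\alpha$ in both the population and empirical settings.
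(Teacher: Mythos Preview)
Your continuous-case argument is correct and is exactly what the paper does: write $\delta_\alpha = G(b) - G(a)$ with $a=\alpha/2$, $b=1-\alpha/2$, subtract $(1-\alpha)=b-a$ termwise, bound each bracket by $\sup_u|G(u)-u|$, and integrate the $\alpha$-free bound. The empirical version is identical with $\hat G_m$ in place of $G_\mu$.

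The discontinuous case, however, has a real gap, and it is not one you can patch along the lines you propose. Your claim that the boundary-mass term of Remark~\ref{rem:mu-coverage-via-PIT}(ii) ``corresponds to an atom of the PIT CDF at $b$'' is incorrect: that term is $\mu\bigl(\{x:\ f(x)=q_b(x),\ \hat F_n(q_b(x)\mid x)>b\}\bigr)$, a set on which the \emph{non-randomized} PIT strictly exceeds $b$, whereas $G_\mu$ is the CDF of the \emph{randomized} PIT, which is smeared uniformly over the jump and need not place any mass at $b$. Route~(iii) does not help either, since it concerns the randomized interval $[\hat F_{n,\tau}^{-1}(a),\hat F_{n,\tau}^{-1}(b))$, not the non-randomized closed interval that defines $\delta_\alpha$. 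In fact the inequality fails outright when the predictive CDFs are atomic: take $\hat F_n(\cdot\mid x)=\one_{[0,\infty)}$ for all $x$ and $f\equiv 0$. Then $\Ccal_{n,1-\alpha}(x)=\{0\}$ and $\delta_\alpha\equiv 1$, so $J_{\mathrm{IAE},\mu}=\int_0^1\alpha\,d\alpha=\tfrac12$; but the randomized PIT is $U^\tau=\tau\sim\U(0,1)$, hence $G_\mu(u)=u$ and $J_{\mathrm{KS\text{-}PIT},\mu}=0$. The paper's own one-line treatment of the discontinuous case (the asserted event identity $\{Z_i\in\Ccal_{i,1-\alpha}\}=\{\alpha/2\le U_i<1-\alpha/2\}$ ``almost surely'') is likewise not literally correct, as this same example shows. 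The result should be read under the standing hypothesis that $z\mapsto\hat F_n(z\mid x)$ is continuous for $\mu$-a.e.\ $x$---which holds for the Gaussian posterior away from the design points---and in that regime your proof and the paper's coincide.
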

\begin{proof}
See Appendix~\ref{sec:proof_ks_pit_vs_iae_general}.  
\end{proof}
Thus, KS--PIT provides a uniformity-based calibration criterion that dominates
IAE: small KS--PIT implies small IAE. In applications, it is useful to
report both metrics: IAE to summarize coverage accuracy across nominal
levels, and KS--PIT as a global indicator of probabilistic calibration.

\subsection{Scoring rules}
\label{sec:scoring-rules}

As noted above, calibration alone does not guarantee useful
uncertainty quantification: overly diffuse forecasts can be perfectly
calibrated yet uninformative. To assess both calibration and
concentration (\emph{sharpness}), we use \emph{proper scoring rules}
\citep{gneiting07:_stric_proper_scorin_rules_predic_estim}, which
assign a numerical score $S(\hat F,z)$ to a predictive CDF $\hat F$
and outcome $z\in\R$. A scoring rule is \emph{strictly proper} if, for
the true distribution $F$,
$$
  \E_{Z\sim F}[S(F,Z)] \;\le\; \E_{Z\sim F}[S(\hat F,Z)],
$$
with equality if and only if $\hat F=F$.

Two standard examples are the (negative) log predictive density (log score)
and the continuous ranked probability score (CRPS):
$$
  S_{\mathrm{NLPD}}(\hat F, z) = -\log \hat f(z),
  \qquad
  S_{\mathrm{CRPS}}(\hat F, z)
  = \int_{-\infty}^{\infty} \bigl(\hat F(u) - \one\{u \ge z\}\bigr)^2 \,\mathrm{d}u,
$$
where $\hat f$ is the density corresponding to $\hat F$ (when it exists).
The CRPS evaluates both calibration and sharpness by integrating the squared
difference between the predictive and empirical CDFs across all thresholds. The CRPS also admits the expectation form
$$
S_{\mathrm{CRPS}}(\hat F, z) = \E \bigl[\,\lvert Z - z\rvert\,\bigr] -
\tfrac{1}{2}\,\E\bigl[\,\lvert Z - Z'\rvert\,\bigr],
$$
where $Z$ and $Z'$ are independent and distributed as $\hat F$.

To reduce the scale sensitivity of CRPS, \citet{Bolin:2023} introduced the
\emph{scaled continuous ranked probability score} (SCRPS).
For a random variable $Z \sim \hat F$ and an independent copy $Z'$, it is defined as
$$
  S_{\mathrm{SCRPS}}(\hat F, z)
  = -\frac{\E\lvert Z - z\rvert}{\E\lvert Z - Z'\rvert}
    - \tfrac{1}{2} \log \bigl(\E\lvert Z - Z'\rvert \bigr).
$$
The SCRPS normalizes the CRPS by the expected dispersion $\E|Z - Z'|$,
making it less sensitive to overall scaling and more suitable for comparing
predictive distributions of different spread.
Lower values correspond to sharper and better-calibrated predictions.
Analytical expressions for computing SCRPS are given in
Appendix~\ref{ap:scrps_formulas}.

Proper scoring rules reward forecasts that are both calibrated (agreement
between predicted probabilities and observed frequencies) and sharp
(concentrated predictive distributions).
Sharpness is meaningful only under calibration; otherwise, a forecast can be
sharp but systematically biased.
CRPS-type scores (including SCRPS) provide a single global measure of
predictive performance and admit decompositions that separate calibration and
sharpness contributions \citep{Arnold:2024}.
By contrast, PIT-based diagnostics (KS--PIT, IAE) focus solely on calibration.
The two perspectives are complementary: scoring rules quantify overall
predictive skill, while PIT-based tools reveal calibration errors more
directly. When tail behavior is of primary interest, tail-sensitive scoring
rules such as those proposed by
\citet{allen23:_weigh_verif_tools_evaluat_univar, allen25:_tail}
offer additional, focused diagnostics.

Given a true distribution~$F$, a forecast $\hat F$, and a scoring rule $S$, 
predictive performance is quantified by the expected score
$$
J_{S}(\hat F) = \E_{Z \sim F}\bigl[S(\hat F, Z)\bigr].
$$
In our interpolation setting, forecasts are given by the family of
predictive CDFs $\hat F_n(\cdot \mid x)$ indexed by $x \in \XX$, and
observation points are taken with respect to the design measure
$\mu$. The corresponding design-marginal score is
$$
J_{S,\mu}(\hat F_n) 
  = \E_{X \sim \mu}\Bigl[\,S\bigl(\hat F_n(\cdot \mid X),\, f(X)\bigr)\,\Bigr],
$$
that is, the score averaged over the input space according to $\mu$.

In practice, with an independent test design 
$X_1^\star,\dots,X_m^\star \stackrel{\text{i.i.d.}}{\sim} \mu$, 
the empirical estimator may be written as
$$
J_{S,\,m}(\hat F_n)  = \frac{1}{m}\sum_{j=1}^m 
S\bigl(\hat F_n(\cdot \mid X_j^\star),\, f(X_j^\star)\bigr)\,.
$$
Such empirical scores are
a standard tool for comparing predictive models and for guiding GP
hyperparameter selection \citep{petit_parameter_2023}.

\section{Conformal predictive systems for Gaussian processes}
\label{sec:cps}

\subsection{Overview of conformal prediction}
\label{sec:overv-conf-pred}

Conformal prediction (CP), introduced by \citet{vovk_Gammerman_2005},
provides a distribution-free method for constructing prediction sets
whose finite-sample coverage is guaranteed, on average over the
randomness of the data, when the observations are exchangeable, which
includes the common case of independent draws.  Many variants exist
(split/inductive CP, jackknife+, CV+, etc.); see
\citet{angelopoulos2025theoreticalfoundationsconformalprediction} for
a recent and comprehensive survey.  Here we recall the randomized full
conformal prediction method
\citep[Chap.~9]{angelopoulos2025theoreticalfoundationsconformalprediction},
implemented with leave-one-out (LOO) conformal scores, which serves as
the theoretical foundation for the \cpsgp method.

Given i.i.d.~data $\Dcal_n=\{(X_i,Z_i)\}_{i=1}^n$ with $X_i\in\XX$ and
$Z_i\in\R$, let $s(x;\Dcal_n)$ denote a regression function providing a point
prediction of $Z$ at $X=x$. In this section, we do not restrict to the interpolation setting:
the responses may remain intrinsically random even conditionally on the
covariates. 
Conformal prediction constructs a $(1-\alpha)$ prediction set for a new $Z_{n+1}$
at covariate $X_{n+1}$, assuming that $(X_{n+1}, Z_{n+1})$ is an independent draw
from the same distribution as the data~$\Dcal_n$.
It relies on a \emph{conformal score} $R(x,z;\Dcal_n)\in\R$, which quantifies the
agreement between a candidate pair $(x,z)$ and the dataset~$\Dcal_n$.
The score is required to be permutation-invariant in its dataset argument:
for any permutation $\sigma$ of $\{1,\ldots,n\}$,
$$
R(x,z;\sigma\Dcal_n)=R(x,z;\Dcal_n).
$$
A common choice is the residual-based score
\begin{equation}
  \label{eq:residual-based-score}
R(x,z;\Dcal_n) = \lvert z - s(x;\Dcal_n)\rvert\,.
\end{equation}

\begin{remark*}[Terminology]
  In the CP literature, a \emph{conformity score} assigns larger
  values to better agreement, while a \emph{nonconformity score}
  (often called a \emph{conformal score}) assigns smaller values to
  better agreement. We use the term conformal score in this text.
\end{remark*}

For a candidate value $z$ at $X_{n+1}$, consider the augmented dataset
$$
\Dcal_{n+1}^z = \{(X_1,Z_1),\ldots,(X_n,Z_n),(X_{n+1},z)\},
$$
and compute the LOO scores
$$
R_i^z = R\bigl( X_i,Z_i;\Dcal_{n+1}^z\setminus\{(X_i,Z_i)\}\bigr),
\quad i=1,\ldots,n+1.
$$

Then, for each $z$, we compare the score $R_{n+1}^z$ to the
$n$ other scores $(R_i^z)_{i=1}^n$. More precisely, introducing an
independent random variable $\tau\sim \U(0, 1)$, define the map
\begin{align}
\pi(z) 
    &= \frac{1}{n+1}\sum_{i=1}^{n+1}\one\{R_i^z < R_{n+1}^z\}
      + \frac{\tau}{n+1}\sum_{i=1}^{n+1}\one\{R_i^z = R_{n+1}^z\}  \label{cp-pi} \\
    &= \frac{1}{n+1}\sum_{i=1}^{n}\one\{R_i^z < R_{n+1}^z\}
    + \frac{\tau}{n+1}\Bigl(1+\sum_{i=1}^{n}\one\{R_i^z = R_{n+1}^z\}\Bigr). \nonumber 
\end{align}
The quantity $\pi(z)$ takes its values in $\left[ 0, 1 \right]$ and
represents the normalized (randomized) rank of the test score $R_{n+1}^z$ in the
augmented dataset: it equals the proportion of scores strictly smaller
than $R_{n+1}^z$, with ties accounted for using $\tau$. While $\pi$ is
not a CDF in the strict sense, it can be interpreted as a predictive
CDF proxy that maps each $z$ to its randomized position within the
augmented sample.

\begin{proposition}[Uniform randomized rank and marginal coverage]
  \label{prop:CP-coverage}
Assume $(X_i,Z_i)_{i=1}^{n+1}$ are i.i.d., and the conformal score
$R(x, z;\Dcal)$ is permutation-invariant in its dataset argument.
Let $\Dcal_{n+1}=\Dcal_{n+1}^{Z_{n+1}}$, and
let $\pi$ be defined as in~\eqref{cp-pi} with a tie--breaker 
$\tau\sim\U([0,1))$ independent of the data. Then
$$
\pi(Z_{n+1}) \sim \U\big([0,1)\big).
$$
In particular, for any $\alpha\in \left[0, 1\right]$,
$$
\P\bigl(\pi(Z_{n+1}) \le 1-\alpha \bigr) = 1 - \alpha.
$$
\end{proposition}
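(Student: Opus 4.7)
The plan is to reduce the claim to exchangeability of the leave-one-out scores on the full augmented dataset, and then compute the distribution of the randomized rank by conditioning on the multiset of scores. For $i=1,\ldots,n+1$ set $R_i := R(X_i, Z_i;\, \Dcal_{n+1}\setminus\{(X_i, Z_i)\})$, $A := \#\{i : R_i < R_{n+1}\}$, and $B := \#\{i : R_i = R_{n+1}\}$. Then \eqref{cp-pi} evaluated at $z = Z_{n+1}$ reads $\pi(Z_{n+1}) = (A + \tau B)/(n+1)$, with $\tau\sim\U(0,1)$ independent of the data.

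\textbf{Step 1: exchangeability of the scores.} Because $R$ is permutation-invariant in its dataset argument, the map $\Phi:(w_1,\ldots,w_{n+1})\mapsto \bigl(R(w_i;\{w_j\}_{j\ne i})\bigr)_{i=1}^{n+1}$ is well-defined on ordered tuples and satisfies $\Phi(w_{\sigma(1)},\ldots,w_{\sigma(n+1)})_i = \Phi(w_1,\ldots,w_{n+1})_{\sigma(i)}$ for every permutation $\sigma$ of $\{1,\ldots,n+1\}$. Since $(X_i,Z_i)_{i=1}^{n+1}$ is i.i.d.\ and hence exchangeable, applying $\Phi$ propagates exchangeability to $(R_1,\ldots,R_{n+1})$.

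\textbf{Step 2: conditioning on the multiset and tie handling.} Condition on the unordered collection $\mathcal{M}=\{R_1,\ldots,R_{n+1}\}$ (with multiplicities). By exchangeability of $(R_i)$, all $n+1$ positions have the same conditional distribution, hence $\Prob(R_{n+1} = v \mid \mathcal{M}) = m(v)/(n+1)$, where $m(v)$ denotes the multiplicity of $v$ in $\mathcal{M}$. Enumerating the distinct values as $v_1 < \cdots < v_k$ with multiplicities $m_1,\ldots,m_k$, on the event $\{R_{n+1} = v_j\}$ one has $A = m_1+\cdots+m_{j-1}$ and $B = m_j$; by independence of $\tau$, the conditional law of $\pi(Z_{n+1})$ is uniform on the interval $I_j := [A/(n+1),\,(A+B)/(n+1))$ of length $m_j/(n+1)$. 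Multiplying the intra-block uniform density $(n+1)/m_j$ by the block mass $m_j/(n+1)$ yields a conditional density identically equal to $1$ on $[0,1)$ across the disjoint tiling $(I_j)_{j=1}^k$. Hence $\pi(Z_{n+1}) \mid \mathcal{M} \sim \U([0,1))$, and therefore $\pi(Z_{n+1}) \sim \U([0,1))$ unconditionally; the coverage identity follows immediately.

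\textbf{Main obstacle.} The delicate point is handling ties with a \emph{single} tie-breaker $\tau$ rather than per-index independent uniforms (which would trivially produce distinct, exchangeable extended scores). Conditioning on $\mathcal{M}$ is what makes the block occupied by $R_{n+1}$ deterministic in the sorted order, so that $A$ and $B$ depend only on $\mathcal{M}$ and on which block $R_{n+1}$ falls into; the $\tau$-randomization then exactly fills that block's sub-interval. Verifying that the resulting mixture of uniforms reconstructs $\U([0,1))$ exactly is the core algebraic step and relies on the disjoint tiling structure.
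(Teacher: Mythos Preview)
Your proof is correct and follows essentially the same route as the paper: both establish exchangeability of the leave-one-out scores, condition on the order statistics (equivalently, your multiset $\mathcal{M}$), observe that $R_{n+1}$ lands in block $j$ with probability $m_j/(n+1)$, and then show that the $\tau$-randomization makes $\pi(Z_{n+1})$ uniform on the corresponding sub-interval so that the mixture is $\U([0,1))$. The only cosmetic difference is that the paper verifies uniformity via the CDF $\P((n+1)\pi(Z_{n+1})\le t)=t/(n+1)$, whereas you argue via densities on the tiling $(I_j)$; your Step~1 also makes the equivariance of $\Phi$ more explicit than the paper does.
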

\begin{proof}
  See Appendix~\ref{sec:proof-cp-coverage}.
\end{proof}

The value $p(z) := 1 - \pi(z)$ is known as the conformal $p$-value. 
As in hypothesis testing, $p(z)$ quantifies how extreme the candidate $z$ is 
relative to the observed sample: values consistent with the data yield large 
$p(z)$, while discordant values yield smaller $p(z)$. 
Using this quantity, define the \emph{conformal prediction set}
\begin{equation}
  \label{eq:CP-set}
\Gamma_{n,\tau,1-\alpha}(X_{n+1})
  = \{\,z \in \R : p(z) > \alpha\,\},
\end{equation}
which collects the candidate values sufficiently compatible with the
data.
Proposition~\ref{prop:CP-coverage} implies that
$$
\P\bigl(Z_{n+1} \in \Gamma_{n,\tau,1-\alpha}(X_{n+1})\bigr) = 1-\alpha,
$$
with probability taken over the joint randomness of $\Dcal_{n+1}$ and $\tau$. 
This construction thus provides prediction sets with
exact finite-sample coverage, on average over the randomness of the data, for
any regression function~$s$, under the sole assumption that the observations
are exchangeable, as is the case for independent and identically distributed draws.

Note that the set $\Gamma_{n,\tau,1-\alpha}(X_{n+1})$ need not be an interval in general,
as the comparisons $R_i^z < R_{n+1}^z$ may switch more than once with $z$.
It will, however, reduce to an interval in our \cpsgp construction
(see Section~\ref{sec:cps-gp-interpolation}).

For a comprehensive overview of conformal prediction (CP) and its
variants, we refer the reader to \citet{angelopoulos2025theoreticalfoundationsconformalprediction}.
The discussion above focused on the full conformal method, which forms
the basis for CPS. Other variants have been developed to reduce
computational cost while preserving finite-sample coverage. Split
conformal methods avoid repeated model fitting by using a hold-out
sample, at the cost of wider prediction sets. Jackknife-based methods
such as jackknife+ and CV+ provide an intermediate option, leveraging
leave-one-out or cross-validation fits to improve efficiency without
the full burden of refitting.

\subsection{Conformal predictive systems}
\label{sec:conf-pred-syst}

Standard conformal prediction produces prediction sets, but not full
predictive distributions. To overcome this limitation,
\citet{vovk17:_nonpar, vovk19:_nonpar} introduced the framework of \emph{conformal predictive systems} (CPS),
which extend conformal prediction to output predictive CDFs at each
test covariate $X_{n+1}$. The approach builds on the framework of
full conformal prediction and will be interpreted below through the lens
of $\mu$-probabilistic calibration introduced in
Section~\ref{sec:bg-mu-calibration}.

\paragraph{Randomized predictive systems}

CPS are formalized through the notion of a \emph{randomized predictive system} (RPS).
An RPS is intended to act as a predictive CDF that remains calibrated in finite samples.
Given (i) a dataset $\Dcal_n$ of $n$ observations, (ii) a test covariate--candidate
outcome pair $(x_{n+1},z) \in \XX \times \R$, and (iii) an auxiliary random variable
$\tau \in [0,1]$ for tie-breaking, an RPS returns a value in $[0,1]$ that represents
the predictive probability assigned to the event $\{Z_{n+1} \le z\}$ at covariate $x_{n+1}$.

Formally, an RPS is a function $G$ satisfying the conditions in
Definition~\ref{definition:rps}.

\begin{definition}[\citet{shen2018prediction, vovk19:_nonpar}]
\label{definition:rps}
A measurable function
$$
G : (\XX \times \R) \times (\XX \times \R)^{n} \times [0,1] \to [0,1]
$$
is called a \emph{randomized predictive system} (RPS) if it satisfies:
\begin{enumerate}[leftmargin=*, label=(R.\arabic*)]
\item 
\begin{enumerate}
  \item For each dataset $\Dcal_n$ and test covariate $x_{n+1}\in\XX$,
  the map
  $$
  (z,\tau) \mapsto G\bigl((x_{n+1},z),\,\Dcal_n,\,\tau\bigr)
  $$
  is nondecreasing in both $z \in \R$ and $\tau \in [0,1]$.
  \item For each $\Dcal_n$ and $x_{n+1}\in\XX$,
  $$
  \lim_{z\to -\infty} G\bigl((x_{n+1},z),\,\Dcal_n,\,0\bigr) = 0,
  \qquad
  \lim_{z\to +\infty} G\bigl((x_{n+1},z),\,\Dcal_n,\,1\bigr) = 1.
  $$
\end{enumerate}
\item If $(X_i,Z_i)_{i=1}^n$ are i.i.d.~from a distribution $F$,
$(X_{n+1},Z_{n+1}) \sim F$ independently, and $\tau \sim \U(0, 1)$ is
independent, then, with probability taken over the joint distribution of
$(\Dcal_n,(X_{n+1},Z_{n+1}),\tau)$,
$$
\forall \alpha \in [0,1], \quad 
\P \left(G\bigl((X_{n+1},Z_{n+1}),\,\Dcal_n,\,\tau\bigr) \le \alpha \right) = \alpha.
$$
\end{enumerate}
\end{definition}

Requirement (R.1) ensures that, for fixed data and test covariate, the map
$(z,\tau) \mapsto G((x_{n+1},z),\Dcal_n,\tau)$ behaves like a CDF: it is
monotone in both arguments, and it ranges from $0$ at $z\to -\infty$ (with $\tau=0$)
to $1$ at $z\to +\infty$ (with $\tau=1$). Randomization through $\tau$
provides principled tie handling in conformal ranking and yields exact uniformity of the randomized PIT
and exact mass at CDF jumps; it does not remove the stepwise
nature in $z$ for a fixed $\tau$.

Requirement (R.2) is a probabilistic calibration property: if
$(X_i,Z_i)_{i=1}^n,(X_{n+1},Z_{n+1})$ are i.i.d.~and $\tau\sim\U(0, 1)$ is
independent, the randomized PIT
$$
U := G\bigl((X_{n+1},Z_{n+1}),\,\Dcal_n,\,\tau\bigr)
$$
is uniformly distributed on $[0,1]$.

\begin{remark}[Averaging over the data]
\label{rem:marginal-vs-postdata-mu}
Condition (R.2) ensures that the randomized PIT associated with a CPS
is uniform when averaged over the joint distribution of the observed
data, the test point, and the randomization variable~$\tau$. 
In the $\mu$-probabilistic framework of Section~\ref{sec:bg-mu-calibration},
this would correspond to $\mu$-probabilistic calibration \emph{on average over the
randomness of the observed data}:
$$
\E \left[\P\left\{\,G\bigl((X,f(X)),\,\Dcal_n,\,\tau\bigr)\le u \,\bigm|\, \Dcal_n\,\right\}\right]=u,
\qquad \forall u\in[0,1]\,
$$
(where the expectation is taken over the randomness of $\Dcal_n$).
By contrast, $\mu$-probabilistic calibration as defined in
Section~\ref{sec:bg-mu-calibration} is a \emph{post-data} property:
it requires the conditional equality
$$
\P\left\{\,G\bigl((X,f(X)),\,\Dcal_n,\,\tau\bigr)\le u \,\bigm|\, \Dcal_n\,\right\}=u
$$
to hold for each realization of $\Dcal_n$, which CPS do not guarantee
for finite~$n$. This post-data notion aligns with the \emph{training-conditional} calibration
used in the conformal prediction literature
\citep[see, e.g.,][Ch.~4]{angelopoulos2025theoreticalfoundationsconformalprediction},
while the property ``on average over the data'' corresponds to
\emph{marginal} calibration.
\end{remark}

\paragraph{Conformal predictive distribution}
Using the same setting as in Section~\ref{sec:overv-conf-pred}, let
$\Dcal_{n+1}^z$ be the augmented dataset and
$(R_i^z)_{i=1}^{n+1}$ the associated leave-one-out scores.
Recall from~\eqref{cp-pi} the function $\pi(z)$, which gives the
(normalized, possibly randomized) rank of the test score among the
augmented scores. For notational consistency, we introduce the equivalent notation
\begin{equation}
  \label{eq:def-G}
G\bigl((x,z),\,\Dcal_n,\,\tau\bigr)
  := \frac{1}{n+1}\sum_{i=1}^{n+1}\one\{R_i^z < R_{n+1}^z\}
     + \frac{\tau}{n+1}\sum_{i=1}^{n+1}\one\{R_i^z = R_{n+1}^z\},
\end{equation}
with $\tau \sim \U(0, 1)$ independent of
$(\Dcal_n, (X_{n+1}, Z_{n+1}))$. We now check whether $G$ is an RPS.
Requirement (R.2) holds provided the observation pairs
$(X_i,Z_i)_{i=1}^{n+1}$ are exchangeable (e.g., i.i.d.) and the score
$R(\cdot,\cdot;\cdot)$ is permutation-invariant in its dataset argument; then,
by Proposition~\ref{prop:CP-coverage},
$G\bigl((X_{n+1},Z_{n+1}),\Dcal_n,\tau\bigr)\sim U(0, 1)$.
The monotonicity condition (R.1) depends on the chosen conformal
score. When using the absolute-residual score
\eqref{eq:residual-based-score}, $G$ may fail to be monotone in $z$,
and the construction does not define an RPS. \citet{vovk19:_nonpar}
provide general sufficient conditions under which a function of the
form~\eqref{eq:def-G} defines a valid RPS: broadly, the conformal
scores must vary with the candidate label $z$ in an order-preserving
manner. We will later verify that these properties hold for the
score structure used in the \cpsgp method.

When (R.1) and (R.2) hold, the map $z\mapsto G\bigl((x,z),\Dcal_n,\tau\bigr)$ acts as a
(randomized) predictive CDF at $x$; we then call it a \emph{conformal predictive
distribution (CPD)} and write
$$
\hat F_{n,\tau}^{\mathrm{CPD}}(z\mid x) := G\bigl((x,z),\,\Dcal_n,\,\tau\bigr).
$$
Evaluating $\hat F_{n,\tau}^{\mathrm{CPD}}(\cdot\mid X_{n+1})$ at the realized
outcome $Z_{n+1}$ yields a randomized PIT value that is uniform, so the predictive
distribution is probabilistically calibrated when averaging over the joint
randomness of $\Dcal_n$, $(X_{n+1},Z_{n+1})$, and~$\tau$ (``marginal'' calibration).

\begin{remark}[Terminology]
  In \citet{vovk19:_nonpar}, the candidate CPD in~\eqref{eq:def-G} is
  called a \emph{conformal transducer}. A \emph{conformal predictive
    system (CPS)} is such a map that also satisfies the RPS axioms
  (Definition~\ref{definition:rps}); for a CPS, the induced predictive
  CDF is termed a \emph{conformal predictive distribution (CPD)}.
  For simplicity, we avoid ``transducer'' in the main text: we say ``candidate CPD''
  for~\eqref{eq:def-G}, and ``CPD'' once the RPS conditions are met.
\end{remark}

\subsection{CPS for GP interpolation}
\label{sec:cps-gp-interpolation}

We derive the form of the CPS when
the conformal score is constructed from Gaussian process (GP)
interpolation, following the approach of \citet{vovk17:_confor_predic_distr_kernel}
for kernel ridge regression. In this setting, the CPS admits a closed-form
expression and is a step function of the candidate outcome~$z$, with jumps
determined by thresholds derived from the GP posterior.

\paragraph{Setup and assumptions}
Let $f:\XX \subset \R^d \to \R$ be an unknown deterministic function. We
observe a noiseless sample
$\Dcal_n= \{(X_i,Z_i)\}_{i=1}^n$ with $X_i\sim \mu$ i.i.d.\ and $Z_i=f(X_i)$,
where $\mu$ is the design measure introduced in
Section~\ref{sec:bg-setting}. A GP prior
$\xi \sim\GP(m_\theta,k_\theta)$ is specified with mean $m_\theta$ and
covariance $k_\theta$ depending on hyperparameters~$\theta$. In the
derivation below, $\theta$ is treated as a fixed constant, independent
of $\Dcal_n$ (for instance, pre-specified from prior knowledge or selected
once in a separate, external step). This assumption yields closed-form
expressions and preserves the exchangeability property needed for CPS
validity.

\medskip\noindent\textbf{NB.}
Although $(X_i,Z_i)_{i=1}^n$ and $X_{n+1}$ are random, in the algebra below we
condition on a fixed dataset $\Dcal_n=\{(x_i,z_i)\}_{i=1}^n$ and a fixed test
location $x_{n+1}$. All GP posterior quantities such as $m_n$ and $\sigma_n$
are then deterministic functions of $(\Dcal_n,x_{n+1},\theta)$. The same holds
for the auxiliary coefficients introduced later (e.g.\ the slopes and
thresholds in Proposition~\ref{prop:affine-diff}). We return to the random-design
setting when stating calibration results.

For a test covariate $x_{n+1}$ and candidate outcome $z\in\R$, form the
augmented dataset $\Dcal_{n+1}^z=\Dcal_n\cup\{(x_{n+1},z)\}$ and use the
standardized residual score \citep{Papadopoulos_2024}. The score at the test
point is
\begin{equation}
  \label{eq:cps-gp-score}
  R_{n+1}^z = \frac{z - m_n(x_{n+1})}{\sigma_n(x_{n+1})},
\end{equation}
where $m_n$ and $\sigma_n$ denote the GP posterior mean and standard deviation
given~$\Dcal_n$ (under the same fixed~$\theta$).
For each observed pair
$(x_i,z_i)$, the leave–one–out score computed on
$\Dcal_{n+1}^z\setminus\{(x_i,z_i)\}$ is
\begin{equation}
  \label{eq:cps-gp-loo-score}
  R_i^z = \frac{z_i - m_{n+1,-i}(x_i)}{\sigma_{n+1,-i}(x_i)}.
\end{equation}

Under these assumptions, the difference between the
test score and each data--point score is affine in~$z$.

\begin{proposition}[Affine differences]
  \label{prop:affine-diff}
Fix $\Dcal_n$, $x_{n+1}$, and $\theta$. For each $i=1,\ldots,n$, there exist a
slope $\beta_i>0$ and a threshold $c_i\in\R$, depending only on
$(\Dcal_n,\,x_{n+1},\,\theta)$, such that
$$
R_{n+1}^z - R_i^z \;=\; \beta_i\,(z - c_i), \qquad z\in\R.
$$
\end{proposition}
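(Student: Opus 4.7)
The plan is to express both $R_{n+1}^z$ and $R_i^z$ as affine functions of $z$, read off the coefficient of $z$ in their difference, and then reduce positivity of that coefficient to a strict Cauchy--Schwarz inequality on the inverse kernel matrix. The first term is immediate: since removing $(x_{n+1},z)$ from $\Dcal_{n+1}^z$ returns $\Dcal_n$, neither $m_n(x_{n+1})$ nor $\sigma_n(x_{n+1})$ depends on $z$, so $R_{n+1}^z = (z - m_n(x_{n+1}))/\sigma_n(x_{n+1})$ is visibly affine in $z$ with positive slope $1/\sigma_n(x_{n+1})$.

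For $R_i^z$, I would use the classical Dubrule/Schur-complement identity for noiseless GP interpolation. Let $X = (x_1,\dots,x_n,x_{n+1})$, $y = (z_1,\dots,z_n,z)^{\top}$, $\tilde{y} = y - m_\theta(X)$, and $A = K_{n+1}^{-1}$, where $K_{n+1}$ is the $(n+1) \times (n+1)$ kernel matrix built from $k_\theta$ on $X$ (invertible because $k_\theta$ is assumed strictly positive definite on distinct locations). The LOO identity yields, for each $j$,
$$z_j - m_{n+1,-j}(x_j) = \frac{(A\tilde{y})_j}{A_{jj}}, \qquad \sigma_{n+1,-j}(x_j)^{2} = \frac{1}{A_{jj}},$$
so $R_j^z = (A\tilde{y})_j / A_{jj}^{1/2}$ is linear in $\tilde{y}$. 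Since the only coordinate of $\tilde{y}$ that depends on $z$ is $\tilde{y}_{n+1} = z - m_\theta(x_{n+1})$, the derivative of $R_j^z$ in $z$ is $A_{j,n+1}/A_{jj}^{1/2}$. Applying this to $j = n+1$ and $j = i$ and subtracting gives
$$\beta_i = A_{n+1,n+1}^{1/2} - \frac{A_{i,n+1}}{A_{ii}^{1/2}},$$
and $c_i$ is then automatically the unique real zero of the resulting affine map as soon as $\beta_i \ne 0$.

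The only non-trivial point is showing $\beta_i > 0$. The key observation is that the $2\times 2$ principal submatrix of $A$ indexed by $\{i,n+1\}$ inherits SPD-ness from $A$, hence its determinant $A_{ii}A_{n+1,n+1} - A_{i,n+1}^{2}$ is strictly positive. This yields $A_{i,n+1} < (A_{ii} A_{n+1,n+1})^{1/2}$ strictly, and dividing by $A_{ii}^{1/2} > 0$ gives $\beta_i > 0$.

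I expect this strict-positivity step to be the main obstacle, because the naive Cauchy--Schwarz bound gives only $\beta_i \ge 0$; one has to use that $K_{n+1}$ is strictly (not merely weakly) positive definite. Beyond that, the calculation is pure linear algebra; the only bookkeeping is to keep track of the centering by $m_\theta$ and to check that applying the LOO identity at $j = n+1$ is consistent with the original definition of $R_{n+1}^z$ in terms of $(\Dcal_n,x_{n+1})$, which it is because leaving out the last point in the augmented dataset returns exactly $\Dcal_n$.
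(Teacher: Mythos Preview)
Your proposal is correct and follows essentially the same route as the paper: both arguments rely on the LOO identity $R_j^z=(K_{n+1}^{-1}\tilde y)_j/(K_{n+1}^{-1})_{jj}^{1/2}$ to obtain the affine dependence on $z$, and both reduce $\beta_i>0$ to the strict inequality $|A_{i,n+1}|<\sqrt{A_{ii}A_{n+1,n+1}}$ that follows from strict positive definiteness of $A=K_{n+1}^{-1}$. The paper phrases the last step as Cauchy--Schwarz in the $A$-inner product while you use the positivity of the $2\times2$ principal minor, but these are equivalent; the paper additionally supplies explicit formulas for $\beta_i$ and $c_i$, which you do not need since the proposition only asserts existence.
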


\begin{proof}
See Appendix~\ref{app:AiBi-deriv}, which provides explicit expressions for
$\beta_i$ and $c_i$.
\end{proof}
This affine representation will be used in Proposition~\ref{prop:stepwise}
to show that the resulting conformal predictive distribution is a step
function with jumps at the thresholds $\{c_i\}_{i=1}^n$.

\begin{remark}[Hyperparameters selected on $\Dcal_n$]
\label{rem:theta-selected}
In practice, $\theta$ is often selected from the same data $\Dcal_n$ (e.g., by
maximum likelihood, \citealp{petit_parameter_2023}) and then kept fixed when
evaluating conformal scores. With $\theta$ fixed after selection,
Proposition~\ref{prop:affine-diff} still applies.

This choice breaks the permutation invariance required by full
conformal prediction: $\theta=\theta(\Dcal_n)$ is a function of the
$n$ observed pairs only, not of the candidate $(X_{n+1},z)$. Under a
permutation of the $n+1$ pairs, $\theta$ should change to reflect the
permuted first $n$ pairs. Keeping $\theta(\Dcal_n)$ fixed prevents
this, so the joint distribution of conformity scores is no longer
invariant under permutation, and exact finite–sample validity does not
hold.

If formal validity is needed, one can either (i) use a split scheme (select
$\theta$ on a disjoint subset, then conformalize on the remainder), or
(ii) re-select $\theta$ for each augmented dataset
$\Dcal_{n+1}^z\setminus\{(X_i,Z_i)\}$ (or over a grid in $z$). The latter
restores exchangeability but is computationally expensive and removes the simple
affine dependence on $z$.
\end{remark}

\begin{remark}[Extensions to universal and intrinsic kriging]
The framework extends beyond the case of a known mean function and strictly
positive definite kernel. It also applies to \emph{universal kriging} models
with an unknown linear mean $m(x)=h(x)^\top\beta$ (for known regressors
$h(x)\in\R^q$ and unknown coefficients~$\beta$) and to \emph{intrinsic kriging}
based on conditionally positive definite kernels
\citep{matheron1973intrinsic,chiles1999geostatistics}.  
The corresponding modifications to the leave-one-out formulas and to the affine
representation of $R_{n+1}^z - R_i^z$ are outlined in
Appendix~\ref{app:AiBi-deriv}.
\end{remark}

\paragraph{CPD construction}
Specializing the
map~\eqref{eq:def-G} to the standardized-residual scores
\eqref{eq:cps-gp-score}--\eqref{eq:cps-gp-loo-score}, we define
\begin{equation}
\label{eq:def-cps-gp}
\hat F_{n,\tau}^{\mathrm{CPS\text{-}GP}}(z\mid x_{n+1})
\;:=\; G\bigl((x_{n+1},z),\,\Dcal_n,\,\tau\bigr),
\end{equation}
with independent $\tau\sim\U(0, 1)$.

The function
$\hat F_{n,\tau}^{\mathrm{CPS\text{-}GP}}\left(\;\cdot \mid x_{n+1}\right)$ is a
candidate CPD at the test location~$x_{n+1}$. The next result
shows that, in our GP interpolation setting, this function has a
stepwise form with discontinuities at the thresholds
$\{c_i\}_{i=1}^n$.
We then verify that
it satisfies the RPS axioms, showing that
$\hat F_{n,\tau}^{\mathrm{CPS\text{-}GP}}$ defines a valid CPD.

\begin{proposition}[Stepwise form of $\hat F_{n,\tau}^{\mathrm{CPS\text{-}GP}}$]
\label{prop:stepwise}
Fix $\Dcal_n$ and $x_{n+1}$, and use the standardized residual score.
Let $\{c_i\}_{i=1}^n$ be the thresholds from Proposition~\ref{prop:affine-diff},
and order them as
$$
-\infty=:c_{(0)} < c_{(1)} \le \cdots \le c_{(n)} < c_{(n+1)}:=\infty.
$$
Then, for any $\tau\in[0,1]$,
$$
\hat F_{n,\tau}^{\mathrm{CPS\text{-}GP}}(z\mid x_{n+1})
=
\begin{cases}
\dfrac{i+\tau}{n+1}, & \text{if } z\in(c_{(i)},c_{(i+1)}),~~i=0,\ldots,n,\\[8pt]
\dfrac{i'-1+\tau\,(i''-i'+2)}{n+1}, & \text{if } z=c_{(i)} \text{ for some } i,
\end{cases}
$$
where, for a tie point $z=c_{(i)}$, the block endpoints
$$
i' := \min\{r:~c_{(r)}=c_{(i)}\}, \qquad
i'' := \max\{r:~c_{(r)}=c_{(i)}\}
$$
denote the first and last indices of the tied block.
In particular, $z\mapsto \hat F_{n,\tau}^{\mathrm{CPS\text{-}GP}}(z\mid x_{n+1})$
is a step function with jumps at $\{c_i\}_{i=1}^n$.
\end{proposition}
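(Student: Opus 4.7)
The plan is to reduce everything to the affine representation of Proposition~\ref{prop:affine-diff} and then do a careful counting argument, splitting on whether $z$ lies in the interior of a gap between consecutive ordered thresholds or exactly on one (possibly tied) threshold.

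First I would simplify the expression~\eqref{eq:def-G} specialized to our score. Since $R_{n+1}^z - R_{n+1}^z = 0$, the $i=n+1$ term contributes $0$ to the strict-inequality sum and $1$ to the equality sum. Writing out the definition of $\hat F_{n,\tau}^{\mathrm{CPS\text{-}GP}}(z\mid x_{n+1})$ therefore gives
\[
\hat F_{n,\tau}^{\mathrm{CPS\text{-}GP}}(z\mid x_{n+1})
= \frac{1}{n+1}\sum_{i=1}^{n}\one\{R_i^z < R_{n+1}^z\}
+ \frac{\tau}{n+1}\Bigl(1+\sum_{i=1}^{n}\one\{R_i^z = R_{n+1}^z\}\Bigr).
\]
By Proposition~\ref{prop:affine-diff}, $R_{n+1}^z - R_i^z = \beta_i(z - c_i)$ with $\beta_i>0$, so the events $\{R_i^z<R_{n+1}^z\}$ and $\{R_i^z=R_{n+1}^z\}$ coincide with $\{z>c_i\}$ and $\{z=c_i\}$ respectively. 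Substituting yields
\[
\hat F_{n,\tau}^{\mathrm{CPS\text{-}GP}}(z\mid x_{n+1})
= \frac{1}{n+1}\sum_{i=1}^{n}\one\{z > c_i\}
+ \frac{\tau}{n+1}\Bigl(1+\sum_{i=1}^{n}\one\{z = c_i\}\Bigr).
\]
From this formula it is already visible that the map is piecewise constant in $z$, with possible jumps only at points of $\{c_i\}_{i=1}^n$, establishing the last claim of the proposition.

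Next I would compute the explicit values on the two types of regions. For $z$ in an open interval $(c_{(i)},c_{(i+1)})$, exactly $i$ of the ordered thresholds satisfy $c_{(j)}<z$ and none satisfy $c_{(j)}=z$, yielding $(i+\tau)/(n+1)$. For $z=c_{(i)}$, let $i'$ and $i''$ be the first and last indices of the tied block containing $c_{(i)}$; then exactly $i'-1$ ordered thresholds are strictly less than $z$ and exactly $i''-i'+1$ are equal to $z$, so the formula gives $(i'-1+\tau(1+i''-i'+1))/(n+1) = (i'-1+\tau(i''-i'+2))/(n+1)$. This matches the two cases in the statement.

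The only genuinely delicate step is the tie bookkeeping, and it is purely combinatorial: one must be careful that the ``$+1$'' coming from the $i=n+1$ term in the equality sum is grouped with the $(i''-i'+1)$ ties contributed by data indices, giving the factor $(i''-i'+2)$ inside the $\tau$-coefficient. Nothing else beyond Proposition~\ref{prop:affine-diff} is needed; in particular, no probabilistic argument enters here since the whole computation is pointwise in $z$ for a fixed dataset, test location, and tie-breaker $\tau$.
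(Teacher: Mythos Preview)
Your proposal is correct and follows essentially the same approach as the paper: both reduce the score comparisons to threshold comparisons via the affine representation $R_{n+1}^z-R_i^z=\beta_i(z-c_i)$ with $\beta_i>0$, then count how many thresholds lie strictly below or equal to $z$ in the open-interval and tie-block cases. Your write-up is slightly more explicit about isolating the $i=n+1$ term and tracking the ``$+1$'' it contributes to the $\tau$-coefficient, which the paper leaves implicit.
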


\begin{proof}
Since $\beta_i>0$, the relative ordering of $R_{n+1}^z$ and $R_i^z$
can change only at $z=c_i$. Sorting these thresholds partitions $\R$
into the intervals $(c_{(i)},c_{(i+1)})$.  
If $z\in(c_{(i)},c_{(i+1)})$, then $z$ exceeds exactly $i$ thresholds,
so $R_{n+1}^z$ exceeds $i$ observed scores and is smaller than the
others. Thus
$$
\hat F_{n,\tau}^{\mathrm{CPS\text{-}GP}}(z\mid x_{n+1}) = \frac{i+\tau}{n+1}.
$$
If $z=c_{(i)}$ lies in a block of ties
$c_{(i')}=\cdots=c_{(i'')}=c_{(i)}$, then the test score exceeds $i'-1$
scores, equals $(i''-i'+1)$ scores, and is smaller than the rest.
The randomized rank with tie–breaking is
$$
\hat F_{n,\tau}^{\mathrm{CPS\text{-}GP}}(c_{(i)}\mid x_{n+1})
=\frac{i'-1+\tau\,(i''-i'+2)}{n+1}.
$$
\mbox{}
\end{proof}

Up to this point, we conditioned on a fixed dataset
$\Dcal_n=\{(x_i,z_i)\}_{i=1}^n$ and a fixed test location
$x_{n+1}$. We now lift this conditioning: let $(X_i,Z_i)_{i=1}^n$ be
i.i.d., let $(X_{n+1},Z_{n+1})$ be an independent draw from the same
distribution, and let $\tau\sim\U(0, 1)$ be independent of everything
else. Throughout the calibration statements below, the hyperparameters
$\theta$ used in the scores are taken as fixed independently of the
sample (or chosen on a disjoint split).

We refer to this construction as the \emph{conformal predictive system for
Gaussian-process interpolation} (\cpsgp). The next result shows that,
under the assumptions of Section~\ref{sec:cps-gp-interpolation}, the
map $z\mapsto \hat F_{n,\tau}^{\mathrm{CPS\text{-}GP}}(z\mid x_{n+1})$
satisfies the RPS axioms and therefore defines a valid CPD.

\begin{proposition}
\label{prop:cps-valid}
Assume the setup of Section~\ref{sec:cps-gp-interpolation}:
interpolation with known mean function $m$ and strictly positive
definite kernel $k$, the standardized
residual score, and an independent tie–breaker $\tau\sim\U(0, 1)$.
If, in addition, $(X_i,Z_i)_{i=1}^n$ and $(X_{n+1},Z_{n+1})$ are
exchangeable (e.g., i.i.d.), then
$$
\hat F_{n,\tau}^{\mathrm{CPS\text{-}GP}}(\;\cdot\mid x_{n+1})
$$
satisfies the RPS axioms (Definition~\ref{definition:rps}); hence
\cpsgp defines a valid CPD.
\end{proposition}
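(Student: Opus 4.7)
The plan is to check the two RPS axioms of Definition~\ref{definition:rps} separately, leveraging the stepwise structure established in Proposition~\ref{prop:stepwise} for (R.1) and the rank-uniformity result of Proposition~\ref{prop:CP-coverage} for (R.2).

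First I would verify (R.1). Monotonicity in $\tau$ is immediate from the definition of $G$ in~\eqref{eq:def-G}, since the coefficient of $\tau$ is the non-negative count $\sum_{i=1}^{n+1}\one\{R_i^z=R_{n+1}^z\}\geq 1$. Monotonicity in $z$ follows from the explicit piecewise formula of Proposition~\ref{prop:stepwise}: on each open interval $(c_{(i)},c_{(i+1)})$ the value is $(i+\tau)/(n+1)$, which is strictly increasing in $i$; and at any tie point $z=c_{(i)}$ with block indices $(i',i'')$, the value $(i'-1+\tau(i''-i'+2))/(n+1)$ is easily shown to lie between the left limit $(i'-1+\tau)/(n+1)$ and the right limit $(i''+\tau)/(n+1)$, for every $\tau\in[0,1]$. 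The boundary limits (R.1b) follow from Proposition~\ref{prop:affine-diff}: since $\beta_i>0$ and $R_{n+1}^z-R_i^z=\beta_i(z-c_i)$, for $z<\min_i c_i$ one has $R_{n+1}^z<R_i^z$ for all $i\leq n$, so the first sum in~\eqref{eq:def-G} vanishes and the second equals $1$ (from the $i=n+1$ term), yielding $G=\tau/(n+1)\to 0$ as $\tau\to 0$; symmetrically, for $z>\max_i c_i$, $G=n/(n+1)+\tau/(n+1)\to 1$ as $\tau\to 1$.

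Next I would establish (R.2). The standardized residual score~\eqref{eq:cps-gp-score}--\eqref{eq:cps-gp-loo-score} depends on its dataset argument only through the GP posterior mean $m_n$ and standard deviation $\sigma_n$, which are symmetric functions of the observation pairs (the Gaussian likelihood and kriging equations treat all data points identically, and the hyperparameters $\theta$ are fixed independently of the sample as assumed in Section~\ref{sec:cps-gp-interpolation}). Therefore $R(x,z;\cdot)$ is permutation-invariant in its dataset argument, which is the hypothesis of Proposition~\ref{prop:CP-coverage}. Applying that proposition to the exchangeable sample $(X_i,Z_i)_{i=1}^{n+1}$ together with the independent tie-breaker $\tau\sim\U(0,1)$ gives $G\bigl((X_{n+1},Z_{n+1}),\Dcal_n,\tau\bigr)=\pi(Z_{n+1})\sim\U(0,1)$, which is exactly (R.2).

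Combining the two parts yields that $\hat F_{n,\tau}^{\mathrm{CPS\text{-}GP}}$ is an RPS, hence a valid CPD by the definition adopted after equation~\eqref{eq:def-G}. The main obstacle I anticipate is bookkeeping at tie points in the monotonicity check for $z$: the randomized rank formula must be confronted carefully with the left and right limits so that the jump value always lands within the step, which rests on the elementary inequalities $0\leq\tau\leq i''-i'+1$ applied to the coefficient structure in Proposition~\ref{prop:stepwise}. Apart from this, all ingredients are direct consequences of results already available in Section~\ref{sec:cps-gp-interpolation}.
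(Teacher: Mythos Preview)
Your proposal is correct and follows essentially the same approach as the paper's proof: you verify (R.1) via Proposition~\ref{prop:stepwise} (monotonicity and boundary limits) and (R.2) via Proposition~\ref{prop:CP-coverage} (permutation invariance of the score plus exchangeability). The paper's own proof is a two-sentence sketch citing these same propositions; you simply fill in the bookkeeping details (the tie-point inequalities and the explicit justification of permutation invariance of the GP score), all of which are correct.
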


\begin{proof}
  (R.1): By Proposition~\ref{prop:stepwise},
  $z\mapsto \hat F_{n,\tau}^{\mathrm{CPS\text{-}GP}}(z\mid x_{n+1})$
  is nondecreasing, with the following properties:
  $\lim_{z\to-\infty}\hat F_{n,0}^{\mathrm{CPS\text{-}GP}}(z\mid
  x_{n+1})=0$ and
  $\lim_{z\to+\infty}\hat F_{n,1}^{\mathrm{CPS\text{-}GP}}(z\mid
  x_{n+1})=1$, and it is nondecreasing in $\tau$.  (R.2): By
  exchangeability and independence of $\tau\sim\U(0, 1)$,
  $\,\hat F_{n,\tau}^{\mathrm{CPS\text{-}GP}}(Z_{n+1}\mid
  X_{n+1})\sim\U(0, 1)$ (Proposition~\ref{prop:CP-coverage}).
\end{proof}

By Proposition~\ref{prop:cps-valid}, the randomized PIT based on
$\hat F_{n,\tau}^{\mathrm{CPS\text{-}GP}}(\cdot\mid X)$ is uniform
when averaged over the joint randomness of $(\Dcal_n, X, \tau)$.
Thus, \cpsgp achieves design-marginal probabilistic calibration
on average over the data. This guarantee is not data-conditional:
for a fixed $\Dcal_n$ the $\mu$-PIT need not be exactly uniform.
In numerical experiments on benchmark functions
(Section~\ref{sec:numerical}), post-data calibration diagnostics
(KS--PIT, IAE, and coverage) computed on an independent test design
$X^\star\sim\mu$ remain close to their ideal targets.

In practical terms, Proposition~\ref{prop:stepwise} shows that
$\hat F_{n,\tau}^{\mathrm{CPS\text{-}GP}}(\cdot\mid x_{n+1})$ is
a piecewise–constant function whose jumps occur at data–dependent
thresholds $\{c_i\}_{i=1}^n$. Figure~\ref{fig:comp_cdf} contrasts this
stepwise conformal predictive distribution with the smooth Gaussian
posterior CDF.

\begin{figure}[h!]
  \centering
  \includegraphics[width=\textwidth]{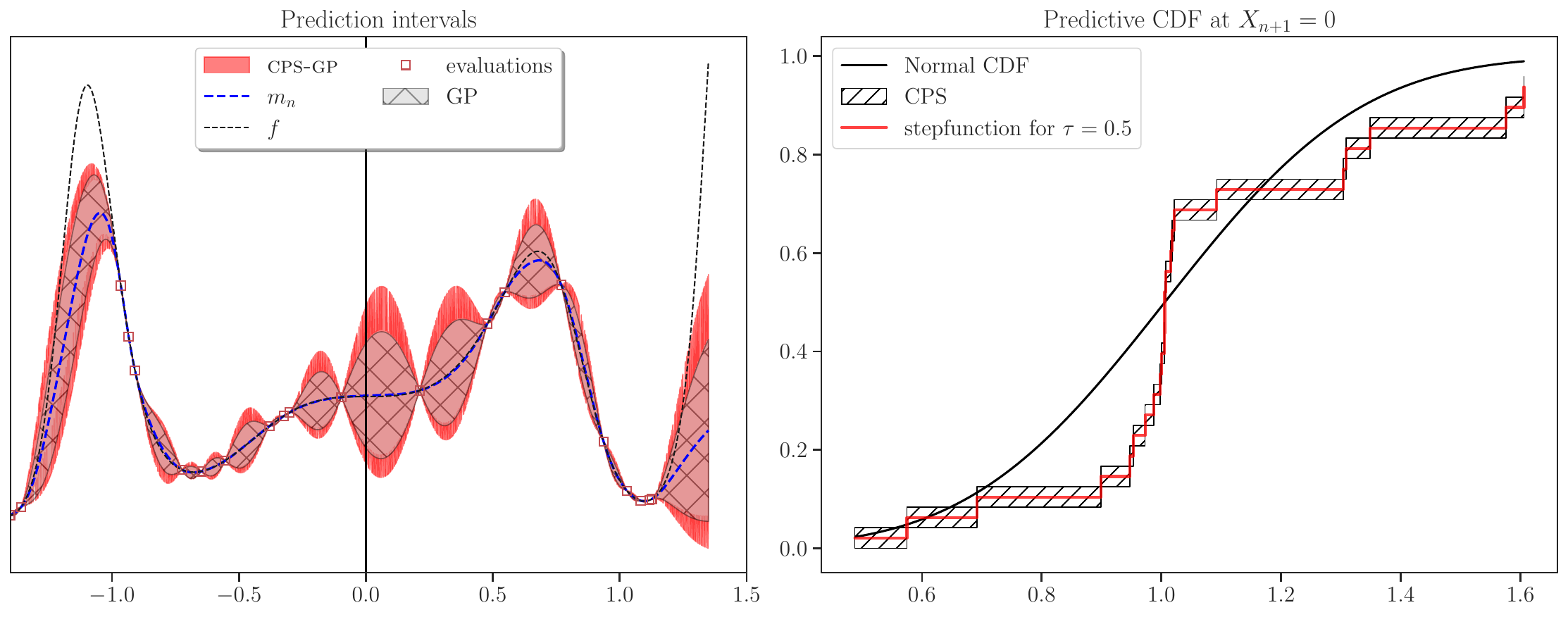}
  \caption{Stepwise CPD (with $\tau=0.5$) compared to the Gaussian
    posterior CDF at $x_{n+1}=0$.
    The hyperparameters of the GP are fitted on $\Dcal_n$ and then kept fixed.
    The CPD has discrete jumps at thresholds determined by GP interpolation,
    while the Gaussian posterior yields a smooth curve.}
  \label{fig:comp_cdf}
\end{figure}

\paragraph{Prediction intervals}
For a test location $x_{n+1}$ and a tie–breaking variable
$\tau \sim \U(0,1)$, the central $(1-\alpha)$ interval is defined as the
(randomized) half–open interval
$$
\Ccal_{n,\tau,1-\alpha}^{\mathrm{CPS\text{-}GP}}(x_{n+1})
= \Big[
  \bigl(\hat F_{n,\tau}^{\mathrm{CPS\text{-}GP}}\bigr)^{-1}(\alpha/2\mid x_{n+1}),~
  \bigl(\hat F_{n,\tau}^{\mathrm{CPS\text{-}GP}}\bigr)^{-1}(1-\alpha/2\mid x_{n+1})
\Big).
$$
Because $\hat F_{n,\tau}^{\mathrm{CPS\text{-}GP}}(\cdot\mid x_{n+1})$
is stepwise, each endpoint is a threshold (see Figure~\ref{fig:cpd-gp}).

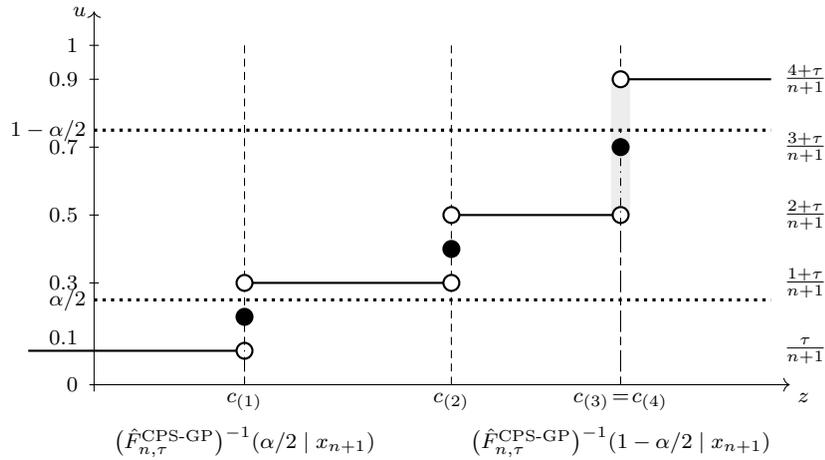
\begin{figure}[t]
  \centering
  \scriptsize
\begin{tikzpicture}[x=1.25cm,y=4.5cm]
  \tikzset{
    open/.style   = {draw, circle, fill=white, inner sep=0pt, minimum size=6pt, thick, transform shape=false},
    closed/.style = {draw, circle, fill=black, inner sep=0pt, minimum size=6pt, thick, transform shape=false}
  }

  \def\yA{0.10} \def\yB{0.30} \def\yC{0.50} \def\yD{0.70} \def\yE{0.90}
  \def\yCone{0.20}
  \def\yCtwo{0.40}

  \def\cA{1.6} \def\cB{3.8} \def\cT{5.6}

  \def\yal{0.25} \def\yone{0.75}

  \def\tiew{0.10}

  \draw[->] (-0.02,0) -- (7.4,0) node[below right] {$z$};
  \draw[->] (0,-0.02) -- (0,1.1) node[left] {$u$};

  \foreach \y/\lab in {0/$0$, 0.3/$0.3$, 0.5/$0.5$, 0.7/$0.7$, 0.9/$0.9$, 1/$1$} {
    \draw (-0.05,\y) -- (0.05, \y);
    \node[left=2pt] at (-0.03, \y) {\lab};
  }
  \draw (-0.05, 0.1) -- (0.05, 0.1);
  \node[left=2pt] at (-0.03, 0.14) {$0.1$};

  \draw[densely dashed] (\cA,0) -- (\cA,1);
  \draw[densely dashed] (\cB,0) -- (\cB,1);
  \draw[densely dashed] (\cT,0) -- (\cT,1);
  \node[below] at (\cA,0) {$c_{(1)}$};
  \node[below] at (\cB,0) {$c_{(2)}$};
  \node[below] at (\cT,0) {$c_{(3)}\!=\!c_{(4)}$};

  \fill[black!7] (\cT-\tiew,\yC) rectangle (\cT+\tiew,\yE);

  \draw[thick] (-0.7,\yA) -- (\cA,\yA); 
  \draw[thick] (\cA,\yB) -- (\cB,\yB);  
  \draw[thick] (\cB,\yC) -- (\cT,\yC);  
  \draw[thick] (\cT,\yE) -- (7.2,\yE);  

  \node[open]   at (\cA,\yA)   {};
  \node[open]   at (\cA,\yB)   {};
  \node[closed] at (\cA,\yCone){};

  \node[open]   at (\cB,\yB)   {};
  \node[open]   at (\cB,\yC)   {};
  \node[closed] at (\cB,\yCtwo){};

  \node[open]   at (\cT,\yC)   {};
  \node[open]   at (\cT,\yE)   {};
  \node[closed] at (\cT,\yD)   {};

  \draw[dotted,very thick] (0,\yal) -- (7.2,\yal);
  \draw[dotted,very thick] (0,\yone) -- (7.2,\yone);
  \node[left] at (0,\yal) {$\alpha/2$};
  \node[left] at (0,\yone) {$1-\alpha/2$};

  \draw[dash pattern=on 3pt off 2pt on 0.5pt off 2pt] (\cA,0) -- (\cA,\yal);
  \draw[dash pattern=on 3pt off 2pt on 0.5pt off 2pt] (\cT,0) -- (\cT,\yone);
  \node[below,align=center] at (\cA,-0.10)
    {$\bigl(\hat F_{n,\tau}^{\mathrm{CPS\text{-}GP}}\bigr)^{-1}(\alpha/2\mid x_{n+1})$};
  \node[below,align=center] at (\cT,-0.10)
    {$\bigl(\hat F_{n,\tau}^{\mathrm{CPS\text{-}GP}}\bigr)^{-1}(1-\alpha/2\mid x_{n+1})$};

  \node[anchor=west] at (7.25,\yA) {$\tfrac{\tau}{n+1}$};
  \node[anchor=west] at (7.25,\yB) {$\tfrac{1+\tau}{n+1}$};
  \node[anchor=west] at (7.25,\yC) {$\tfrac{2+\tau}{n+1}$};
  \node[anchor=west] at (7.25,\yD) {$\tfrac{3+\tau}{n+1}$};
  \node[anchor=west] at (7.25,\yE) {$\tfrac{4+\tau}{n+1}$};
\end{tikzpicture}
\caption{Conformal predictive distribution $\hat F_{n,\tau}^{\mathrm{CPS\text{-}GP}}(\cdot\mid x_{n+1})$ for fixed $\tau$.
On each open interval $(c_{(i)},c_{(i+1)})$ the value is $(i+\tau)/(n+1)$ (solid segments).
At a singleton threshold $c_{(i)}$ the actual value is $(i-1+2\tau)/(n+1)$ (filled circle), with open circles marking
the left and right limits. At a tie $c_{(i')}=\cdots=c_{(i'')}$, the value is $(i'-1+\tau(i''-i'+2))/(n+1)$ (filled circle),
again between the open-circle limits.}
\label{fig:cpd-gp}
\end{figure}

\begin{remark}
From the definition of the generalized inverse $F^{-1}(p)=\inf\{z:~F(z)\ge p\}$, we have
$$
\bigl(\hat F_{n,\tau}^{\mathrm{CPS\text{-}GP}}\bigr)^{-1}(p\mid x_{n+1})
= \min\Bigl\{\,c_{(r)}:~\hat F_{n,\tau}^{\mathrm{CPS\text{-}GP}}\bigl(c_{(r)}^{+}\mid x_{n+1}\bigr)~\ge~p\,\Bigr\},
\qquad p\in(0,1),
$$
where $F(z^{+}):=\lim_{t\downarrow 0}F(z+t)$.
Equivalently, since on $(c_{(i)},c_{(i+1)})$ the level is $(i+\tau)/(n+1)$,
$$
\bigl(\hat F_{n,\tau}^{\mathrm{CPS\text{-}GP}}\bigr)^{-1}(p\mid x_{n+1})
= c_{(i)},\qquad
i=\min\Bigl\{\,r\in\{1,\dots,n\}:~\frac{r+\tau}{n+1}~\ge~p\,\Bigr\}.
$$
If there is a tie block $c_{(i')}=\cdots=c_{(i'')}$, the left and right limits
at the block are $\frac{i'-1+\tau}{n+1}$ and $\frac{i''+\tau}{n+1}$,
respectively, and
$$
\bigl(\hat F_{n,\tau}^{\mathrm{CPS\text{-}GP}}\bigr)^{-1}(p\mid x_{n+1}) \;=\; c_{(i')}
\quad\text{for}\quad
p\in\Bigl(\tfrac{i'-1+\tau}{n+1},\,\tfrac{i''+\tau}{n+1}\Bigr].
$$
(Any index in $\{i',\dots,i''\}$ yields the same threshold since
$c_{(i')}=\cdots=c_{(i'')}$.)  
\end{remark}

By probabilistic calibration with randomized tie–breaking, the interval
$\Ccal_{n,\tau,1-\alpha}^{\mathrm{CPS\text{-}GP}}(X_{n+1})$ achieves exact marginal coverage:
$$
\P\left\{\,Z_{n+1}\in \Ccal_{n,\tau,1-\alpha}^{\mathrm{CPS\text{-}GP}}(X_{n+1})\,\right\}
= 1-\alpha,
\quad\text{averaged over } (\Dcal_n,X_{n+1},\tau).
$$
Replacing the right-open interval $[a,b)$ by the closed interval $[a,b]$
yields conservative coverage, with equality restored by boundary randomization
as in Proposition~\ref{prop:randomized-pit}.

\begin{remark}
Fixing $\tau$ yields a deterministic, stepwise predictive
distribution. Averaging over $\tau$,
$$
\bar F_n(z\mid x_{n+1}) \;=\; \E_{\tau}\left[\hat F_{n,\tau}^{\mathrm{CPS\text{-}GP}}(z\mid x_{n+1})\right],
$$
keeps the same jump locations and replaces each plateau level by its
mean over $\tau$; it does not remove discontinuities in
$z$. Exact PIT uniformity requires random~$\tau$.
\end{remark}

\begin{proposition}[Finite bounds]
\label{prop:cps-gp-finite}
Assume the thresholds $\{c_i\}$ are all distinct and fix $\tau\in(0,1)$.  
Then the endpoints of $\Ccal_{n,\tau,1-\alpha}^{\mathrm{CPS\text{-}GP}}(x_{n+1})$
are finite if and only if
$$
\frac{\alpha}{2} > \frac{\tau}{n+1}
\quad\text{and}\quad
\frac{\alpha}{2} \ge \frac{1-\tau}{n+1},
$$
or equivalently,
$$
\alpha \;\ge\; \frac{2}{n+1}\,\max\{\tau,\,1-\tau\},
$$
with strict inequality required when the maximum is $\tau$.
\end{proposition}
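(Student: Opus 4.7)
The strategy is to use the explicit stepwise form from Proposition~\ref{prop:stepwise} with distinct thresholds and the left-continuous generalized inverse $F^{-1}(p)=\inf\{z:\,F(z)\ge p\}$. Since the thresholds are distinct, on each open interval $(c_{(i)},c_{(i+1)})$ (with $c_{(0)}:=-\infty$, $c_{(n+1)}:=+\infty$) the CDF equals the constant $(i+\tau)/(n+1)$ for $i=0,1,\dots,n$, so that the left-hand limit at $-\infty$ is $\tau/(n+1)$ and the right-hand limit at $+\infty$ is $(n+\tau)/(n+1)$. These two extreme values govern the finiteness of both quantile endpoints.

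First, I would treat the lower endpoint at $p=\alpha/2$. Because $F$ is nondecreasing, $F^{-1}(\alpha/2)=-\infty$ if and only if $F(z)\ge \alpha/2$ for arbitrarily small $z$, which happens iff $\tau/(n+1)\ge \alpha/2$. Therefore the lower endpoint is finite iff $\alpha/2>\tau/(n+1)$ (strict, because equality still gives $F(z)\ge\alpha/2$ on the whole left tail). Next, for the upper endpoint at $p=1-\alpha/2$, we have $F^{-1}(1-\alpha/2)=+\infty$ iff $\{z:\,F(z)\ge 1-\alpha/2\}$ is empty, iff $\sup_z F(z)=(n+\tau)/(n+1)<1-\alpha/2$. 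So the upper endpoint is finite iff $(n+\tau)/(n+1)\ge 1-\alpha/2$, which simplifies to $\alpha/2\ge (1-\tau)/(n+1)$ (non-strict, since equality still provides a whole right tail where $F(z)=1-\alpha/2$).

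It remains to combine the two conditions and to verify the equivalence with $\alpha\ge \frac{2}{n+1}\max\{\tau,1-\tau\}$ (strict iff the maximum is $\tau$). This is a short case analysis. If $\tau\ge 1/2$, then $\max=\tau$ and the binding condition is the strict one, $\alpha>2\tau/(n+1)$, which automatically implies $\alpha\ge 2(1-\tau)/(n+1)$. If $\tau<1/2$, then $\max=1-\tau$ and the binding condition is $\alpha\ge 2(1-\tau)/(n+1)$, which strictly exceeds $2\tau/(n+1)$ and therefore implies the lower-endpoint inequality. The boundary case $\tau=1/2$ falls into the first branch.

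The only genuinely delicate point is to keep strict and non-strict inequalities straight, which comes from the asymmetry between the left and right tails of $F$: at the left tail one has a constant level $\tau/(n+1)$ approached from above, requiring a \emph{strict} separation $\alpha/2>\tau/(n+1)$ for the infimum to be finite, whereas at the right tail the constant $(n+\tau)/(n+1)$ is actually attained on $(c_{(n)},+\infty)$, so equality is admissible. This asymmetry is exactly what produces the ``strict when the maximum is $\tau$'' clause in the equivalent form.
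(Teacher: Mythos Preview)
Your proposal is correct and follows essentially the same approach as the paper: both arguments read off the leftmost plateau level $\tau/(n+1)$ and the rightmost plateau level $(n+\tau)/(n+1)$ from the stepwise form of Proposition~\ref{prop:stepwise}, then translate finiteness of each quantile endpoint into the corresponding inequality on $\alpha/2$. Your treatment is in fact slightly more complete than the paper's, since you spell out the case analysis for the equivalent $\max\{\tau,1-\tau\}$ formulation and explain carefully why the lower-endpoint inequality is strict while the upper one is not.
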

\begin{proof}
  See Appendix~\ref{sec:cps-gp-finite-bounds}.
\end{proof}

This shows that for small $n$ or small $\alpha$, CPS intervals may
have infinite endpoints, a common feature of conformal methods.

\paragraph{Computational complexity}

We now examine the computational complexity of CPS--GP.

\begin{proposition}
\label{prop:complexity}
Let $n$ denote the sample size and $m$ the number of prediction points.  
The CPD based on GP interpolation can be computed with
$$
O(n^3) \text{ precomputation} \quad\text{and}\quad O(n^2 + n\log n) \text{ per prediction point.}
$$
The total computational cost is therefore
$$
O(n^3) \;+\; m\,O(n^2 + n\log n).
$$
If only interval endpoints are needed (without constructing the full stepwise CPD),
the sorting step can be omitted, reducing the per–point cost to $O(n^2 + n)$.
\end{proposition}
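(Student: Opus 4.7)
The plan is to decompose the computation into three phases and bound each separately.

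First I would handle the global precomputation: factorize the kernel matrix $K_n = (k(x_i,x_j))_{1 \le i,j \le n}$ via a Cholesky decomposition at cost $O(n^3)$, and cache the auxiliary quantities that depend only on $\Dcal_n$ (for instance $K_n^{-1} z$ together with the associated dual coefficients). These are evaluated once, independently of the number of test points.

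Next, for each test location $x_{n+1}$, I would bound the cost of computing the posterior mean and standard deviation $m_n(x_{n+1})$ and $\sigma_n(x_{n+1})$, together with the $n$ slope--threshold pairs $(\beta_i, c_i)$ of Proposition~\ref{prop:affine-diff}. The cross-covariance vector $k_n(x_{n+1}) \in \R^n$ requires $O(n)$ kernel evaluations, and one triangular solve against the cached Cholesky factor yields $m_n(x_{n+1})$ and $\sigma_n(x_{n+1})$ at cost $O(n^2)$. For the $(\beta_i, c_i)$, the key observation is that the augmented Gram matrix $K_{n+1}$ differs from $K_n$ by a bordering row and column, so block-matrix (Schur-complement) inversion expresses $K_{n+1}^{-1}$ in terms of $K_n^{-1}$, $k_n(x_{n+1})$, and $\sigma_n^2(x_{n+1})$. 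Substituting this decomposition into the explicit LOO formulas derived in Appendix~\ref{app:AiBi-deriv} reduces each $(\beta_i, c_i)$ to $O(n)$ arithmetic operations; summing over $i$ gives $O(n^2)$ per test point.

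Finally, assembling the stepwise CPD of Proposition~\ref{prop:stepwise} requires sorting the $n$ thresholds $\{c_i\}$ at cost $O(n \log n)$, followed by a linear pass to assign the levels on each interval. Summing all contributions yields a per-point cost of $O(n^2 + n \log n)$ and a total cost of $O(n^3) + m \cdot O(n^2 + n \log n)$. If only the endpoints of a central prediction interval are required, the sort can be replaced by two quickselect queries on $\{c_i\}$, each running in $O(n)$ time on average, which reduces the per-point cost to $O(n^2 + n)$.

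The main technical point, and the only step that is not completely routine, is to verify rigorously that the $n$ LOO computations can be amortized to $O(n^2)$ rather than $O(n^3)$ in aggregate. This requires combining the block-matrix inversion for $K_{n+1}^{-1}$ with the standard closed-form LOO update formulas for GP interpolants, so that no inner loop over $i$ triggers an $O(n^2)$ linear solve; this is precisely what the expressions of Appendix~\ref{app:AiBi-deriv} enable.
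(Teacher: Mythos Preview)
Your proposal is correct and follows essentially the same decomposition as the paper's proof: $O(n^3)$ Cholesky precomputation (including $K_n^{-1}z_{1:n}$ and $\operatorname{diag}(K_n^{-1})$), an $O(n^2)$ triangular solve per test point for $u=K_n^{-1}k_*$, then threshold computation and an $O(n\log n)$ sort (or $O(n)$ quickselect for endpoints only). The paper is slightly sharper in noting that, once $u$, $v$, $m_n$, $(K_n^{-1}z_{1:n})_i$, and $(K_n^{-1})_{ii}$ are available, each $c_i$ costs $O(1)$ rather than the $O(n)$ you state, so all $n$ thresholds together cost $O(n)$; this does not affect the dominant $O(n^2)$ per-point term, but it is worth tightening in your write-up.
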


Once the Cholesky factorization of $K_n$ is computed at cost $O(n^3)$,
evaluating the GP posterior mean and variance at a new location $x_{n+1}$ costs $O(n^2)$.  
\cpsgp involves the same $O(n^2)$ operations, plus $O(n)$ for computing
the thresholds $c_i$ and $O(n\log n)$ for sorting them.
Thus, the dominant per–point complexity remains $O(n^2)$,
with an additional $O(n\log n)$ term for sorting beyond standard GP prediction.

\maybeClearpage

\section{Bayesian calibration using a generalized normal distribution}
\label{sec:calibr-using-gener}

\subsection{Generalized normal model for prediction errors}
\label{sec:gener-norm-model}

In Section~\ref{sec:cps-gp-interpolation}, leave--one--out residuals were treated
nonparametrically to construct a conformal predictive system. An alternative is
to regard these residuals as samples from a parametric model and to apply a
post--hoc calibration of the GP posterior. The posterior mean is kept as the
point predictor, while the predictive variability is adjusted by fitting a
generalized normal model to the residuals. In this way, one obtains explicit
control over the tail behavior and predictive distributions that are, in
principle, easier to use within GP--based sequential design algorithms. This
parametric modeling of the residuals forms the basis of the \bcrgp
method (Bayesian--calibrated residuals for Gaussian processes).

In more detail, let $X\sim\mu$ be a random input, where $\mu$ is a fixed design
measure on~$\XX$. As above, we
consider a Gaussian process prior on the unknown function
$f:\XX\to\R$, $\xi\sim\GP(m,k)$, and regard $f$ as a fixed
realization of~$\xi$. Let $X_1,\ldots,X_n \stackrel{\text{i.i.d.}}{\sim}\mu$
be input locations. The observed dataset is then
$\Dcal_n=\{(X_i, f(X_i))\}_{i=1}^n$. For any $x\in\XX$, the GP posterior has mean
$m_n(x)$ and standard deviation $\sigma_n(x)$. Define the normalized prediction
error
$$
R_n(x,f(x))=\frac{f(x)-m_n(x)}{\sigma_n(x)},\qquad
\text{with }R_n(x,f(x))=0\text{ if }\sigma_n(x)=0.
$$
The \emph{$\mu$-marginal distribution} of the normalized error
is the distribution of $R_n(X,f(X))$ when $X\sim\mu$ and
$\Dcal_n$ is fixed. This is the distributional object that will be
calibrated.

For context, under the Bayesian GP model with fixed hyperparameters and for any
fixed $x$ with $\sigma_n(x)>0$, the posterior predictive error satisfies
$$
  R_n(x,\xi(x)) \mid (x,\Dcal_n) \;\sim\; \mathcal N(0,1).
$$
Thus $R_n(x,\xi(x))$ is a pivot: its conditional distribution does not
depend on the parameters of the GP prior $\xi$ (with the convention
$R_n(x,\cdot)=0$ when $\sigma_n(x)=0$).

This pointwise pivot property does not imply that the normalized
residual $R_n(X,f(X))$ is Gaussian when $f$ is a fixed but unknown sample path
and $X\sim\mu$. For such an $f$, $R_n(x,f(x))$ is deterministic at each $x$, and
the only randomness comes from sampling $X$ according to $\mu$. The resulting
distribution of $R_n(X,f(X))$ aggregates these deterministic values across $x$
and generally deviates substantially from $\mathcal N(0,1)$, especially in the
tails, as illustrated in Figure~\ref{fig:qq_plot}.

Nevertheless, normalized residuals remain the natural object to model: the pivot
property justifies using them as scale--free standardized errors, while their
$\mu$-marginal distribution, though not Gaussian for a fixed $f$, captures the
relevant predictive variability to be calibrated. Our objective is to build a
parametric approximation of this distribution.

In practice however, the $\mu$-marginal distribution of $R_n(X,f(X))$, $X\sim\mu$,
is not directly observable.  To approximate it, we use the
leave--one--out standardized residuals
$$
R_{n,-i}=\frac{Z_i-m_{n,-i}(X_i)}{\sigma_{n,-i}(X_i)},\qquad i=1,\ldots,n,
$$
where $m_{n,-i}$ and $\sigma_{n,-i}$ are computed from
$\Dcal_n\setminus\{(X_i,Z_i)\}$. We adopt the working assumption that
the conditional empirical distribution of $\{R_{n,-i}\}_{i=1}^n$,
given $\Dcal_n$, provides a proxy for the $\mu$-marginal distribution
of $R_n(X,f(X))$.  This assumption is plausible when the GP fit is
stable so that replacing $(m_n,\sigma_n)$ by
$(m_{n,-i},\sigma_{n,-i})$ has negligible global effect. Intuitively,
each $R_{n,-i}$ is an out-of-sample standardized error at a location
distributed like a future input.

To obtain a tractable post-hoc calibration, we fit a parametric model to this
proxy distribution. Specifically, we use the generalized normal (GN) family
$$
\mathcal{GN}(\beta,0,\lambda),\qquad \beta>0,~\lambda>0,
$$
with density given in Appendix~\ref{apd:gnp}. The scale $\lambda$
controls dispersion, while the shape parameter $\beta$ tunes the tail
behavior: $\beta=2$ corresponds to the Gaussian case, and $\beta=1$ to
the Laplace case. The GP posterior mean $m_n$ is retained as the point
predictor, and uncertainty is recalibrated through the GN fit. This
yields a closed-form predictive distribution that is easier to handle
than a purely empirical residual distribution and integrates naturally into
GP-based sequential search algorithms.

Figure~\ref{fig:qq_plot} illustrates the motivation: empirical quantiles
of leave-one-out residuals typically diverge from Gaussian quantiles,
while a GN fit adapts to tail deviations more accurately.

\begin{figure}[h!]
  \centering
  \includegraphics[width=\textwidth]{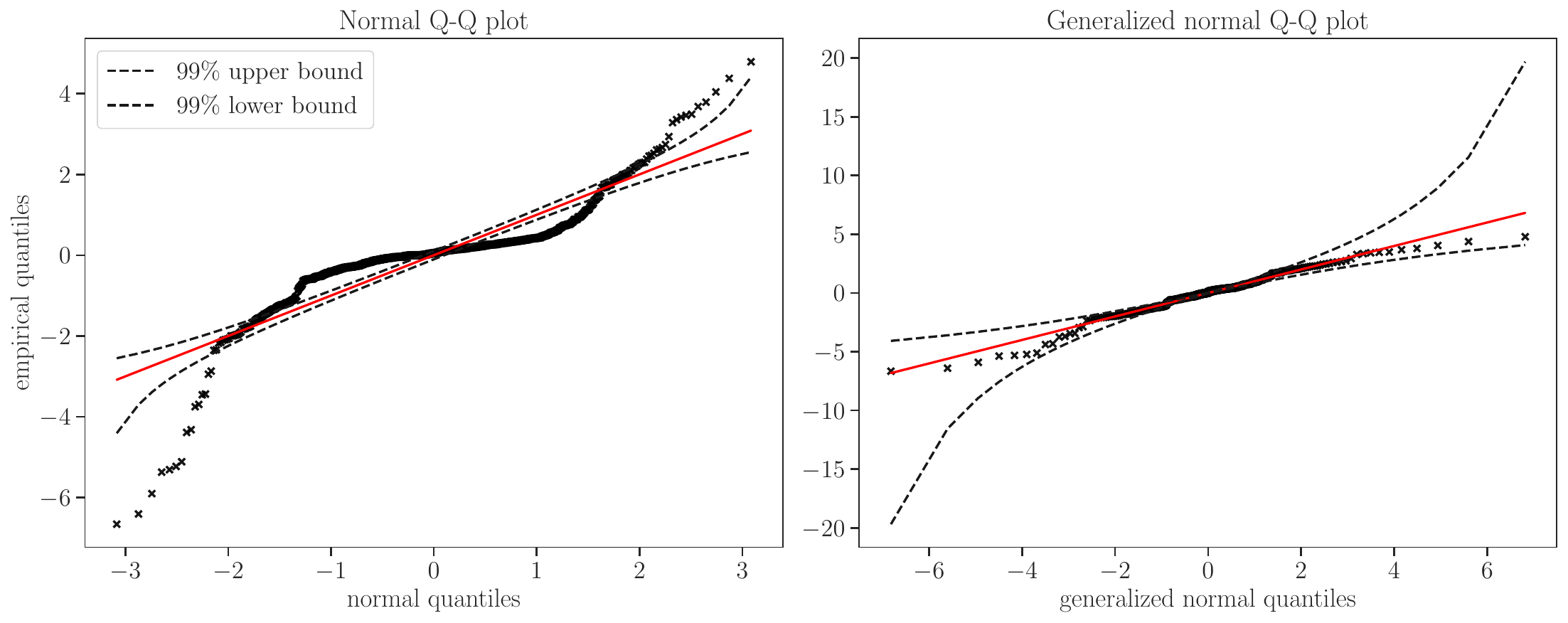}
  \caption{Empirical quantiles of standardized residuals
  $R = (Z - m_n(X))/\sigma_n(X)$ at $1000$ test points, compared with those of
  the standard normal and a fitted generalized normal distribution. Shaded bands
  indicate $99\%$ pointwise confidence intervals. Example: Goldstein--Price
  function in $d=2$, $n=40$.}
  \label{fig:qq_plot}
\end{figure}

\paragraph{Remarks} 
If the design points are not exactly $\mu$-distributed, the same idea can be 
applied by reweighting the residuals according to importance weights that map 
the empirical design distribution to~$\mu$.  

Alternative residual families, such as the Student-$t$, could also be 
considered. The generalized normal is adopted here as a convenient 
two-parameter choice, spanning Gaussian to Laplace tails, with closed-form 
CDF and straightforward estimation.

\subsection{Bayesian parameter selection}
\label{sec:bayes-param-select}

We now detail the Bayesian parameter selection strategy that completes the 
\bcrgp method.
The generalized normal model involves two free parameters, the shape $\beta$ 
and scale $\lambda$, which we group as $\theta=(\beta,\lambda)$. These must be 
estimated from the leave-one-out residuals to obtain a predictive distribution. 
Rather than relying on point estimation (e.g., maximum likelihood), we adopt a 
Bayesian framework: a posterior distribution over $\theta$ is constructed from 
the data, and a selection rule is applied to choose a representative parameter. 
This approach connects to the idea of \emph{Bayesian tolerance intervals} 
\citep{meeker2017statistical}, where posterior draws are used to control 
conservatism in uncertainty quantification. Two complementary rules are 
considered below, depending on whether the goal is conservative coverage or 
$\mu$-probabilistic calibration.

Formally, we place weakly informative priors
$$
  \beta \sim \U(0,a), 
  \qquad 
  \lambda \sim \U(0,b),
$$
and define the posterior given the LOO residuals $\{R_{n,-i}\}_{i=1}^n$ as
$$
  p(\theta \mid \{R_{n,-i}\}_{i=1}^{n})
  \;\propto\;
  \prod_{i=1}^n p(R_{n,-i} \mid \theta)\,
  \one_{\{0<\beta<a\}}\,\one_{\{0<\lambda<b\}}.
$$
Posterior draws $\theta_j=(\beta_j,\lambda_j)$, $j=1,\ldots,K$, are obtained by 
MCMC sampling. Two rules for selecting $\theta^*$ are then defined.

\paragraph{Rule 1: Conservative prediction}
For each posterior draw $\theta_j=(\beta_j,\lambda_j)$, compute the variance
$$
v_j \;=\; \Var\big(\mathcal{GN}(\theta_j)\big) 
\;=\;
\lambda_j^{2}\,\frac{\Gamma(3/\beta_j)}{\Gamma(1/\beta_j)}.
$$
The values $\{v_j\}_{j=1}^K$ form a Monte Carlo sample from the posterior
distribution of $\Var(\mathcal{GN}(\theta))$ given the LOO residuals.
Fix a tolerance level $\delta\in(0,1)$ (small in practice) and select
$\theta^*$ such that
$v^*=\Var(\mathcal{GN}(\theta^*))$ is the $(1-\delta)$-quantile of this
distribution.  

The rationale is that the true parameter $\theta$ is unknown. Selecting the
$(1-\delta)$-quantile $v^*$ of the posterior variance provides an
upper credible bound: with posterior probability at least $1-\delta$, the
variance of the true residual distribution is no larger than $v^*$. 
Equivalently, the resulting predictive distribution is at least as dispersed as
the true one with probability $1-\delta$. Only a fraction $\delta$ of posterior
draws correspond to even more dispersed (more conservative) models. Hence,
smaller values of $\delta$ yield more pessimistic (wider) predictions, whereas
larger values produce less conservative ones. This quantile rule parallels the
construction of Bayesian tolerance bounds in \cite{meeker2017statistical}.

\begin{figure}[h!]
  \centering
  \includegraphics[width=\textwidth]{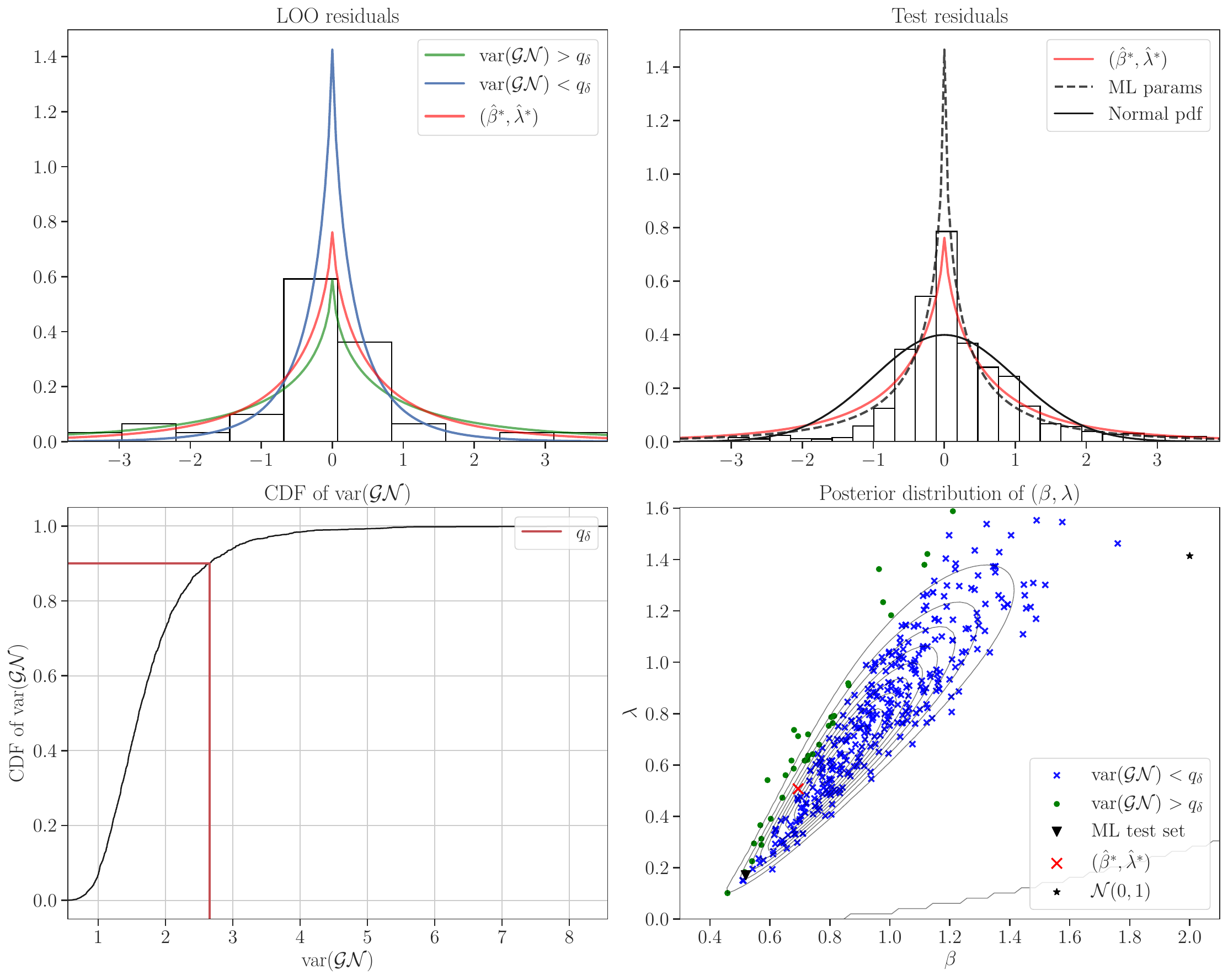}
  \caption{Illustration of the \bcrgp method on the Goldstein--Price 
  function with $n=40$ observations. \textbf{First row}: Left-- Histogram of 
  the LOO normalized residuals overlaid with predictive density functions 
  corresponding to various parameter configurations. The red curve corresponds 
  to the pdf with parameters $(\hat{\beta}^*, \hat{\lambda}^*)$; the green and 
  blue curves represent more and less conservative forecasts, respectively. 
  Right-- Histogram of a sample of normalized residuals on a test set, where 
  the dashed line represents the maximum likelihood fit, the red curve the pdf of the 
  selected model and the black curve to the normal model. \textbf{Second row}:  
  Left-- CDF of variances from 3000 posterior samples. The selected $1-\delta$ 
  quantile is marked. Right: posterior samples in the $(\beta, \lambda)$ space, 
  colored by variance: more conservative than the selected pair $(\hat{\beta}
  ^*, \hat{\lambda}^*)$ (green), less conservative (blue). The selected pair $
  (\hat{\beta}^*, \hat{\lambda}^*)$ is shown as a red cross; the maximum 
  likelihood estimate (black triangle) is unavailable in practice; the 
  parameter for normal model ($(\beta, \lambda) = (2, \sqrt{2})$) is shown as 
  the black star.}
  \label{fig:posterior_params}
\end{figure}

\paragraph{Rule 2: Probabilistic calibration}
Draw posterior samples $\{\theta_j\}_{j=1}^K$.  
For any parameter $\theta$, let $F_\theta$ denote the CDF of
$\mathcal{GN}(\theta)$ and $F_\theta^{-1}$ its quantile function.  

Consider two models $F_{\theta_i}$ (generating) and $F_{\theta_j}$ (forecasting).
For a residual $R \sim F_{\theta_i}$, the PIT under $F_{\theta_j}$ is
$U = F_{\theta_j}(R)$, whose CDF is
$$
G_{j\mid i}(u)
=
\P\big(F_{\theta_j}(R)\le u\big)
=
F_{\theta_i}\big(F_{\theta_j}^{-1}(u)\big),
\qquad u\in[0,1].
$$
The corresponding cross-posterior KS--PIT discrepancy is
$$
D_{j\mid i}
=
\sup_{u\in[0,1]}
\lvert G_{j\mid i}(u)-u\rvert
=
\sup_{u\in[0,1]}
\Big\lvert F_{\theta_i}\big(F_{\theta_j}^{-1}(u)\big)-u\Big\rvert,
\qquad i\neq j.
$$
Since $u = F_{\theta_j}(z)$ is a one-to-one mapping on~$\R$, this is
equivalently the Kolmogorov distance between the two CDFs:
$$
D_{j\mid i}
=
\sup_{z\in\R}
\lvert F_{\theta_i}(z)-F_{\theta_j}(z)\rvert.
$$

Fix a small level $\delta\in(0,1)$ and compute
$$
T_j(1-\delta)
=
\mathrm{Quantile}_{\,1-\delta}\big(\{D_{j\mid i}\mid i\neq j\}\big).
$$
Here, $T_j(1-\delta)$ is an upper posterior credible bound on the
calibration error of model $F_{\theta_j}$: with posterior probability at least
$1-\delta$, the KS--PIT discrepancy between $F_{\theta_j}$ and a
posterior-plausible generating model does not exceed $T_j(1-\delta)$.  
This quantity controls the worst-case lack of calibration over the central
$(1-\delta)$ fraction of posterior uncertainty.

Select
$$
\theta^* = \argmin_{j=1,\ldots,K} T_j(1-\delta).
$$
The selected parameter $\theta^*$ minimizes this upper bound, yielding the
forecast that remains closest to uniform calibration under nearly all
posterior-plausible residual distributions.  
When the generalized normal family is well specified and the data are
informative, the posterior concentrates and the selected $\theta^*$ yields PIT
values close to uniform.  
Under model misspecification, the rule acts as a robust Bayesian procedure by
controlling a high-probability upper bound on the calibration loss.

\begin{remark}[KS--PIT, sharpness, and posterior constraint.]
Section~\ref{sec:bg-pit} shows that $\mu$-proba\-bilistic calibration alone does
not preclude uninformative predictions: even for a constant $f$, many
predictive families can yield $U_{\hat F_n}^{f,\mu}\sim \U(0,1)$ while differing
substantially in dispersion. KS--PIT targets calibration only and ignores
sharpness.

Rule~2 optimizes a calibration discrepancy \emph{under the posterior constraint}:
the search is restricted to residual models supported by the posterior built
from the LOO residuals, so the selected distribution is not arbitrary. This does
not, however, control sharpness or task-specific risk. In practice, one may
augment or replace the KS--PIT objective by a proper scoring rule (e.g., NLPD,
CRPS, SCRPS) or by a calibration functional tailored to the application
(e.g., excursion-probability calibration, tail coverage at level $1-\alpha$).
Formally, the same quantile-robust selection applies: replace $D_{j\mid i}$ by a
loss $L_{j\mid i}$ of interest and minimize an upper posterior quantile
$\mathrm{Quantile}_{\,1-\delta}(\{L_{j\mid i}\mid i\neq j\})$ over $j$, or
combine Rule~2 with Rule~1 to enforce a lower bound on dispersion (equivalently,
an upper bound on sharpness) while keeping PIT-based calibration under posterior
constraints.
\end{remark}

\paragraph{Predictive distribution}
Once a parameter $\theta^*=(\beta^*,\lambda^*)$ has been selected, define
$G_{\beta,\lambda}$ as the CDF of the $\mathcal{GN}(\beta,0,\lambda)$
distribution and $g_{\beta,\lambda}$ as its density. For any $x\in\XX$, the
\bcrgp predictive distribution is then
\begin{equation}
\label{eq:bcrgp-cdf}
\hat F_n^{\mathrm{BCR}}(z\mid x)
\;=\;
G_{\beta^*,\lambda^*}\left(
  \frac{z - m_n(x)}{\sigma_n(x)}
\right),
\qquad z\in\R,
\end{equation}
with corresponding predictive density
\begin{equation}
\label{eq:bcrgp-pdf}
\hat f_n^{\mathrm{BCR}}(z\mid x)
\;=\;
\frac{1}{\sigma_n(x)}\,
g_{\beta^*,\lambda^*}\left(
  \frac{z - m_n(x)}{\sigma_n(x)}
\right).
\end{equation}
The GP posterior mean $m_n(x)$ is preserved as the point predictor, while the
uncertainty is recalibrated through the selected parameter $\theta^*$, which
governs both tail shape and dispersion.

\maybeClearpage
\section{Numerical study}
\label{sec:numerical}

\subsection{Outline}
\label{sec:outline}

We conduct a numerical comparison between \bcrgp~and \cpsgp, together
with the full conformal approach of \citet{Papadopoulos_2024} and the Jackknife+
for GP (\jpgp) method of \citet{jaber_conformal_2024}, on a set of benchmark 
functions. Section~\ref{sec:experimental-setup} details the experimental 
protocol and evaluation metrics. 
Section~\ref{sec:variance_tolerance} studies the influence of the tolerance 
parameter~$\delta$ in the variance-based \bcrgp. 
Section~\ref{sec:design_size} investigates the effect of the number of design 
points on empirical coverage and interval width, with emphasis on parameter 
selection in \cpsgp. 
Section~\ref{sec:res_coverage} compares the coverage and width of prediction 
intervals across methods, and 
Section~\ref{sec:res_pred_distr} analyzes their predictive distributions using 
KS--PIT and SCRPS.

\subsection{Experimental setup}
\label{sec:experimental-setup}

Predictions are based on a Gaussian process model~$\xi$
with a constant mean function and an anisotropic Matérn covariance kernel.
Uncertainty quantification is then constructed using one of four procedures:
\bcrgp, \cpsgp, \jpgp, or \fcp.

All methods are implemented using the Gaussian process micro 
package (\texttt{gpmp}) \citep{gpmp2025} for Gaussian-process computations in 
Python\footnote{The code used to generate all results in this 
section will be made publicly available in a future version of this article.}.
The implementation of \fcp\ adapts the public code released by 
\citet{Papadopoulos_2024}.

In the experiment below, the GP prior has a constant mean $m$ and an
anisotropic Matérn kernel covariance. The anisotropic Matérn kernel is defined 
as
$$
  k_{\sigma, \nu, \rho}(x, y) = \sigma^2 \kappa_{\nu}(h), \qquad
  h^2 = \sum_{i=1}^d \frac{(x_{[i]} - y_{[i]})^2}{\rho_i^2}, \quad x, y \in \R^d,
$$
where $\sigma^2$ is the variance parameter,
$\rho = (\rho_1, \ldots, \rho_d)$ are component-wise lengthscales, and
$\kappa_\nu$ denotes the Matérn correlation function as defined 
in~\cite[Chapter~2.7]{stein_interpolation_1999}. The smoothness parameter
is set to $\nu = p + 1/2$ with $p \in \N^\star$.

The GP hyperparameters $(m, \sigma^2, \rho)$ are selected by maximum likelihood 
on the dataset~$\Dcal_n$. In some experiments, they are instead selected on an 
independent dataset to assess the effect of data reuse on 
calibration.

For each test function of input dimension~$d$, a design set~$\Dcal_n$ of size 
proportional to~$d$ is constructed, and the GP posterior is evaluated on 
$n_{\mathrm{test}} = 4000$ test inputs sampled uniformly from the domain~$\XX$.
Each experiment is repeated $100$ times. The hyperparameters 
of the posterior distribution of $(\beta, \lambda)$ in \bcrgp\ are fixed to 
$a = 10$ and $b = 10$. The analytical definitions and properties of the test 
functions are summarized in Table~\ref{tab:test_functions} in 
Appendix~\ref{ap:test_functions}.

As an initial illustration, Figure~\ref{fig:one_d_example} presents a 
one-dimensional comparison of the prediction intervals produced by 
\bcrgp with variance-based selection, and \cpsgp.

\begin{figure}[htbp]
  \centering
  \includegraphics[width=\textwidth]{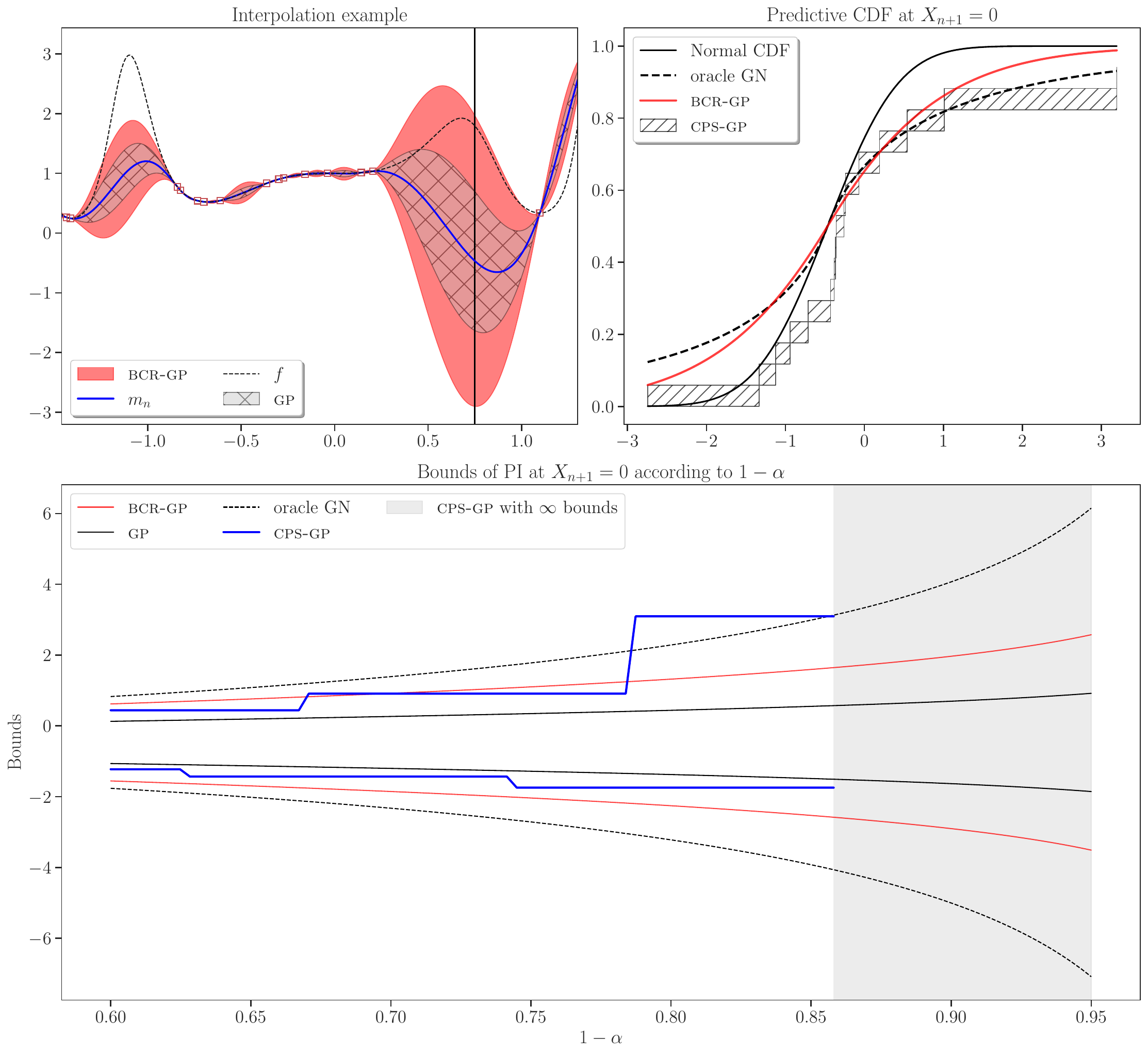}
  \caption{
    \textbf{Top left:} Prediction intervals constructed from the
    GP posterior distribution and from \bcrgp\ at confidence level
    $1 - \alpha = 0.9$. \bcrgp\ uses the variance of the generalized
    normal distribution for selection, with $\delta = 0.1$.
    \textbf{Top right:} Predicted CDFs at $x = 0.75$, comparing the
    GP posterior, the CDF from \bcrgp\ (red), the stepwise CDF from
    \cpsgp\ (black hatches), and an oracle CDF (black) obtained
    from a generalized normal model fitted on a test grid of
    $n_{\mathrm{test}} = 2000$ points. This dataset exhibits strong
    miscalibration of the GP posterior predictive distributions.
    \textbf{Bottom:} Interval bounds as a function of $1 - \alpha$.
    The GP posterior underestimates uncertainty across confidence
    levels, while \bcrgp\ and \cpsgp\ produce larger
    intervals. \cpsgp\ yields unbounded interval widths for
    $1 - \alpha \gtrsim 0.85$.}
  \label{fig:one_d_example}
\end{figure}

\subsection{Influence of the tolerance level for variance-based \bcrgp}
\label{sec:variance_tolerance}

We study the effect of the tolerance~$\delta$ on the variance-based
version of \bcrgp, where selection relies on the variance of
$\mathcal{GN}(\beta,0,\lambda)$, Section~\ref{sec:bayes-param-select},
Rule~1, on~$\Dcal_n$. Results are shown in
Figure~\ref{fig:conf_level}.

For small tolerances, \bcrgp\ produces conservative prediction
intervals at all confidence levels. As~$\delta$ increases, intervals
become narrower and undercoverage appears, particularly in the tails
(e.g., at 90–95\% central levels).  The transition from conservative
to optimistic behavior depends on the problem, the particular
design and the GP specification. In these experiments, values
of~$\delta$ around~0.1 provided a good balance between coverage and
interval width. When a single value must be fixed in practice, $\delta$ can be tuned by 
splitting $\Dcal_n$ into calibration and validation subsets, or via a
$K$-fold cross-validation scheme applied to $\Dcal_n$.

\begin{figure}[htbp]
  \centering
  \begin{subfigure}[b]{\textwidth}
    \centering
    \includegraphics[width=\textwidth]{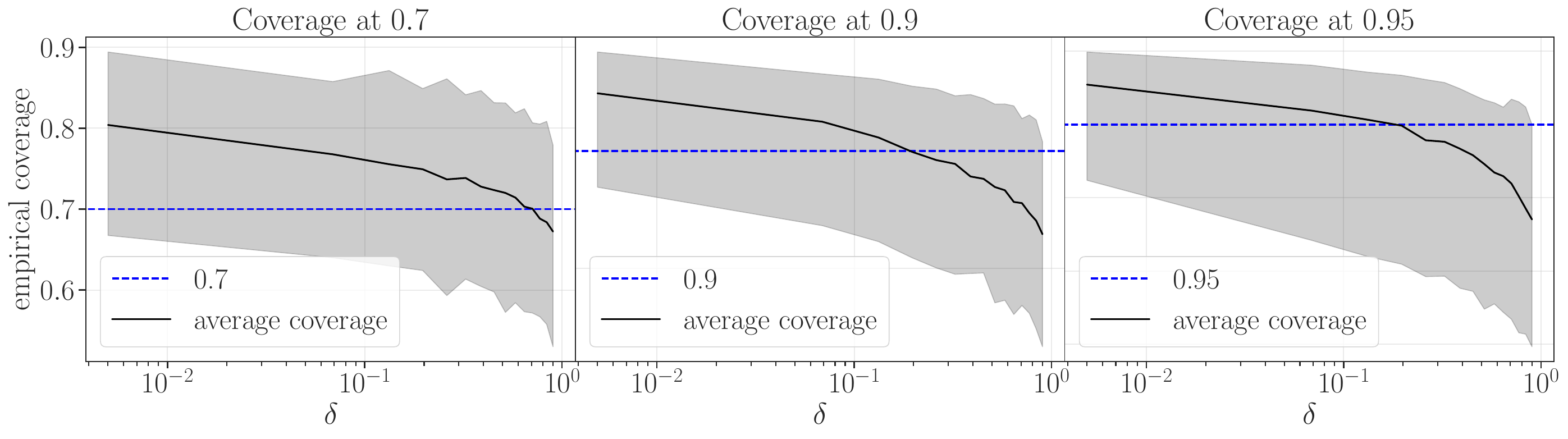}
    \caption{Empirical coverage for $\delta \in [0,1]$ for the Goldstein--Price function.}
    \label{fig:conf_level_goldstein}
  \end{subfigure}

  \begin{subfigure}[b]{\textwidth}
    \centering
    \includegraphics[width=\textwidth]{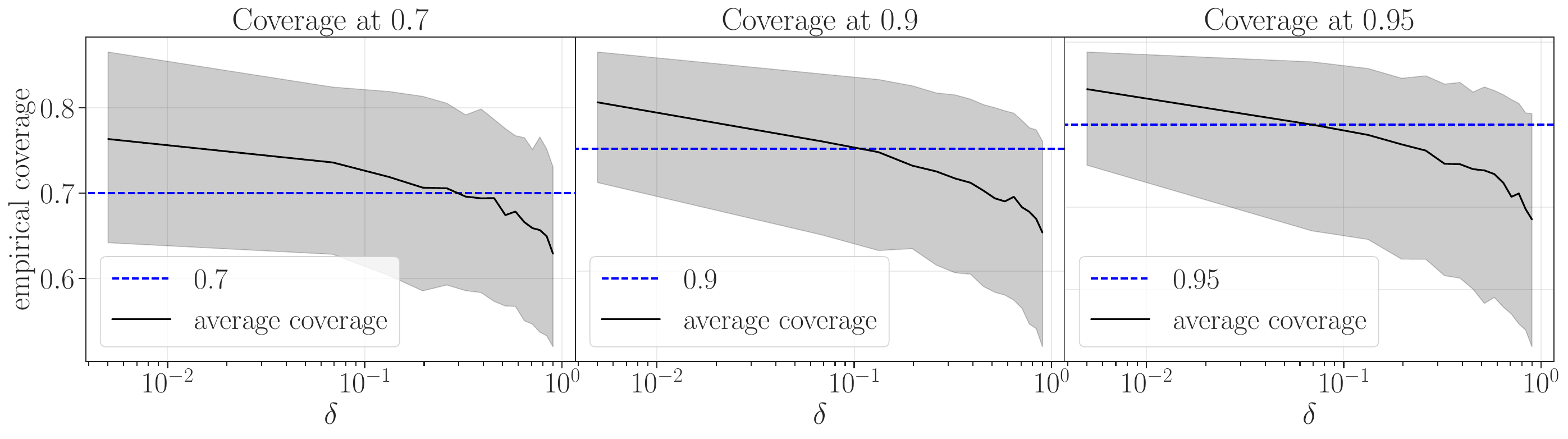}
    \caption{Empirical coverage for $\delta \in [0,1]$ for the Dixon--Price function.}
    \label{fig:conf_level_dixon}
  \end{subfigure}

  \caption{Empirical coverage of 70\%, 90\%, and 95\%  prediction intervals obtained 
  with the variance-based \bcrgp\ method for $\delta \in [0,1]$. 
  The regularity parameter is fixed to $p = 2$. 
  The top panel corresponds to the Goldstein--Price function ($n = 40$), and the 
  bottom panel to the Dixon--Price function ($n = 60$). 
  Evaluation is performed on 4000 test points. 
  The black curve indicates the mean coverage across 100 repetitions, and the 
  shaded area denotes the 0.05 and 0.95 quantiles.}
  \label{fig:conf_level}
\end{figure}

\subsection{Effect of design set size}
\label{sec:design_size}

This section examines how the number of design points~$n$ influences empirical
coverage and the average width of prediction intervals. We first analyze this
effect in relation to parameter selection strategies in the \cpsgp\ method,
since its theoretical guarantees depend on the independence between parameter
selection and prediction.

\paragraph{Parameter selection strategies for \cpsgp}

The experiment was conducted on
several test functions; results are reported here for the Hartmann6 function,
which represents the most challenging case for \cpsgp\ among those considered.
This benchmark is known to require a large number of design points for accurate
parameter selection and approximation.

For each dataset~$\Dcal_n$, kernel parameters are either  
(i)~selected using a subset of~$\gamma n$ observations and predictions computed
on the remaining $(1-\gamma)n$ points (split strategy), or  
(ii)~selected and used on the full dataset~$\Dcal_n$.  
As a reference, an oracle configuration is also considered, where parameters
are selected on an independent dataset of~$10d$ points.
Figure~\ref{fig:n_values_fixed_param} reports empirical coverages obtained on an
independent test grid.

When parameters are selected on an independent dataset, \cpsgp\ achieves
empirical coverage whose average over repetitions matches the intended
confidence levels, consistent with its theoretical guarantees. In contrast,
selecting parameters on the same data used for prediction leads to poorer
coverage, especially for small~$n$. This effect diminishes as~$n$ increases:
beyond roughly~$20d$, results with or without data splitting become nearly
identical. The improvement reflects the decreasing sensitivity of the selected
hyperparameters to sample perturbations as the design size grows, which reduces
the impact of the exchangeability violation.

It is worth noting that the Hartmann6 function represents a difficult case for
\cpsgp, as parameter selection tends to be unstable for small~$n$. For most
other test functions considered in this study, \cpsgp\ is less sensitive to the
design size and achieves calibration more easily.

Because data splitting reduces the number of points available for both parameter 
selection and prediction, some loss in performance is expected. 
Table~\ref{table:rmse_cps_gp} reports average RMSE and KS--PIT values for 
different split ratios~$\gamma$. As expected, RMSE increases when fewer points 
are used for prediction, with the largest degradation at $\gamma=0.5$.
Calibration, measured by KS--PIT, deteriorates most for $\gamma=0.8$. 
Using the same dataset~$\Dcal_n$ for both steps yields RMSE values close to the 
oracle configuration and calibration comparable to the best split strategy, 
though the average coverage remains biased.
In practice, since any deterioration in RMSE relative to the standard GP model 
is undesirable, we adopt this configuration in the remainder of the paper to 
ensure a fair comparison across methods.

\begin{figure}[htbp]
  \centering
  \includegraphics[width=0.95\textwidth]{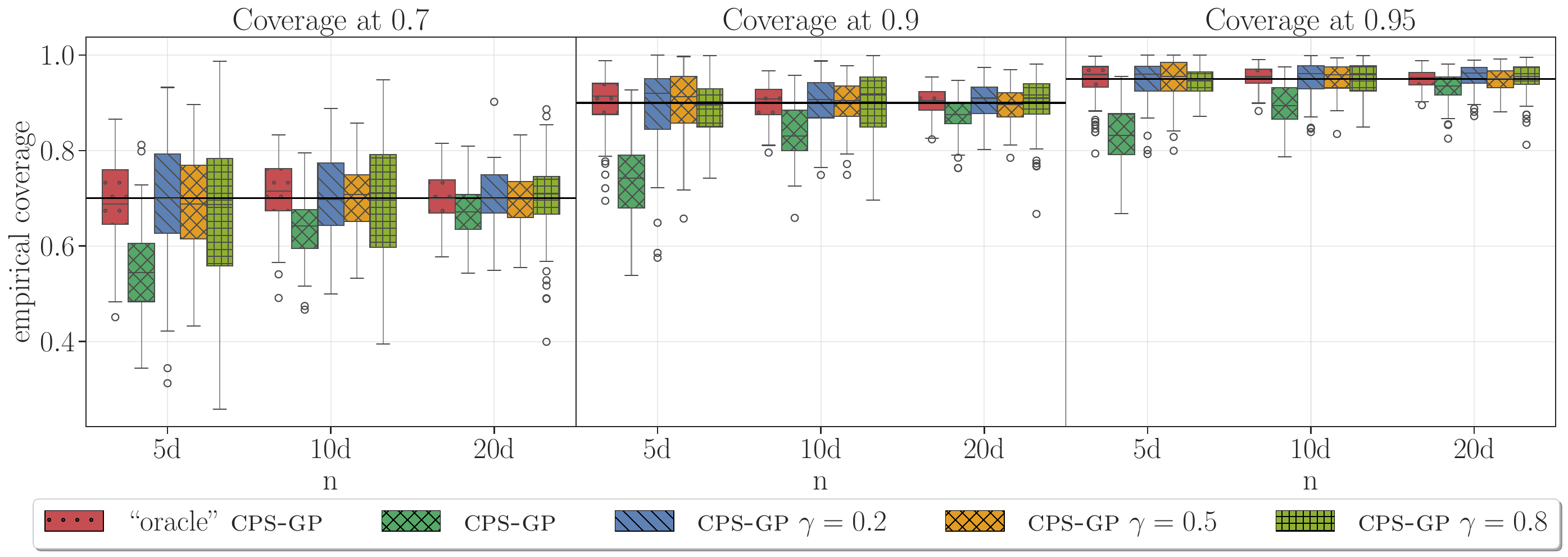}
  \caption{Empirical coverage of 70\%, 90\%, and 95\% prediction
    intervals for the Hartmann6 function as a function of the design
    set size~$n$. Horizontal lines indicate nominal coverage.
    Evaluation is based on 4000 test points and 100 repetitions.  The
    GP regularity is fixed at $p=2$.  \cpsgp\ is applied using either
    a split of $\gamma n$ points for parameter selection and
    $(1-\gamma)n$ for prediction, or the same dataset~$\Dcal_n$ for
    both steps. The oracle \cpsgp\ variant uses parameters
    pre-selected on an independent dataset of~$10d$ points. }
  \label{fig:n_values_fixed_param}
\end{figure}

\begin{table}[htbp]
  \centering
  \setlength\tabcolsep{4pt}
  \begin{tabularx}{\textwidth}{l *{6}{X}}
      \toprule
      \textbf{Method} & \multicolumn{2}{c}{$5d$} & \multicolumn{2}{c}{$10d$} & \multicolumn{2}{c}{$20d$} \\
      ~ & \tiny KS--PIT & \tiny RMSE & \tiny KS--PIT & \tiny RMSE & \tiny KS--PIT & \tiny RMSE \\
      \hline \hline
      oracle \textsc{cps-gp}       & 0.14   & 0.35 & 0.09 & 0.30 & 0.07 & 0.27 \\
      \textsc{cps-gp}              & 0.17   & 0.39 & 0.11 & 0.30 & 0.08 & 0.22 \\
      \textsc{cps-gp} $\gamma=0.2$ & 0.18   & 0.47 & 0.13 & 0.40 & 0.10 & 0.31 \\
      \textsc{cps-gp} $\gamma=0.5$ & 0.16   & 1.11 & 0.11 & 1.03 & 0.08 & 0.48 \\
      \textsc{cps-gp} $\gamma=0.8$ & 0.22   & 0.42 & 0.19 & 0.38 & 0.14 & 0.33 \\
      \bottomrule
  \end{tabularx}
  \caption{Average KS--PIT and RMSE over 100 repetitions for the Hartmann6 function at
different design set sizes~$n$. The GP regularity is fixed at $p=2$. Predictive
distributions are obtained with the \cpsgp\ method using either 
(i)~a data split ($\gamma n$ points for parameter selection and $(1-\gamma)n$
for prediction), 
(ii)~the same dataset~$\Dcal_n$ for both steps, or 
(iii)~oracle parameters pre-selected on an independent dataset of~$10d$ points.  
RMSE increases under data splitting, with the largest degradation observed for
$\gamma=0.5$, while calibration is poorest for $\gamma=0.8$. The full-data
configuration performs close to the oracle case but retains biased coverage.}
  \label{table:rmse_cps_gp}
\end{table}

\begin{remark}
\citet{LiangBarber2025} introduced a notion of algorithmic stability in
conformal prediction, showing that if a procedure is stable, its intervals can
be slightly widened to recover exact coverage. Although their analysis focuses
on predictive intervals, the same idea may apply to \cpsgp: if the selected
hyperparameters vary little under sample perturbations, calibration is
approximately preserved. A formal analysis of this stability for \cpsgp
and its impact on finite-sample validity remains open.
\end{remark}

\paragraph{Parameter selection on the prediction dataset}

We next compare \cpsgp\ and \bcrgp\ when both use parameters selected on the
same dataset~$\Dcal_n$ as that used for prediction. This configuration, already
discussed above, breaks exchangeability and therefore invalidates the marginal
coverage guarantees of Section~\ref{sec:cps-gp-interpolation}.
Figure~\ref{fig:n_values} reports the resulting empirical coverages for the
Hartmann6 and Goldstein--Price functions. For Hartmann6, \bcrgp\ with
$\delta=0.01$ attains coverage close to the intended confidence levels, whereas
\cpsgp\ and the GP posterior remain noticeably miscalibrated. For
Goldstein--Price, \cpsgp\ provides the most accurate coverage among the
compared methods.

As shown in Proposition~\ref{prop:cps-gp-finite}, the average interval width of
\cpsgp\ becomes unbounded for $n=5d$ at the 95\% level, illustrating a
finite-sample limitation of the procedure. In contrast, \bcrgp\ with
$\delta=0.01$ yields finite, slightly conservative intervals.

\begin{figure}[htbp]
  \centering
  \includegraphics[width=0.95\textwidth]{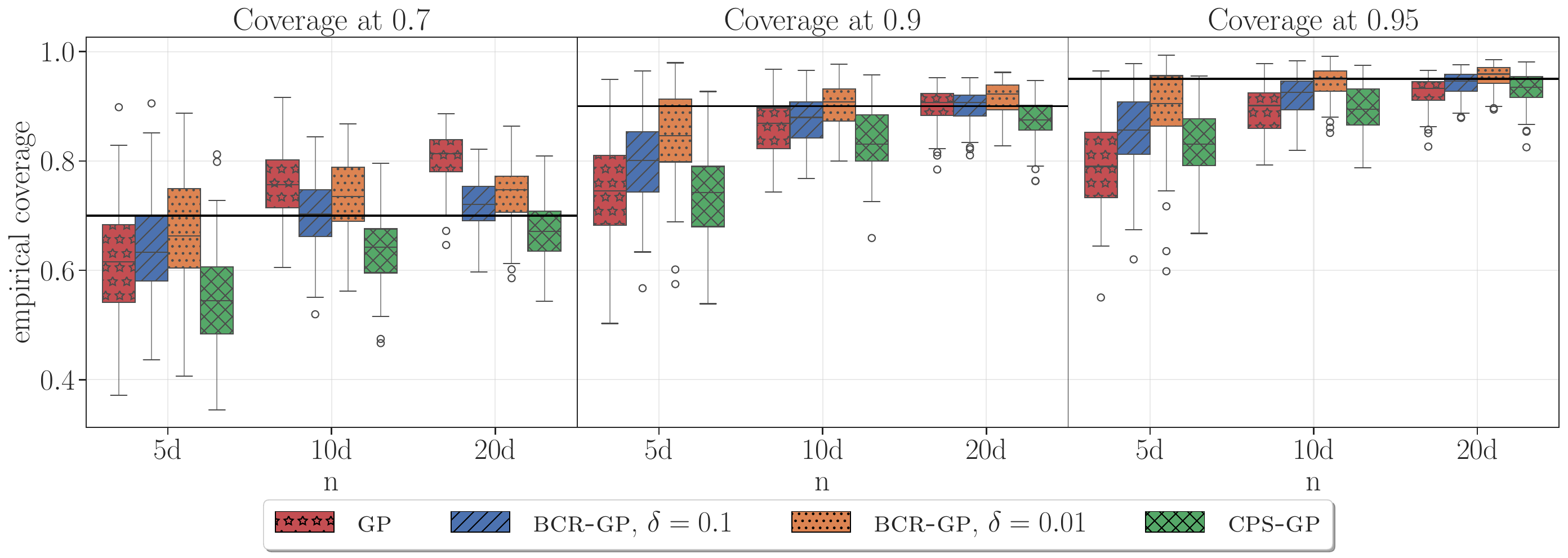}
  \includegraphics[width=0.95\textwidth]{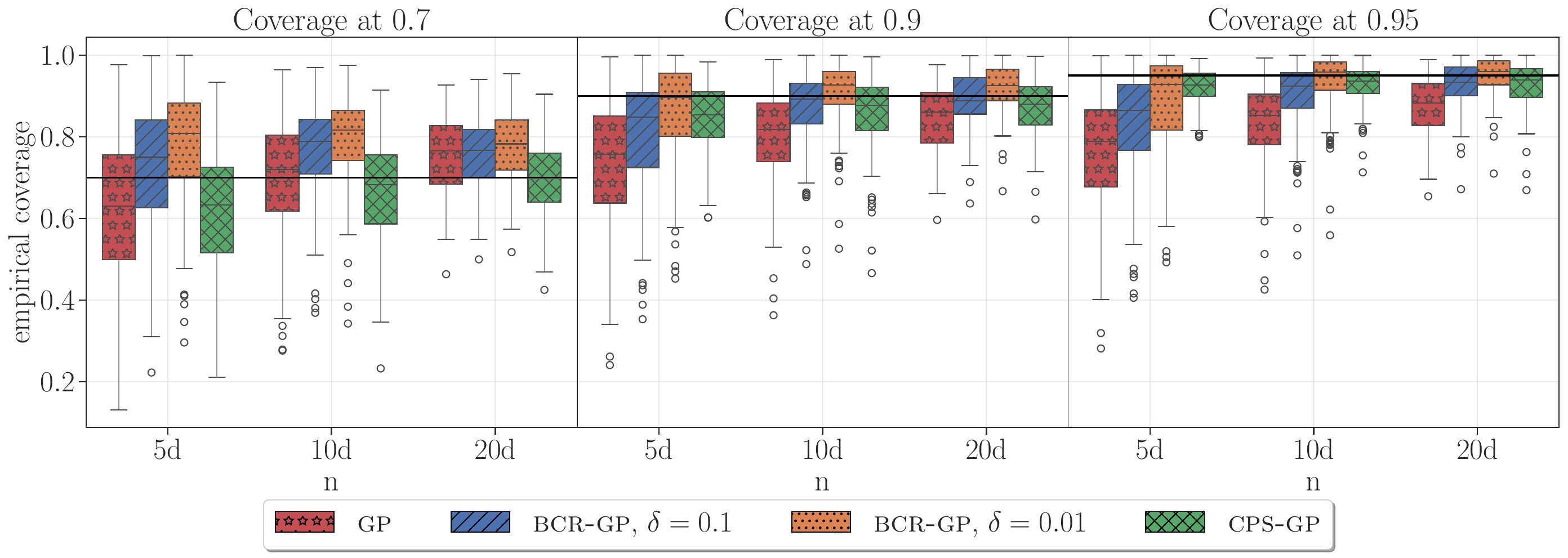}
  \caption{Empirical coverage of 70\%, 90\%, and 95\% prediction intervals for
  the Hartmann6 (first row) and Goldstein--Price (second row) functions, for several design set
  sizes~$n$. The
  GP regularity is fixed at $p=2$.
  Prediction intervals are constructed with the variance-based \bcrgp\ method
  for $\delta=0.1$ and $\delta=0.01$, with parameters selected on the same
  dataset~$\Dcal_n$.}
  \label{fig:n_values}
\end{figure}

\subsection{Comparison of coverage and width of prediction intervals}
\label{sec:res_coverage}

We evaluate empirical coverage and average interval width for the GP, \fcp, 
\jpgp, \bcrgp, and \cpsgp\ methods. Each result is averaged over 100 repetitions. 
Parameters are selected on the same dataset~$\Dcal_n$ used for prediction, so the 
marginal validity of conformal procedures no longer holds.

Figure~\ref{fig:coverage} displays empirical coverages at 70\%, 90\%, and 95\% 
for the Goldstein--Price function and for Matérn regularity orders $p=0,1,2$. 
Tables~\ref{tab:coverage} and~\ref{tab:width} report, respectively, mean coverage 
and relative interval width across all test functions.

All conformal and Bayesian–conformal methods improve empirical coverage compared 
with the GP posterior. 
As expected, \fcp\ and \jpgp\ have similar results. 
Both \cpsgp\ and \bcrgp\ with $\delta=0.1$ achieve coverage closest to the target 
levels, with \bcrgp\ slightly conservative. 
When using the KS–PIT selection rule with $\delta=0.1$, \bcrgp\ tends to become optimistic at high 
confidence (90–95\%). 
None of the conformal variants reach exact coverage once parameters are reused.

Across test functions and for all values of~$p$, 
\bcrgp\ remains near the intended coverage levels, showing stable behavior 
across problems.

Regarding interval width, \bcrgp\ with $\delta=0.1$ produces the narrowest 
intervals among well-calibrated methods, while $\delta=0.01$ yields wider, more 
conservative intervals. 
\jpgp, \fcp, and \cpsgp\ produce shorter intervals on average, consistent with 
their more optimistic coverage shown in Table~\ref{tab:coverage}.

\begin{figure}[htbp]
  \centering
  \includegraphics[width=\textwidth]{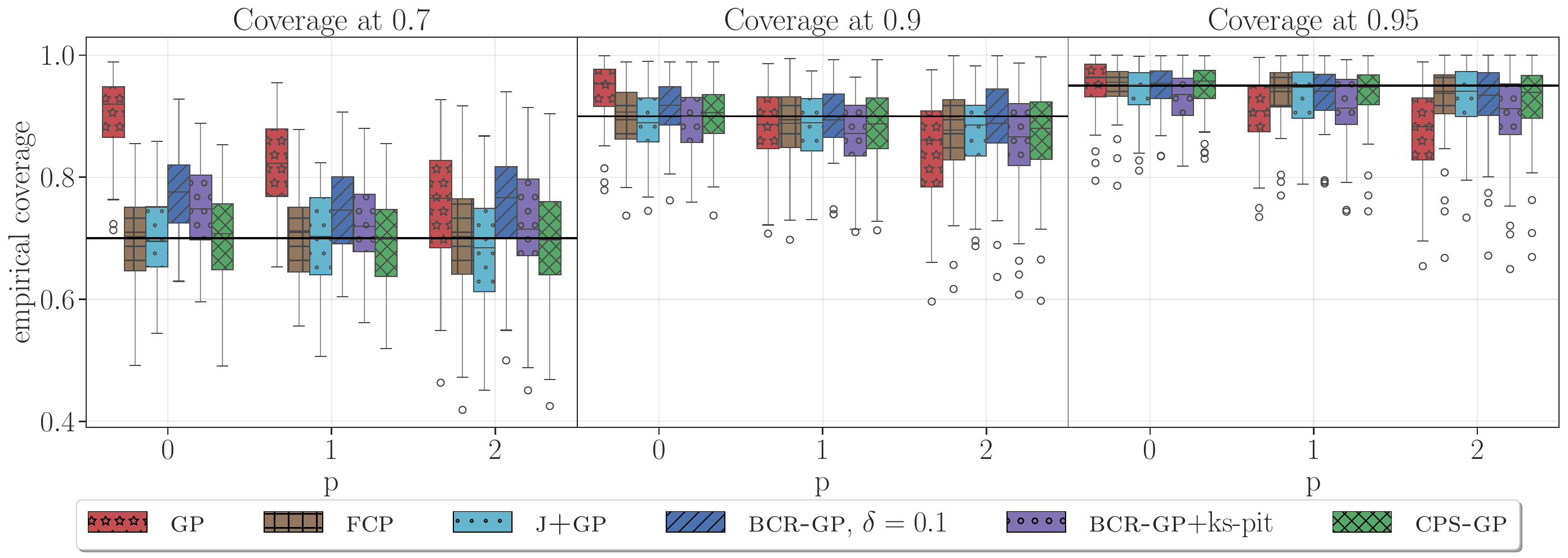}
  \caption{Empirical coverage of 70\%, 90\%, and 95\% prediction intervals for 
  the Goldstein--Price function ($n=20d=40$). Results are shown for Matérn 
  regularity orders $p=0,1,2$. Prediction intervals are computed with \bcrgp\ 
  using either the variance-based rule ($\delta=0.1$) or the KS--PIT rule ($\delta=0.1$).}
  \label{fig:coverage}
\end{figure}

\begin{table}[hpbt]
  \centering
  \setlength\tabcolsep{4pt}
  \begin{tabularx}{\textwidth}{l  *{12}{>{\centering\arraybackslash}p{0.85cm}}}
    \toprule
    \textbf{Method} & 
    \multicolumn{2}{c}{\scriptsize \makecell{Goldstein\\--Price}} & 
    \multicolumn{2}{c}{\scriptsize Ackley4} & 
    \multicolumn{2}{c}{\scriptsize Hartman3} & 
    \multicolumn{2}{c}{\scriptsize \makecell{Dixon\\--Price4}} & 
    \multicolumn{2}{c}{\scriptsize  Rosenbrock6} & 
    \multicolumn{2}{c}{\scriptsize  Branin}  \\
    ~ & 90\% & 95\% & 90\% & 95\% & 90\% & 95\% & 90\% & 95\% & 90\% & 95\% & 90\% & 95\%  \\
    \hline \hline
    \smbf{bcr-gp} \tiny (ks-pit) & 0.86 & 0.90 & 0.89 & 0.93 & 0.89 & 0.94 & 0.87 & 0.92 & 0.87 & 0.92 & 0.91 & \fnsb{0.95} \\
    \smbf{bcr-gp} \tiny $(0.01)$ & \fnsb{0.92} & \fnsb{0.95} & 0.94 & 0.96 & 0.94 & 0.97 & 0.93 & \fnsb{0.96} & \fnsb{0.91} & \fnsb{0.96} & 0.96 & 0.99 \\
    \smbf{bcr-gp} \tiny $(0.1)$ & 0.89 & 0.93 & \fnsb{0.92} & \fnsb{0.95} & \fnsb{0.92} & 0.96 & \fnsb{0.90} & 0.94 & 0.89 & 0.94 & 0.94 & 0.97 \\
    \smbf{fcp} & 0.87 & 0.93 & 0.87 & 0.93 & 0.89 & 0.94 & 0.87 & 0.93 & 0.86 & 0.93 & \fnsb{0.90} & 0.96 \\
    \smbf{gp} & 0.85 & 0.88 & \fnsb{0.90} & 0.93 & 0.93 & \fnsb{0.95} & 0.87 & 0.92 & 0.85 & 0.91 & 1.00 & 1.00 \\
    \smbf{cps-gp} & 0.87 & 0.93 & 0.87 & 0.93 & 0.89 & 0.94 & 0.87 & 0.93 & 0.86 & 0.93 & \fnsb{0.90} & \fnsb{0.95} \\
    \smbf{j+gp} & 0.87 & 0.93 & 0.88 & 0.93 & 0.89 & 0.94 & 0.88 & 0.93 & 0.87 & 0.93 & \fnsb{0.90} & \fnsb{0.95} \\
    \bottomrule
  \end{tabularx}
  \caption{Average empirical coverage over $100$ repetitions for several 
  confidence levels for multiple test functions. The empirical coverage is 
  computed on a test grid with $4000$ points and $n=20\times d$ observations.
  For every method, the regularity of the covariance is $p=2$. \bcrgp\ with the KS--PIT rule is used with $\delta = 0.1$.}
    \label{tab:coverage}
\end{table}

\begin{table}[hpbt]
  \centering
  \setlength\tabcolsep{4pt}
  \begin{tabularx}{\textwidth}{l *{12}{>{\centering\arraybackslash}p{0.85cm}}}
    \toprule
    \textbf{Method} & 
    \multicolumn{2}{c}{\scriptsize \makecell{Goldstein\\--Price}} & 
    \multicolumn{2}{c}{\scriptsize Ackley4} & 
    \multicolumn{2}{c}{\scriptsize Hartman3} & 
    \multicolumn{2}{c}{\scriptsize \makecell{Dixon\\--Price4}} & 
    \multicolumn{2}{c}{\scriptsize  Rosenbrock6} & 
    \multicolumn{2}{c}{\scriptsize  Branin} \\
    ~ & 90\% & 95\% & 90\% & 95\% & 90\% & 95\% & 90\% & 95\% & 90\% & 95\% & 90\% & 95\%  \\
    \hline \hline
      \smbf{bcr-gp} \tiny (ks-pit) & 3 & 3.41 & 1.03 & 1.10 & 1.21 & 1.32 & 1.60 & 1.60 & 1.27 & 1.28 & 1.55 &\fnsb{1.64} \\
      \smbf{bcr-gp} \tiny $(0.01)$ & \fnsb{4.43} & \fnsb{5.36} & 1.31 & 1.44 & 1.64 & 1.90 & 1.91 & \fnsb{1.92} & \fnsb{1.50} & \fnsb{1.52} & 2.17 & 2.40 \\
      \smbf{bcr-gp} \tiny $(0.1)$ & 3.55 & 4.06 & \fnsb{1.16} & \fnsb{1.24} & \fnsb{1.40} & 1.56 & \fnsb{1.73} & 1.72 & 1.38 & 1.40 & 1.81 & 1.91 \\
      \smbf{fcp} & 1.22 & 1.59 & 0.89 & 0.99 & 0.84 & 0.99 & 1.03 & 1.06 & 1.05 & 1.08 & \fnsb{0.50} & 0.60 \\
      \smbf{cps-gp} & 3.37 & 4.41 & 0.99 & 1.12 & 1.27 & 1.59 & 1.61 & 1.67 & 1.29 & 1.34 & \fnsb{1.66} & \fnsb{2.09} \\
      \smbf{j+gp} & 1.23 & 1.60 & 0.91 & 1.00 & 0.87 & 0.99 & 1.04 & 1.06 & 1.05 & 1.07 & \fnsb{0.49} & \fnsb{0.58} \\
    \bottomrule
  \end{tabularx}
  \caption{Relative width of PIs compared with the width of PIs predicted by GP 
  over $100$ repetitions for several confidence levels 
  for multiple test functions. The widths are computed on a test grid with $4000$ 
  points and $n=20\times d$ observations. For every method, the regularity of 
  the covariance is $p=2$. In bold are reported the width of the predictions intervals with the empirical coverage closest to the targeted confidence level. \bcrgp\ with the KS--PIT rule is used with $\delta = 0.1$.}
  \label{tab:width}
\end{table}
\subsection{Comparison of predictive distributions}
\label{sec:res_pred_distr}

We evaluate the predictive distributions of the posterior GP, \bcrgp, and \cpsgp\ using
KS--PIT (uniformity of PIT) and SCRPS (proper scoring). Results averaged over
100 repetitions are reported in Table~\ref{tab:ks_pit_scrps}; the SCRPS
definition and expressions for the generalized normal distribution are given in Appendix~\ref{ap:scrps_formulas}. 
\fcp\ and \jpgp\ are excluded from this subsection because they do not define full
predictive distributions (intervals only).

Predictive laws are obtained as follows: GP uses its Gaussian posterior;
\bcrgp\ fits a generalized normal to normalized residuals; \cpsgp\ yields a
stepwise predictive law (PIT computed with boundary randomization).
All methods follow the same hyperparameter-selection protocol as in
Section~\ref{sec:experimental-setup}.

\bcrgp\ with the KS--PIT selection rule attains the lowest KS--PIT on most
benchmarks, indicating better distributional calibration. The variance-based
\bcrgp\ (e.g., $\delta\in\{0.1,0.01\}$) achieves comparable or better SCRPS and
slightly heavier tails. \cpsgp\ is competitive on both metrics but typically
shows marginally larger KS--PIT when parameters are selected on the same
dataset~$\Dcal_n$. Consistent with Section~\ref{sec:res_coverage}, the KS--PIT
rule can be optimistic at high confidence, whereas the variance-based rule is
more stable across problems.

\begin{table}[hpbt]
  \centering
  \setlength\tabcolsep{4pt}
  \begin{tabularx}{\textwidth}{l *{14}{X}}
    \toprule
    \textbf{Method} & 
    \multicolumn{2}{c}{\scriptsize \makecell{Goldstein\\--Price}} & 
    \multicolumn{2}{c}{\scriptsize Ackley4} & 
    \multicolumn{2}{c}{\scriptsize Hartman3} & 
    \multicolumn{2}{c}{\scriptsize \makecell{Dixon\\--Price4}} & 
    \multicolumn{2}{c}{\scriptsize  Rosenbrock6} & 
    \multicolumn{2}{c}{\scriptsize  Branin}  \\
    ~ &\tiny KS--PIT &\tiny SCRPS &\tiny KS--PIT &\tiny SCRPS &\tiny KS--PIT &\tiny SCRPS &\tiny KS--PIT & \tiny SCRPS &\tiny KS--PIT &\tiny SCRPS &\tiny KS--PIT &\tiny SCRPS  \\
    \hline \hline
        \smbf{bcr-gp} \tiny (ks-pit) & \fnsb{0.11} & 5.78 & \fnsb{0.09} & 0.86 &  \fnsb{0.09} & \fnsb{-0.15} & 0.08 & 5.40 &  \fnsb{0.06} & 6.81 &  \fnsb{0.11} &  \fnsb{0.30} \\
        \smbf{bcr-gp} \tiny $(0.01)$ & 0.12 & 5.76 & 0.10 & 0.86 & 0.11 & -0.12 & 0.09 & 5.40 & 0.07 & 6.81 & 0.13 & 0.34 \\
        \smbf{bcr-gp} \tiny $(0.1)$ & 0.12 & \fnsb{5.75} & \fnsb{0.09} & \fnsb{0.85} & 0.10 &  -0.14 & 0.08 & 5.40 &  \fnsb{0.06} & 6.81 & 0.13 & 0.31 \\
        \smbf{gp} & 0.16 & 5.78 & 0.10 & 0.87 & 0.16 & -0.12 &  \fnsb{0.07} & 5.40 &  \fnsb{0.06} & 6.82 & 0.24 & 0.49 \\
        \smbf{cps-gp} & 0.12 & 5.79 & \fnsb{0.09} & 0.86 & 0.10 &  \fnsb{-0.14} & 0.09 & 5.41 & 0.08 & 6.82 & 0.14 & 0.31 \\
    \bottomrule
  \end{tabularx}
  \caption{Average KS--PIT and SCRPS over $100$ repetitions, computed
    on a test set of $4000$ points and with $n=20\times d$
    observations for GP, \bcrgp with $\delta = 0.1$ and
    $\delta = 0.01$, and \cpsgp. The regularity of the Matérn
    covariance function is $p=2$. \bcrgp\ with the KS--PIT rule is used with $\delta = 0.1$.}
  \label{tab:ks_pit_scrps}
\end{table}

\section{Discussion}
\label{sec:discussion}

This work targets $\mu$-calibration of Gaussian-process (GP) predictive
distributions. We first presented \cpsgp, an adaptation of conformal
predictive systems to GP interpolation that yields design-marginal
probabilistic calibration (on average over the data). We then introduced
\bcrgp, a Bayesian post-processing method that preserves the GP posterior mean
and calibrates dispersion by fitting a parametric residual model, producing
smooth predictive CDFs usable in standard sequential-design algorithms.

In empirical benchmarks, \bcrgp, \cpsgp, and \jpgp\ improve empirical
coverage relative to the Gaussian posterior, and \bcrgp\ and \cpsgp\ attain
comparable probabilistic-calibration quality as measured by KS--PIT and SCRPS.
Unlike \fcp\ or \jpgp, which only provide prediction intervals, \bcrgp\ and
\cpsgp\ produce full predictive distributions. The additional computational
cost relative to standard GP prediction remains modest in the regimes
considered.

\paragraph{Applicability and limitations}
A practical limitation of \cpsgp\ is that it does not yield a simple parametric
predictive distribution, which complicates its use in sampling criteria (aka acquisition functions)
that rely on closed-form Gaussian formulas (e.g.\ expected-improvement
Bayesian optimization). Another limitation, common to all conformal approaches,
is that \cpsgp\ and \jpgp\ guarantee design-marginal rather than
training-conditional coverage. The formal marginal validity of \cpsgp\ further
requires kernel hyperparameters to be chosen independently of the calibration
step; when they are estimated on the same data, coverage remains approximately
correct but is no longer exact, even marginally over the data.

By contrast, \bcrgp\ has no distribution-free guarantee: its calibration is
purely model-based and relies on (i) the adequacy of the generalized-normal
residual family and (ii) the working assumption that LOO standardized residuals
provide a good proxy for the $\mu$-marginal distribution of $R_n(X,f(X))$.
In this work, these assumptions are only checked empirically; quantifying their
deviation from the true residual law, and the resulting impact on coverage and
PIT-based diagnostics, would be a natural and important direction for future
study.

\paragraph{Noisy observations}
Although both methods were developed for interpolation, many practical
applications involve noisy observations $Z = f(X) + \varepsilon$, where the goal
is to predict the latent function $f$. In such settings, distribution-free
coverage guarantees for $f$ are generally unattainable without modelling the
noise. Once a noise model is specified, both \cpsgp\ and \bcrgp\ could be extended
to yield model-conditional calibration, by applying the same constructions to
the latent GP and treating the noise within the likelihood.

\paragraph{Sequential design}
Applying $\mu$-calibrated predictors within sequential algorithms raises the
question of defining the design measure $\mu$ when design points depend on past
data. In practice, for sampling criteria such as expected improvement, the
adaptive design explores the domain, and calibration can be assessed empirically
with respect to the empirical distribution of the visited points. Periodic
recalibration, or calibration diagnostics computed with importance weights that
map the empirical design distribution to a target~$\mu$, can be used to monitor
and maintain $\mu$-calibration as the design evolves.

\paragraph{Tail calibration}
The \bcrgp\ framework can also be tailored toward upper-tail calibration by
modifying the posterior selection rule or adopting a heavier-tailed residual
model. Such adjustments make it possible to target exceedance probabilities or
risk metrics while preserving smooth predictive CDFs and the GP mean structure.

Both proposed approaches enhance the design-marginal calibration of GP-based
predictive distributions, a property often expected by practitioners but not
guaranteed by the standard GP posterior. \cpsgp\ offers a distribution-free
calibration mechanism suited to fixed designs, while \bcrgp\ provides smooth,
parametric predictive distributions directly usable in sequential design and
optimization. Together, they improve the practical reliability of GP modelling
by producing calibrated uncertainty estimates that remain interpretable and
computationally tractable.

\clearpage
\appendix
\section{Proofs}
\label{sec:appendix-proofs}

\subsection{Generalized inverse identities}
\label{sec:gen-inv-id}

\begin{lemma}[Generalized inverse identities]
\label{lem:gen-inverse}
Let $G:\R\to[0,1]$ be nondecreasing and right-continuous, and define its
generalized inverse by
$$
G^{-1}(u) := \inf\{z\in\R:\ G(z)\ge u\},\qquad u\in(0,1).
$$
Then, for all $u\in(0,1)$ and $z\in\R$,
\begin{align}
\{\,G(z)<u\,\} &\Longleftrightarrow \{\,z<G^{-1}(u)\,\}, \label{eq:lemma1}\\[3pt]
\{\,G(z^-)\le u\,\} &\Longleftrightarrow \{\,z\le G^{-1}(u)\,\}. \label{eq:lemma2}
\end{align}
Moreover, for any $0<p<q<1$ and $v\in\R$,
\begin{equation}
\label{eq:lemma3}
\{\,G^{-1}(p)\le v < G^{-1}(q)\,\}
\Longleftrightarrow
\{\,p \le G(v) < q\,\}.
\end{equation}
\end{lemma}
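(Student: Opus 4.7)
The plan is to deduce all three equivalences from a single master identity for the generalized inverse of a right-continuous, nondecreasing function:
\[
G^{-1}(u) \leq z \;\Longleftrightarrow\; u \leq G(z), \qquad u \in (0,1),\ z \in \R.
\]
First I would establish this identity. The nontrivial direction uses right-continuity: the set $\{w : G(w) \geq u\}$ is closed in $\R$, so its infimum $G^{-1}(u)$ satisfies $G(G^{-1}(u)) \geq u$ whenever it is finite, and monotonicity then propagates $G(z) \geq u$ to every $z \geq G^{-1}(u)$. The reverse direction is immediate from the definition of the infimum.

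Given the master identity, \eqref{eq:lemma1} is obtained directly by taking contrapositives. For \eqref{eq:lemma2} I would work with the left limit $G(z^-) = \sup_{w < z} G(w)$, which exists by monotonicity. The forward implication is routine: if $z \leq G^{-1}(u)$, then every $w < z$ satisfies $w < G^{-1}(u)$, hence $G(w) < u$ by \eqref{eq:lemma1}, and passing to the supremum yields $G(z^-) \leq u$. The converse is the step I expect to require the most care: from $G(z^-) \leq u$ one has to rule out $z > G^{-1}(u)$, which would produce some $w \in [G^{-1}(u), z)$ with $G(w) \geq u$; closing this argument relies on right-continuity at $G^{-1}(u)$ together with the definition of the left limit to reconcile $G(w) \geq u$ with $G(z^-) \leq u$.

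Finally, \eqref{eq:lemma3} follows by combining the two previous equivalences endpoint by endpoint. The master identity gives $G^{-1}(p) \leq v \Leftrightarrow p \leq G(v)$, while \eqref{eq:lemma1} applied at level $q$ gives $v < G^{-1}(q) \Leftrightarrow G(v) < q$. Conjoining the two yields the double inequality $p \leq G(v) < q$, and the equivalence \eqref{eq:lemma3} follows with no further analytic input.
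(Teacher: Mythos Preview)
Your route via the Galois identity $G^{-1}(u)\le z \Longleftrightarrow u\le G(z)$ is exactly the engine behind the paper's argument: \eqref{eq:lemma1} is its contrapositive, and both you and the paper obtain \eqref{eq:lemma3} purely from \eqref{eq:lemma1} (the paper by two contradictions, you by conjoining the Galois identity at level $p$ with \eqref{eq:lemma1} at level $q$; your phrasing is a shade cleaner but not materially different).

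The genuine gap is in the converse of \eqref{eq:lemma2}, and it cannot be closed: the equivalence is false as stated. Your plan reaches $G(w)\ge u$ for some $w\in[G^{-1}(u),z)$ together with $G(z^-)\le u$, hence $G(z^-)=u$; but that is not a contradiction, and right-continuity at $G^{-1}(u)$ does not help. Concretely, take $G=0$ on $(-\infty,0)$, $G=\tfrac12$ on $[0,1)$, $G=1$ on $[1,\infty)$: then $G^{-1}(\tfrac12)=0$, yet $G(\tfrac12^{-})=\tfrac12\le\tfrac12$ while $\tfrac12\not\le 0$. The paper's proof has the identical slip (from ``$G(w)\le u$ for all $w<z$'' it infers ``no $w<z$ satisfies $G(w)\ge u$'', overlooking $G(w)=u$). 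The correct statement is $z\le G^{-1}(u)\Longleftrightarrow G(w)<u$ for all $w<z$, which is strictly stronger than $G(z^-)\le u$ whenever $G$ has a plateau at level~$u$. This does not affect \eqref{eq:lemma1} or \eqref{eq:lemma3}, which neither you nor the paper derive from \eqref{eq:lemma2}.
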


\newif\iflongproofs

\begin{proof}
\iflongproofs
(i) \eqref{eq:lemma1}.  
If $z<G^{-1}(u)$, then by definition of $G^{-1}$ there is no $w\le z$
with $G(w)\ge u$, hence $G(z)<u$. Conversely, if $G(z)<u$, then
no $w\le z$ belongs to the set $\{w:\ G(w)\ge u\}$, so
$\inf\{w:\ G(w)\ge u\}>z$, i.e.\ $z<G^{-1}(u)$.

(ii) \eqref{eq:lemma2}.  
Since $G(z^-)=\sup_{w<z}G(w)$, if $z\le G^{-1}(u)$ then $w<z$ implies
$w<G^{-1}(u)$, hence by \eqref{eq:lemma1}, $G(w)<u$ for all $w<z$ and
$G(z^-)\le u$. Conversely, if $G(z^-)\le u$, then $G(w)\le u$ for all
$w<z$, so no $w<z$ satisfies $G(w)\ge u$; therefore
$\inf\{w:\ G(w)\ge u\}\ge z$, i.e.\ $z\le G^{-1}(u)$.

(iii) \eqref{eq:lemma3}.  
First suppose $G^{-1}(p)\le v<G^{-1}(q)$. From $v<G^{-1}(q)$ and
\eqref{eq:lemma1} with $u=q$ we obtain $G(v)<q$. If $G(v)<p$ held, then by
\eqref{eq:lemma1} with $u=p$ we would have $v<G^{-1}(p)$, contradicting
$G^{-1}(p)\le v$. Hence $G(v)\ge p$, and we obtain
$p\le G(v)<q$.

Conversely, suppose $p\le G(v)<q$. From $G(v)<q$ and \eqref{eq:lemma1}
(with $u=q$) we get $v<G^{-1}(q)$. If $v<G^{-1}(p)$, then again by
\eqref{eq:lemma1} (with $u=p$) we would have $G(v)<p$, which contradicts
$p\le G(v)$. Thus $v\ge G^{-1}(p)$, and combining both inequalities yields
$G^{-1}(p)\le v<G^{-1}(q)$.
\else
The equivalences \eqref{eq:lemma1}–\eqref{eq:lemma2} follow directly from the
definition of $G^{-1}$ and the fact that $G(z^-)=\sup_{w<z}G(w)$.
For \eqref{eq:lemma3}, apply \eqref{eq:lemma1} with $u=q$ and its contrapositive
with $u=p$ to relate $G^{-1}(p),G^{-1}(q)$ and $G(v)$.
\fi
\end{proof}

\subsection{Proof of Proposition~\ref{prop:randomized-pit}}
\label{sec:proof-randomized-pit}

Fix $x\in\XX$ and condition throughout on $\Dcal_n$. Let
$$
F(\cdot) := \hat F_n(\cdot\mid x),
\qquad
\Delta F(z) := F(z)-F(z^-),
$$
and draw $V\sim F$ independently of $\tau\sim\U(0,1)$. Define
$$
U := F(V^-)+\tau\,\Delta F(V).
$$

\paragraph{Uniformity}
Fix $t\in[0,1]$ and set the (left-continuous) quantile
$$
z_t := F^{-1}(t) = \inf\{z\in\R:\ F(z)\ge t\}.
$$
We have
\begin{equation*}
\{U\le t\}
= \{V<z_t\}\ \cup\ \Bigl(\{V=z_t\}\cap\{\tau \le \tfrac{t-F(z_t^-)}{\Delta F(z_t)}\}\Bigr),
\end{equation*}
with the convention that the second event is empty when $\Delta F(z_t)=0$.
Indeed:
\begin{enumerate}[label=(\roman*)]
\item if $V<z_t$, then $F(V)\le F(z_t^-)\le t$, hence $U\le F(V)\le t$;
\item if $V>z_t$, then $F(V^-)\ge F(z_t)\ge t$, hence $U\ge F(V^-)>t$;
\item if $V=z_t$, then $U$ is uniform on $[F(z_t^-),F(z_t)]$ given $V$.
\end{enumerate}

Taking probabilities under $\P_n(\,\cdot\,)$ and using $V\sim F$ and independence of $\tau$,
$$
\begin{aligned}
\P_n(U\le t)
&= \P_n(V<z_t)\;+\;\P_n(V=z_t)\,\P_n\left(\tau \le \frac{t-F(z_t^-)}{\Delta F(z_t)}\ \bigg|\ V=z_t\right)\\[3pt]
&= F(z_t^-)\;+\;\Delta F(z_t)\,\frac{t-F(z_t^-)}{\Delta F(z_t)}\\[3pt]
&= t.
\end{aligned}
$$
If $\Delta F(z_t)=0$, then $\P_n(V=z_t)=0$ and $F(z_t^-)=t$, yielding the same result.
Thus $U\mid\Dcal_n\sim\U(0,1)$.

\paragraph{Exact mass}  
Set $a=\alpha/2$ and $b=1-\alpha/2$. Writing
$$
\Ccal_{n,\tau,1-\alpha}(x)
:=\bigl[\hat F_{n,\tau}^{-1}(a\mid x),\,\hat F_{n,\tau}^{-1}(b\mid x)\bigr),
$$
we obtain by Lemma~\ref{lem:gen-inverse} the event identity
$$
\{\,V\in \Ccal_{n,\tau,1-\alpha}(x)\,\}
\;\Longleftrightarrow\;
\{\,a \le \hat F_{n,\tau}(V\mid x) < b\,\}
\;\Longleftrightarrow\;
\{\,a \le U < b\,\}.
$$
Taking probabilities under $\P_n$  and using 
$U\mid\Dcal_n\sim\U(0,1)$ yields
$$
\P_n\bigl\{\,V\in \Ccal_{n,\tau,1-\alpha}(x)\,\bigr\}
=\P_n\{\,a\le U < b \,\}= b-a = 1-\alpha.
$$

\subsection{Proof of Remark~\ref{rem:pit-under-mu}}
\label{sec:proof-rem-pit-under-mu}

We want to justify the inequalities
$$
\mu\bigl(\{x:\ f(x)<\hat F_n^{-1}(u\mid x)\}\bigr)
\ \le\
G_\mu(u)
\ \le\
\mu\bigl(\{x:\ f(x)\le\hat F_n^{-1}(u\mid x)\}\bigr),
\qquad u\in(0,1),
$$
where
$$
G_\mu(u) := \P_n\bigl(U_{\hat F_n}^{f,\,\mu,\,\tau}\le u\bigr),
\qquad
U_{\hat F_n}^{f,\,\mu,\,\tau}
:= \hat F_n(f(X)^-\mid X)
   + \tau\bigl(\hat F_n(f(X)\mid X)-\hat F_n(f(X)^-\mid X)\bigr),
$$
with $X\sim\mu$ and $\tau\sim\U(0,1)$ independent of $(\hat F_n,\,X,\,\Dcal_n)$.

For each $x$, set $a_x:=\hat F_n(f(x)^-\mid x)$ and $b_x:=\hat F_n(f(x)\mid x)$.
We have
$$
\P_n\bigl(U_{\hat F_n}^{f,\,\mu,\,\tau}\le u \mid X=x\bigr)
=
\begin{cases}
0, & u\le a_x,\\
\dfrac{u-a_x}{b_x-a_x}, & a_x<u<b_x,\\
1, & u\ge b_x.
\end{cases}
$$
Thus,
$$
\one\{\,b_x\le u\,\}\ \le\ \P_n\bigl(U_{\hat F_n}^{f,\,\mu,\,\tau}\le u \mid X=x\bigr)
\ \le\ \one\{\,a_x\le u\,\}.
$$
Integrating over $\mu$ yields
$$
\mu \bigl(\{x:\ b_x\le u\}\bigr)\ \le\ G_\mu(u)\ \le\ \mu \bigl(\{x:\ a_x\le u\}\bigr).
$$
From Lemma~\ref{lem:gen-inverse}:
$$
\{\,G(z)<u\,\}\Longleftrightarrow\{\,z<G^{-1}(u)\},\qquad
\{\,G(z^-)\le u\,\}\Longleftrightarrow\{\,z\le G^{-1}(u)\}\,.
$$
Substituting $G(z)=\hat F_n(z\mid x)$ and $z=f(x)$, we obtain
$$
\mu \bigl (\{x:\ f(x)< \hat F_n^{-1}(u\mid x)\}\bigr)
= \mu\bigl(\{x:\ b_x<u\}\bigr)
\ \le\ G_\mu(u)$$
and
$$ G_\mu(u) \le\
\mu \bigl(\{x:\ a_x\le u\}\bigr)
= \mu \bigl(\{x:\ f(x)\le \hat F_n^{-1}(u\mid x)\}\bigr),
$$
as claimed.

\subsection{Proof of Remark~\ref{rem:mu-coverage-via-PIT}}
\label{sec:proof-mu-coverage-via-PIT}

Let $F_{n,\,x}(z):=\hat F_n(z\mid x)$ and $q_x(u):=\hat F_n^{-1}(u\mid x)$.
Fix $a=\alpha/2$ and $b=1-\alpha/2$, and set $U:=\hat F_n(f(X)\mid X)$ with $X\sim\mu$.

\paragraph{Lower bound for $\delta_\alpha(\hat F_n;\mu)$}
By Lemma~\ref{lem:gen-inverse}, we have $\{U<b\} \iff \{f(X) < q_X(b)\}$ and
$\{a<U\} \Rightarrow \{f(X)\ge q_X(a)\}$ (since $f(X)<q_X(a)$ would imply $U\le a$).
Hence
$$
\{\,a<U<b\,\}\ \subseteq\ \{\,q_X(a)\le f(X)<q_X(b)\,\}
\ \subseteq\ \{\,q_X(a)\le f(X)\le q_X(b)\,\}.
$$
Taking probabilities gives
$$
\P_n\{\,a<U<b\,\}\ \le\ \delta_\alpha(\hat F_n;\mu).
$$

\paragraph{Failure of the naive upper bound}
In general, the inclusion 
$$
\{\,q_X(a)\le f(X)\le q_X(b)\,\}\subseteq\{\,a\le U\le b\,\}
$$
may fail. At the upper endpoint, if $f(X)=q_X(b)$ and $F_{n,\,x}$ has a jump at $q_X(b)$, 
then $U> b$ is possible; similarly, at the lower endpoint $f(X)=q_X(a)$ may give $U>a$.
Thus the upper inequality
$$
\delta_\alpha(\hat F_n;\mu)\le \P_n\{a\le U\le b\}
$$
is not always valid without regularity assumptions.

\paragraph{Safe two-sided bounds}
We have
$$
\mu\bigl(\{x:\ f(x)<q_x(b)\}\bigr)=\P_n\{U<b\},
\qquad
\mu\bigl(\{x:\ f(x)<q_x(a)\}\bigr)=\P_n\{U<a\}.
$$
(Since $\{\,F_{n,\,x}(z)<u\,\}\iff\{\,z<q_x(u)\,\}$, by Lemma~\ref{lem:gen-inverse}.) 
Hence
\begin{align*}
\delta_\alpha(\hat F_n;\mu)
&= \mu\bigl(\{x:\ f(x)\le q_x(b)\}\bigr) - \mu\bigl(\{x:\ f(x)<q_x(a)\}\bigr) \\
&= \P_n\{U < b\} - \P_n\{U < a\} + \mu\bigl(\{x:\ f(x)=q_x(b)\}\bigr) \\[2pt]
&= \P_n\{a\le U<b\} + \mu\bigl(\{x:\ f(x)=q_x(b)\}\bigr)\\[2pt]
&= \P_n\{a\le U\le b\} - \P_n\{U=b\} + \mu\bigl(\{x:\ f(x)=q_x(b)\}\bigr).  
\end{align*}
Notice that $\mu\bigl(\{x:\ f(x)=q_x(b)\}\bigr)=\mu(A_=)+\mu(A_>)$, with
$$
A_{=} = \{x:\ f(x)=q_x(b),\ F_{n,\,x}(q_x(b))=b\}\quad\text{and}\quad A_{>} = \{x:\ f(x)=q_x(b),\ F_{n,\,x}(q_x(b))>b\}\,.
$$

On $A_=$ ($F_{n,\,x}$ has no jumps), we have
$$
U = \hat F_n(f(X)\mid X) = F_{n,\,X}(q_X(b)) = b,
$$
so that
$$
A_{=} \subseteq \{U=b\}\,,
$$
and
$$
\P_n\{U = b\}\ \ge\ \mu(A_=).
$$

Plugging in yields
$$
\delta_\alpha(\hat F_n;\mu)
\le \P_n\{a\le U\le b\}\;-\;\mu(A_=)\;+\;\mu(A_=)+\mu(A_>)
= \P_n\{a\le U\le b\} \;+\; \mu(A_>).
$$
Equivalently,
$$
\delta_\alpha(\hat F_n;\mu)\ \le\ \P_n\{a\le U\le b\}
   + \mu\bigl(\{x:\ f(x)=q_x(b),\ F_{n,\,x}(q_x(b))>b\}).
$$
Therefore, we have the following two-sided bounds:
$$
\P_n\{a\le U<b\}\ \le\ \delta_\alpha(\hat F_n;\mu)\ \le\ \P_n\{a\le U\le b\}
   \;+\; \mu\bigl(\{x:\ f(x)=q_x(b),\ F_{n,\,x}(q_x(b))>b\}\bigr)\,.
$$

\paragraph{Restoring equality}
If, for $\mu$-a.e.\ $x$, $F_{n,\,x}$ is continuous and strictly increasing, then
$$
\{\,q_X(a)\le f(X)\le q_X(b)\,\}\ \Longleftrightarrow\ \{\,a\le U\le b\,\},
$$
hence
$$
\delta_\alpha(\hat F_n;\mu)=\P_n\{\,a\le U\le b\,\}.
$$

In the discontinuous case, by Lemma~\ref{lem:gen-inverse},
$$
\{\,F_{n,\,x,\,\tau}^{-1}(a)\le f(x)<F_{n,\,x,\tau}^{-1}(b)\,\}
\ \Longleftrightarrow\
\{\,a\le F_{n,\,x,\,\tau}(f(x))< b\,\}\,.
$$
With $X\sim\mu$ and
$$
U_{\hat F_n}^{f,\,\mu,\,\tau} := F_{n,\,X,\,\tau}(f(X)),
$$
this gives
$$
\mu\ \left(\{x:\ f(x)\in[\,\hat F_{n,\,\tau}^{-1}(a\mid x),\,\hat
F_{n,\,\tau}^{-1}(b\mid x)\,)\}\right)
=
\P_n \left(\,a\le U_{\hat F_n}^{f,\,\mu,\,\tau}< b \ \middle|\ \tau\right).
$$
Thus, boundary randomization restores exact equivalence in the general case
(see also Proposition~\ref{prop:randomized-pit}).

\subsection{Proof of Proposition~\ref{prop:ks_pit_mu_consistency}}
\label{sec:proof_ks_pit_mu_consistency}

Work throughout conditional on $\Dcal_n$.

Since $(X_j^\star,\tau_j)$ are i.i.d., the $U_j$ are i.i.d. on $[0,1]$ with distribution function
$$
G_\mu(u) = \P \big(U\le u \mid \Dcal_n\big),\quad
U = \hat F_n \big(f(X)^- \mid X \big) + \tau \Big(\hat F_n \big(f(X) \mid X\big) - \hat F_n \big(f(X)^- \mid X \big)\Big),
$$
where $X\sim\mu$ and $\tau\sim\U(0,1)$ are independent. Let
$$
\hat G_m(u) = \frac{1}{m}\sum_{j=1}^{m}\one\{U_j\le u\}.
$$
By the Glivenko--Cantelli theorem,
$$
\sup_{u\in[0,1]} \bigl| \hat G_m(u) - G_\mu(u) \bigr| \xrightarrow[m\to\infty]{\text{a.s.}} 0.
$$
Now write
$$
J_{\mathrm{KS\text{-}PIT},\,m}(\hat F_n) = \sup_{u\in[0,1]}\bigl|\hat G_m(u)-u\bigr|,
\qquad
J_{\mathrm{KS\text{-}PIT},\,\mu}(\hat F_n) = \sup_{u\in[0,1]}\bigl|G_\mu(u)-u\bigr|.
$$
Then, for $A_m(u)=\hat G_m(u)-u$ and $A(u)=G_\mu(u)-u$,
\begin{align*}
\bigl| J_{{\rm KS-PIT},\,m}(\hat F_n) - J_{{\rm KS-PIT},\,\mu}(\hat F_n) \bigr|
&= \bigl| \sup_u |A_m(u)| - \sup_u |A(u)| \bigr|\\
&\le \sup_{u} |A_m(u)-A(u)| \\
&= \sup_u \bigl| \hat G_m(u) - G_\mu(u) \bigr| \xrightarrow{\text{a.s.}} 0.  
\end{align*}
Hence the claim.

\subsection{Proof of Proposition~\ref{prop:ks_pit_vs_iae_general}}
\label{sec:proof_ks_pit_vs_iae_general}

Let $(X_i^\star)_{i=1}^m \stackrel{\text{i.i.d.}}{\sim}\mu$ be an independent
test design and set $Z_i := f(X_i^\star)$.
For each $i$, write the predictive CDF at $X_i^\star$ as
$\hat F_n(\cdot\mid X_i^\star)$ and define the PIT value
$$
U_i =
\begin{cases}
\hat F_n(Z_i \mid X_i^\star), & \text{if $\hat F_n(\cdot\mid X_i^\star)$ is continuous},\\[4pt]
\hat F_n(Z_i^-\mid X_i^\star) + \tau_i\big(\hat F_n(Z_i\mid X_i^\star)-\hat F_n(Z_i^-\mid X_i^\star)\big),
& \text{otherwise},
\end{cases}
$$
with $\tau_i\stackrel{\text{i.i.d.}}{\sim}\U(0,1)$ independent of everything.
Let $\hat G_m(u) := \frac{1}{m}\sum_{i=1}^m \one\{U_i \le u\}$ be the empirical CDF
of the $U_i$s.

\medskip
\noindent\emph{Continuous case.}
If each $\hat F_n(\cdot\mid X_i^\star)$ is continuous and strictly increasing,
the central $(1-\alpha)$ interval is
$$
\Ccal_{i,1-\alpha}
= \big[ \hat F_n^{-1}(\alpha/2 \mid X_i^\star), \; \hat F_n^{-1}(1-\alpha/2 \mid X_i^\star) \big],
$$
and
$$
\{Z_i \in \Ccal_{i,1-\alpha}\}
= \{\alpha/2 \le \hat F_n(Z_i\mid X_i^\star) \le 1-\alpha/2\}
= \{\alpha/2 \le U_i \le 1-\alpha/2\}.
$$
Hence
$$
\hat \delta_{\alpha, m}(\hat F_n)
= \frac{1}{m}\sum_{i=1}^m \one\{Z_i \in \Ccal_{i,1-\alpha}\}
= \hat G_m(1-\alpha/2) - \hat G_m(\alpha/2).
$$
Since $(1-\alpha) = (1-\alpha/2) - (\alpha/2)$,
$$
\big|\hat \delta_{\alpha, m}(\hat F_n) - (1-\alpha)\big|
\le \big|\hat G_m(1-\alpha/2)-(1-\alpha/2)\big|
+ \big|\hat G_m(\alpha/2)-\alpha/2\big|
\le 2 \sup_{u\in[0,1]} \big|\hat G_m(u)-u\big|.
$$
Integrating over $\alpha\in[0,1]$ gives
$$
J_{\mathrm{IAE},m}(\hat F_n)
= \int_0^1 \big|\hat \delta_{\alpha, m}(\hat F_n) - (1-\alpha)\big|\,d\alpha
\le 2\,J_{\mathrm{KS\text{-}PIT}, m}(\hat F_n).
$$

\medskip
\noindent\emph{Population version.}
Let $U$ denote the (randomized) $\mu$--PIT and $G(u):=\P(U\le u)$ its CDF.
Replacing $\hat G_m$ by $G$ in the same argument yields
$$
J_{\mathrm{IAE},\mu}(\hat F_n) \;\le\; 2\,J_{\mathrm{KS\text{-}PIT},\mu}(\hat F_n).
$$

\medskip
\noindent\emph{Discontinuous case.}
If some $\hat F_n(\cdot\mid X_i^\star)$ are discontinuous, using the randomized PIT above gives,
almost surely,
$$
\{Z_i \in \Ccal_{i,1-\alpha}\}
= \{\alpha/2 \le U_i < 1-\alpha/2\}.
$$
Therefore the identity
$\hat \delta_{\alpha, m}(\hat F_n)=\hat G_m(1-\alpha/2)-\hat G_m(\alpha/2)$
still holds (with half-open endpoints), and the inequalities in
Proposition~\ref{prop:ks_pit_vs_iae_general} follow unchanged.

\subsection{Proof of Proposition~\ref{prop:CP-coverage}}
\label{sec:proof-cp-coverage}

Assume $(X_i,Z_i)_{i=1}^{n+1}$ are i.i.d.\ and the conformal score
$R((x,z);\Dcal)$ is permutation-invariant in its dataset argument.
Let $\Dcal_{n+1}=\Dcal_{n+1}^{Z_{n+1}}$ and define $\pi$ as in
\eqref{cp-pi} with a tie–breaker $\tau\sim U(0, 1)$ independent of the data.

For any candidate value $z$, set
$$
A(z):=\#\{i\le n:\,R_i^z<R_{n+1}^z\}, 
\qquad
B(z):=\#\{i\le n:\,R_i^z=R_{n+1}^z\}.
$$
For the realized label $Z_{n+1}$, write $S_i:=R_i^{Z_{n+1}}$. Let
$S_{(1)} \le \cdots \le S_{(n+1)}$ be the order statistics, and let
$v_1< \ldots <v_m$ be the distinct values of $\{S_i\}_{i=1}^{n+1}$
with multiplicities $L_1,\ldots,L_m$.
Define cumulative counts
$$
A_1:=0,\qquad A_r:=\sum_{s<r}L_s\quad(r\ge2).
$$
Equivalently, in the ordered list
$S_{(1)}\le\cdots\le S_{(n+1)}$, the $r$th tie block occupies indices
$A_r+1,\, \ldots,\, A_r+L_r$ and equals the constant value $v_r$.

Define
$$
B_r\ :=\ \{\,S_{n+1}=v_r\,\}
\quad\text{equivalently}\quad
B_r\ :=\ \left\{\,\mathrm{rank}(S_{n+1})\in\{A_r+1,\,\ldots,\,A_r+L_r\}\right\}.
$$

By i.i.d.\ sampling and permutation invariance of $R$, $(S_1,\ldots,S_{n+1})$
is exchangeable. Conditional on the ordered values, the index $n+1$ is equally
likely to occupy any of the $n+1$ ranks. Hence
$$
\P\left(B_r \,\bigm|\, S_{(1)},\ldots,S_{(n+1)}\right)=\frac{L_r}{n+1}.
$$
On $B_r$ we have $A(Z_{n+1})=A_r$ and $B(Z_{n+1})=L_r-1$, so by \eqref{cp-pi}
$$
(n+1)\,\pi(Z_{n+1})\;=\;A_r+\tau\,L_r,
$$
and conditionally, we have $(n+1)\pi(Z_{n+1})\sim U\big([A_r,\,A_r+L_r]\big)$.

Therefore, for any $t\in[0,n+1]$,
\begin{align*}
\P\bigl((n+1)\pi(Z_{n+1}) \le t \bigm| B_r,\, S_{(1)},\ldots,S_{(n+1)} \bigr)
&= \P\Bigl(\tau \le \frac{t - A_r}{L_r} \Bigm| B_r,\, S_{(1)},\ldots,S_{(n+1)} \Bigr)\\
&= \frac{(t - A_r)_+ \wedge L_r}{L_r}.
\end{align*}
By the law of total probability,
\begin{align*}
\P\bigl((n+1)\pi(Z_{n+1}) \le t \,\bigm|\, S_{(1)},\ldots,S_{(n+1)} \bigr)
&= \sum_{r=1}^m \frac{L_r}{n+1}\cdot \frac{(t - A_r)_+ \wedge L_r}{L_r}
= \frac{t}{n+1},
\end{align*}
since the intervals $[A_r,\,A_r+L_r]$ partition $[0,n+1]$ and
$(t-A_r)_+\wedge L_r$ is the length of $[A_r,\,A_r+L_r]\cap[0,t]$.
Hence $(n+1)\pi(Z_{n+1}) \sim U[0,n+1]$ unconditionally, and
$\pi(Z_{n+1}) \sim U(0, 1)$.

In particular, for any $\alpha\in[0,1]$,
$$
\P\bigl( \pi(Z_{n+1}) \le 1-\alpha \bigr) = 1-\alpha.
$$

\subsection{Proof of Proposition~\ref{prop:affine-diff}}
\label{app:AiBi-deriv}

\paragraph{Assumptions}
We work in the noiseless interpolation setting with a fixed, centered Gaussian
process prior $\xi\sim\GP(0,k)$, where the covariance kernel $k$ is strictly
positive definite on the set $\{x_1,\dots,x_n,x_{n+1}\}$. The design points are
pairwise distinct, so the Gram matrices below are symmetric positive definite
(hence invertible). Throughout the derivation we condition on a fixed
dataset $\Dcal_n=\{(x_i,z_i)\}_{i=1}^n$ and a fixed test location $x_{n+1}$.
If a known nonzero mean $m$ is used, replace $z_i$ by $z_i-m(x_i)$ and $z$ by
$z-m(x_{n+1})$; the algebra is unchanged. Extensions to universal and intrinsic
kriging are discussed in Remark~\ref{rem:affine-extensions}.

\smallskip
Let $z_{1:n}=(z_1,\ldots,z_n)^\top$. For a test covariate $x_{n+1}$, set
$$
K_n:=\big(k(x_i,x_j)\big)_{i,j=1}^n,\qquad
k_*:=\big(k(x_1,x_{n+1}),\ldots,k(x_n,x_{n+1})\big)^\top,\qquad
k_{**}:=k(x_{n+1},x_{n+1}).
$$
Form the augmented covariance
$$
K_{n+1}=\begin{bmatrix}K_n & k_*\\ k_*^\top & k_{**}\end{bmatrix},
\qquad
u:=K_n^{-1}k_*,\qquad
v:=k_{**}-k_*^\top K_n^{-1}k_*.
$$
By the block inverse formula,
$$
\bar K_{n+1}:=K_{n+1}^{-1}
=
\begin{bmatrix}
K_n^{-1} + u\,v^{-1} u^\top & - u\,v^{-1} \\
- v^{-1} u^\top & v^{-1}
\end{bmatrix}.
$$
Write $\bar k_{ij}=(\bar K_{n+1})_{ij}$ and $\bar k_i=(\bar K_{n+1})_{ii}$. Then
$$
\bar k_{n+1}=\frac{1}{v},\qquad
\bar k_{i,n+1}=-\frac{u_i}{v},\qquad
\bar k_i=(K_n^{-1})_{ii}+\frac{u_i^2}{v}.
$$

Let $z^{\mathrm{aug}}=(z_1,\ldots,z_n,z)^\top$ and set
$w:=\bar K_{n+1}z^{\mathrm{aug}}$.

\paragraph{Components of $w$}
Block multiplication gives
$$
w_{1:n}=K_n^{-1}z_{1:n}+u\,v^{-1}\big(u^\top z_{1:n}-z\big),
\qquad
w_{n+1}=v^{-1}\big(z-u^\top z_{1:n}\big).
$$
Using $m_n(x_{n+1})=k_*^\top K_n^{-1}z_{1:n}=u^\top z_{1:n}$,
$$
w_j=\big(K_n^{-1}z_{1:n}\big)_j+\frac{u_j}{v}\big(m_n(x_{n+1})-z\big),
\quad j=1,\ldots,n,
\qquad
w_{n+1}=\frac{z-m_n(x_{n+1})}{v}.
$$

\paragraph{Standardized LOO scores}
From GP LOO identities (e.g., \citealp{petit_parameter_2023}), the leave-one-out
posterior at $(x_i,z_i)$ in the augmented dataset satisfies
$$
m_{n+1,-i}(x_i)=z_i-\frac{w_i}{\bar k_i},
\qquad
\sigma_{n+1,-i}(x_i)=\frac{1}{\sqrt{\bar k_i}}.
$$
Therefore the standardized residual score is
$$
R_i^z=\frac{z_i-m_{n+1,-i}(x_i)}{\sigma_{n+1,-i}(x_i)}
=\frac{w_i}{\sqrt{\bar k_i}},\qquad i=1,\ldots,n.
$$
Substituting the expression of $w_i$ gives
$$
R_i^z
=\frac{(K_n^{-1} z_{1:n})_i+\dfrac{u_i}{v}\,(m_n(x_{n+1})-z)}
        {\sqrt{(K_n^{-1})_{ii}+\dfrac{u_i^2}{v}}},\qquad i=1,\ldots,n.
$$
At the test point $x_{n+1}$, the GP posterior variance is
$\sigma_n^2(x_{n+1})=v$ and the mean is $m_n(x_{n+1})$, so
$$
R_{n+1}^z=\frac{z-m_n(x_{n+1})}{\sqrt{v}}
=\frac{w_{n+1}}{\sqrt{\bar k_{n+1}}}.
$$

\paragraph{Affine form for $R_{n+1}^z - R_i^z$}
Introduce the shorthands
$$
a_i:=(K_n^{-1} z_{1:n})_i,\qquad
b_i:=\frac{u_i}{v},\qquad
d_i:=\sqrt{(K_n^{-1})_{ii}+ \frac{u_i^2}{v}},\qquad
s:=\sqrt{v},\qquad m_n:=m_n(x_{n+1}).
$$
Then
$$
R_i^z=\frac{a_i + b_i(m_n-z)}{d_i}=\frac{a_i + b_i m_n - b_i z}{d_i},
\qquad
R_{n+1}^z=\frac{z-m_n}{s}.
$$
Hence
$$
\begin{aligned}
R_{n+1}^z - R_i^z
&= \frac{z-m_n}{s}
   - \frac{a_i + b_i m_n - b_i z}{d_i} \\[4pt]
&= \Big(\frac{1}{s} + \frac{b_i}{d_i}\Big)\,z
   - \Big(\frac{m_n}{s} + \frac{a_i + b_i m_n}{d_i}\Big).
\end{aligned}
$$
This is of the form $\beta_i z - \alpha_i$ with
$$
\beta_i=\frac{1}{s}+\frac{b_i}{d_i}
=\frac{1}{\sqrt{v}}+\frac{u_i}{v\,\sqrt{(K_n^{-1})_{ii}+u_i^2/v}},
\qquad
\alpha_i=\frac{m_n}{s}+\frac{a_i + b_i m_n}{d_i}.
$$
The difference of residual scores can be expressed in centered affine form,
$$
R_{n+1}^z - R_i^z = \beta_i\,(z-c_i),
$$
where
$$
c_i = \alpha_i/\beta_i = \frac{m_n d_i + s a_i + s b_i m_n}{d_i + s b_i}
= m_n + \frac{s a_i}{d_i + s b_i}\,.
$$
Since $s b_i = u_i/\sqrt{v}$,
$$
c_i = m_n + \frac{v\,a_i}{\sqrt{\,v\,(K_n^{-1})_{ii}+u_i^2\,}+\,u_i} =  m_n(x_{n+1})
+ \frac{v\,(K_n^{-1} z_{1:n})_i}
     {\sqrt{\,v\,(K_n^{-1})_{ii}+u_i^2\,}+\,u_i}\,.
$$

\paragraph{Positivity of $\beta_i$}
An equivalent expression for the slope is
$$
\beta_i
= \sqrt{\bar k_{n+1}} \;-\; \frac{\bar k_{i,n+1}}{\sqrt{\bar k_i}},
\qquad
\text{since }\;
\bar k_{n+1}=\frac{1}{v},\;
\bar k_{i,n+1}=-\frac{u_i}{v},\;
\bar k_i=(K_n^{-1})_{ii}+\frac{u_i^2}{v}.
$$
Because $\bar K_{n+1}\succ 0$, Cauchy--Schwarz in the inner product
$\langle x,y\rangle=x^\top \bar K_{n+1} y$ gives
$
\lvert \bar k_{i,n+1}\rvert \;<\; \sqrt{\bar k_i\,\bar k_{n+1}}.
$
Dividing by $\sqrt{\bar k_i}$ yields
$$
-\sqrt{\bar k_{n+1}}
\;<\;
\frac{\bar k_{i,n+1}}{\sqrt{\bar k_i}}
\;<\;
\sqrt{\bar k_{n+1}},
$$
hence
$
\beta_i
= \sqrt{\bar k_{n+1}} - \frac{\bar k_{i,n+1}}{\sqrt{\bar k_i}}
\;>\; 0.
$
(The inequality is strict since $\bar K_{n+1}$ is positive definite.)

\begin{remark}[Extensions to universal and intrinsic kriging]
\label{rem:affine-extensions}
The same affine representation holds (with obvious substitutions) in
two broader cases: (i) \emph{universal kriging} with unknown linear
mean $m(x)=h(x)^\top\beta$ and known regressors $h(x)\in\R^q$, by
replacing $K_n^{-1}$ with
$Q^{-1}:=K_n^{-1}-K_n^{-1}H(H^\top K_n^{-1}H)^{-1}H^\top K_n^{-1}$,
where $H=[h(x_1),\dots,h(x_n)]^\top$; and (ii) \emph{intrinsic
  kriging} with conditionally positive definite kernels and drift
space $\mathcal{P}$, by projecting onto $\mathrm{col}(H)^\perp$
(choose an orthonormal basis $W$ and work with $W^\top K_n W$),
assuming the design is unisolvent for $\mathcal{P}$.
\end{remark}

\subsection{Proof of Proposition~\ref{prop:cps-gp-finite}}
\label{sec:cps-gp-finite-bounds}

With distinct thresholds
$-\infty=c_{(0)}<c_{(1)}<\cdots<c_{(n)}<c_{(n+1)}=\infty$,
the stepwise CPD satisfies
$$
\hat F_{n,\tau}^{\mathrm{CPS\text{-}GP}}(z\mid x_{n+1})
= \frac{i+\tau}{n+1},
\qquad z\in(c_{(i)},c_{(i+1)}),
$$
so that the right limit at $c_{(i)}$ is
$\hat F_{n,\tau}^{\mathrm{CPS\text{-}GP}}(c_{(i)}^{+}\mid x_{n+1})
= (i+\tau)/(n+1)$.

\emph{Lower endpoint.}
The lower bound of $\Ccal_{n,\tau,1-\alpha}^{\mathrm{CPS\text{-}GP}}(x_{n+1})$
is finite if and only if $\alpha/2$ exceeds the leftmost plateau level
$\hat F(c_{(0)}^{+}\mid x_{n+1})=\tau/(n+1)$, i.e.
$$
\frac{\alpha}{2}>\frac{\tau}{n+1}.
$$

\emph{Upper endpoint.}
The upper bound is finite if and only if there exists a finite
$c_{(j)}$ such that
$\hat F(c_{(j)}^{+}\mid x_{n+1})\ge 1-\alpha/2$.
The last finite level is
$$
\hat F(c_{(n)}^{+}\mid x_{n+1})
= \frac{n+\tau}{n+1}
= 1-\frac{1-\tau}{n+1},
$$
so finiteness requires
$$
1-\alpha/2 \le \frac{n+\tau}{n+1}
\quad\Longleftrightarrow\quad
\frac{\alpha}{2} \ge \frac{1-\tau}{n+1}.
$$

Combining both conditions yields
$$
\alpha \ge \frac{2}{n+1}\max\{\tau,\,1-\tau\},
$$
with strict inequality required in the case $\tau>1/2$, since the lower–endpoint
condition involves a strict inequality.

\subsection{Proof of Proposition~\ref{prop:complexity}}

\emph{Precomputation.}  
Compute the Cholesky factorization $K_n=LL^\top$ at a cost of $O(n^3)$.  
From $L$, obtain $\alpha=K_n^{-1}z_{1:n}$ via two triangular solves in $O(n^2)$.  
If needed, $\operatorname{diag}(K_n^{-1})$ can be computed once in $O(n^3)$ 
(for example by forming $L^{-1}$); this cost is included in the same precomputation order.

\emph{Per prediction point $x_{n+1}$.}
\begin{enumerate}[leftmargin=*, itemsep=2pt]
\item Compute $k_*\in\R^n$ and $k_{**}$: $O(n)$ kernel evaluations.
\item Solve $u=K_n^{-1}k_*$ using $L$: $O(n^2)$.
\item Compute $v=k_{**}-k_*^\top u$ and $m=k_*^\top\alpha$: $O(n)$.
\item Evaluate all thresholds $c_i$ using their closed forms, based on $u_i$, $v$, $m$, 
      $(K_n^{-1}z_{1:n})_i$, and $(K_n^{-1})_{ii}$: $O(n)$.
\item If the full stepwise distribution is required, sort $\{c_i\}_{i=1}^n$: $O(n\log n)$.
\end{enumerate}

The resulting per–point complexity is $O(n^2+n\log n)$,
dominated by the triangular solves, with sorting as the only extra operation
beyond standard GP prediction ($O(n^2)$).  
If only the interval endpoints are needed, the relevant order statistics
can be extracted in expected $O(n)$ time (e.g.\ by quickselect),
yielding $O(n^2+n)$ per prediction point.

\section{Forecasting primer: auto-, marginal-, and probabilistic calibration}
\label{app:forecasting-primer}

This section recalls standard calibration notions in a general
prediction-space setting; they serve as background only. The main
article evaluates calibration in a design-based sense with respect to~$\mu$
(Sections~\ref{sec:bg-coverage}--\ref{sec:bg-pit}).

Let $(\Omega,\mathcal A,\P)$ be a probability space. A \emph{predictive CDF} is a
measurable random element $\hat F:\Omega\to\mathcal D$, where $\mathcal D$ is the
set of all cumulative distribution functions on $\R$, equipped with the
$\sigma$-algebra generated by finite-dimensional cylinders. We denote by
$\sigma(\hat F)$ the $\sigma$-algebra generated by~$\hat F$.

Let $Z:\Omega\to\R$ be the outcome of interest with true CDF~$F$. All statements
below are made under the joint distribution of $(\hat F,Z)$ (and, when needed,
an auxiliary $\tau\sim\U(0,1)$ independent of both).

\begin{remark}
This setting differs slightly from the \emph{prediction space} framework of
\citet{gneiting_combining_2013}, where elements of $\Omega$ are realizations of
the triple $(\hat F,Z,\tau)$.
\end{remark}

\paragraph{Auto-calibration}
A predictive distribution is \emph{auto-calibrated}
\citep{Tsyplakov2013} if
$$
  \P(Z \le z \mid \sigma(\hat F)) = \hat F(z)
  \quad \text{a.s. for all } z\in\R.
$$
This requires that, conditional on $\sigma(\hat F)$, the true conditional
distribution of $Z$ coincides with~$\hat F$. This is a strong requirement.
Tests for auto-calibration exist
\citep{strahl2015crosscalibrationprobabilisticforecasts}, but assessing it in
practice is difficult, which motivates weaker notions.

\paragraph{Marginal calibration}
A predictive distribution $\hat F$ is \emph{marginally calibrated} if, for every
$z \in \R$,
$$
  \E[\hat F(z)] = \P(Z \le z).
$$
Auto-calibration implies marginal calibration: since
$\P(Z \le z \mid \sigma(\hat F))=\hat F(z)$ a.s., taking expectations and using
the law of total expectation gives the identity above. Marginal calibration is
weaker than auto-calibration and is more tractable empirically, since both sides
can be estimated by averages over forecast–observation pairs.

\paragraph{Probabilistic calibration}
A second relaxation is \emph{probabilistic calibration}, based on the
\emph{probability integral transform} (PIT). For a continuous predictive CDF
$\hat F$ and an observation $Z$, define
$$
  U_{\hat F}^Z = \hat F(Z).
$$
If $\hat F = F$, then $U_{\hat F}^Z \sim \U(0,1)$; this is the usual PIT
property. A forecast $\hat F$ is called \emph{probabilistically calibrated} if
$U_{\hat F}^Z$ is uniformly distributed on $[0,1]$. In practice, the PIT
property is assessed by comparing the empirical distribution of PIT values to
the uniform law, for example via histograms or empirical CDF plots
\citep{Dawid1984}. For general, possibly discontinuous, predictive
distributions, one uses the randomized PIT
$$
  U_{\hat F}^Z
  = \hat F(Z^-) + \tau\big(\hat F(Z) - \hat F(Z^-)\big),
  \qquad \tau \sim \U(0,1),
$$
which restores uniformity when $\hat F = F$.

\begin{proposition}
\label{prop:pit-unif}
Let $Z$ be a real-valued random variable with CDF $F$. Let $\hat F$ be a random
continuous CDF, independent of $Z$, and define $U_{\hat F}^Z = \hat F(Z)$. Then
the following are equivalent:
\begin{enumerate}[label=(\roman*),leftmargin=*]
\item For all $u\in[0,1]$,
$$
  \P\bigl(U_{\hat F}^Z \le u \mid \hat F\bigr) = u \quad \text{a.s.}
$$
\item $\hat F = F$ almost surely.
\end{enumerate}
\end{proposition}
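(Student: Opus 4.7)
The plan is to prove the two implications separately. The key reduction is to exploit the independence of $\hat F$ and $Z$ so that the conditional statement in (i) becomes, for $\P_{\hat F}$-almost every realization $G$ of $\hat F$, a purely distributional statement about the pushforward $G(Z)$ with $Z\sim F$.

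The direction (ii) $\Rightarrow$ (i) is the short one. If $\hat F=F$ almost surely, then $F$ is continuous (since $\hat F$ is), so the classical PIT theorem gives $F(Z)\sim\U(0,1)$; conditioning on $\hat F$ is trivial because $\hat F$ is a.s.\ equal to the deterministic object $F$, which yields $\P(U_{\hat F}^{Z}\le u\mid \hat F)=u$ a.s.\ for every $u\in[0,1]$.

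For (i) $\Rightarrow$ (ii), I would first invoke Fubini (or regular conditional distributions) together with independence of $\hat F$ and $Z$ to rewrite
$$
\P\bigl(\hat F(Z)\le u\mid \hat F=G\bigr)\;=\;\P\bigl(G(Z)\le u\bigr)\qquad\text{for $\P_{\hat F}$-a.e.\ continuous $G$.}
$$
Hypothesis (i) then becomes: for $\P_{\hat F}$-a.e.\ continuous $G$, the variable $G(Z)$ is uniform on $[0,1]$ when $Z\sim F$. The core lemma to prove is thus purely analytic: \emph{if $G$ is a continuous CDF and $G(Z)\sim\U(0,1)$ with $Z\sim F$, then $F=G$.} Using the continuous-$G$ versions of the identities in Lemma~\ref{lem:gen-inverse}, namely $\{G(Z)\le u\}=\{Z\le s(u)\}$ with $s(u):=\sup\{z:G(z)\le u\}$ and $\{G(Z)<u\}=\{Z<G^{-1}(u)\}$, uniformity of $G(Z)$ forces $F(s(u))=u=G(s(u))$ and $F(G^{-1}(u)^-)=u=G(G^{-1}(u))$ for every $u\in(0,1)$. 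At each $z$ of the form $G^{-1}(u)$, i.e.\ every left endpoint of a level set of $G$, this pins down $F(z^-)=F(z)=G(z)$; for $z$ in the interior or at the right endpoint of a flat plateau, $F=G$ follows by sandwiching $F$ between the endpoint values, which coincide, together with right-continuity of $F$. Once $F=G$ holds for $\P_{\hat F}$-a.e.\ $G$, the law of $\hat F$ concentrates on this common $F$, giving $\hat F=F$ almost surely.

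The main technical obstacle is the continuous-but-not-strictly-increasing case, on which $G^{-1}(G(z))\neq z$: the argument must separately track left endpoints of level sets, interior points, and right endpoints of plateaus, and along the way must also deduce that $F$ is continuous (a priori only assumed to be a CDF). A minor secondary issue is promoting the per-$u$ a.s.\ statement in (i) to a single null set independent of $u$; this is standard, using right-continuity of $u\mapsto\one\{U_{\hat F}^{Z}\le u\}$ and restricting to a countable dense subset of $[0,1]$ before extending by monotone limits.
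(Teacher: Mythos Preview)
Your proposal is correct and follows the same route as the paper's proof: use independence of $\hat F$ and $Z$ to reduce (i) to the deterministic statement that $G(Z)\sim\U(0,1)$ for $\P_{\hat F}$-a.e.\ realization $G$ of $\hat F$, then show this forces $F=G$. You are, if anything, more careful than the paper on two points. First, the paper writes $\P(\hat F(Z)\le u\mid\hat F)=\P(Z\le q_u\mid\hat F)=F(q_u)$ with $q_u=\hat F^{-1}(u)$, which tacitly assumes $\hat F$ is strictly increasing; on a plateau at level $u$ the event $\{\hat F(Z)\le u\}$ is $\{Z\le s_u\}$ with $s_u$ the right endpoint, not $\{Z\le q_u\}$, and your explicit tracking of both endpoints via $s(u)$ and $G^{-1}(u)$ closes this gap. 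Second, the paper is silent on upgrading the per-$u$ almost-sure equality to a single null set, which you handle correctly by passing through a countable dense set of $u$'s and using monotonicity.
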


\begin{proof}
(ii) $\Rightarrow$ (i).  
If $\hat F = F$ a.s., then conditional on $\hat F$ we have $U_{\hat F}^Z = F(Z)$,
which is uniform on $[0,1]$. Hence
$$
  \P\bigl(U_{\hat F}^Z \le u \mid \hat F\bigr) = u \quad \text{a.s.}
$$

(i) $\Rightarrow$ (ii).  
Fix $u\in[0,1]$ and define
$$
  q_u = \inf\{z:\ \hat F(z)\ge u\}.
$$
Continuity of $\hat F$ gives $\hat F(q_u) = u$ and $q_u = \hat F^{-1}(u)$. Using
independence of $\hat F$ and $Z$,
$$
  \P\bigl(U_{\hat F}^Z \le u \mid \hat F\bigr)
  = \P\bigl(\hat F(Z)\le u \mid \hat F\bigr)
  = \P(Z\le q_u \mid \hat F)
  = F(q_u).
$$
By assumption this equals $u$ a.s., so $F(q_u)=u$ for all $u\in[0,1]$. Thus
$$
  F(\hat F^{-1}(u)) = u \quad \text{a.s. for all } u\in[0,1],
$$
which implies $F = \hat F$ almost surely.
\end{proof}

Finally, auto-calibration implies probabilistic calibration. Indeed, if
$$
  \P(Z \le z \mid \sigma(\hat F)) = \hat F(z)
  \quad\text{a.s. for all } z\in\R,
$$
then for any $u\in[0,1]$, with $q_u=\inf\{z:\hat F(z)\ge u\}$,
$$
  \P(U_{\hat F}^Z \le u \mid \sigma(\hat F))
  = \P(Z \le q_u \mid \sigma(\hat F))
  = \hat F(q_u) = u,
$$
so $U_{\hat F}^Z$ is uniform (for discontinuous $\hat F$, the same argument
applies with the randomized PIT). By contrast, marginal calibration does not
imply probabilistic calibration, and probabilistic calibration does not imply
marginal calibration in general (see \citet[§2]{gneiting_combining_2013} and
\citet{Gneiting:2023}).

\section{Generalized normal distribution}
\label{apd:gnp}

The generalized normal distribution $\mathcal{GN}(\beta, \mu, \lambda)$, with
shape parameter $\beta > 0$, location $\mu \in \R$ and scale parameter
$\lambda > 0$, is a continuous distribution that extends the classical normal
distribution. Its probability density function is
$$
  f(z)
  = \frac{\beta}{2\,\Gamma(1/\beta)\,\lambda}
    \exp\!\left[
      -\left(\frac{|z - \mu|}{\lambda}\right)^{\beta}
    \right],
  \qquad z\in\R.
$$

All moments exist. By symmetry, $\E[Z]=\mu$, and the variance is
$$
  \Var(Z)
  = \lambda^{2}\,\frac{\Gamma(3/\beta)}{\Gamma(1/\beta)}.
$$

The shape parameter $\beta$ controls tail behavior. When $\beta = 2$, one
recovers the Gaussian case:
$$
  \mathcal{GN}(2,\mu,\lambda) = \mathcal N\!\big(\mu,\lambda^2/2\big).
$$
For $\beta < 2$, the distribution has heavier tails than the Gaussian, and for
$\beta > 2$ it has lighter tails. Further analytical properties of the
generalized normal family are given in \citet{nadarajah2005analytical}.

\section{SCRPS formulas}
\label{ap:scrps_formulas}

\subsection{SCRPS for the generalized normal distribution}
Let $F_{\beta}$ be the CDF of $\mathcal{GN}(\beta, 0, 1)$, and $\Gamma(\cdot,
\cdot)$ is the upper incomplete gamma function. The SCRPS for $Z\sim\mathcal{GN}
(\beta, 0, 1)$ can be approximated using the Propositions~\ref{prop:scrps_gn_1} 
and~\ref{prop:scrps_gn_2}. Indeed, $\E(|Z' - Z|)$ for $Z'$ a copy of $Z$, can 
be approximated for several values of $\beta$, then an interpolation 
technique is used to learn $\beta \rightarrow \E(|Z' - Z|)$, and $\E(|Z - z'|)$
admits a close form formula.

\begin{proposition}
  Let $z'\in\R$ and $Z\sim GN(\beta, \mu, \lambda)$, then, with $u=(z' - \mu)/(\lambda)$,
  \begin{equation}
    \E(|Z - z'|) = \lambda\left[u\left(2F_{\beta}(u) - 1\right) + \frac{1}{\Gamma\left(1/\beta\right)}\Gamma\left(\frac{2}{\beta}, \left|u\right|^{\beta}\right)\right].
  \end{equation}
  \label{prop:scrps_gn_1}
\end{proposition}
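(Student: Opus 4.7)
The plan is to reduce the statement to the standardized case and then evaluate a single incomplete-gamma integral. Set $Y := (Z-\mu)/\lambda$, so that $Y \sim \mathcal{GN}(\beta,0,1)$ with density $g(y) = \frac{\beta}{2\Gamma(1/\beta)} e^{-|y|^{\beta}}$, symmetric about $0$. Since $|Z - z'| = \lambda\,|Y - u|$ with $u=(z'-\mu)/\lambda$, it suffices to prove
$$
\E|Y-u| \;=\; u\bigl(2F_{\beta}(u)-1\bigr) + \frac{1}{\Gamma(1/\beta)}\,\Gamma\!\left(\tfrac{2}{\beta},\,|u|^{\beta}\right).
$$

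Next, split the expectation at $u$:
$$
\E|Y-u| \;=\; \int_{-\infty}^{u}(u-y)g(y)\,dy + \int_{u}^{\infty}(y-u)g(y)\,dy
\;=\; u\bigl(2F_{\beta}(u)-1\bigr) + \Bigl(\int_u^\infty y\,g(y)\,dy - \int_{-\infty}^u y\,g(y)\,dy\Bigr).
$$
Because $\E[Y]=0$ by symmetry, $\int_{-\infty}^{\infty} y\,g(y)\,dy = 0$, so the bracketed difference equals $2\int_u^\infty y\,g(y)\,dy$ when $u\ge 0$ (and $-2\int_{-\infty}^u y\,g(y)\,dy$ when $u\le 0$).

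For $u\ge 0$, I would evaluate $\int_u^\infty y\,g(y)\,dy$ with the substitution $t=y^{\beta}$ ($dt=\beta y^{\beta-1}dy$), which turns the integrand into $\frac{1}{\beta} t^{2/\beta - 1} e^{-t}$, yielding
$$
\int_u^\infty y\,g(y)\,dy \;=\; \frac{1}{2\Gamma(1/\beta)}\,\Gamma\!\left(\tfrac{2}{\beta},\,u^{\beta}\right),
$$
so that the formula holds with $|u|=u$. For $u<0$, I would invoke the symmetry $Y\stackrel{d}{=}-Y$ to write $\E|Y-u|=\E|Y-(-u)|$ and apply the previous computation with $v=-u>0$; the identity $F_\beta(-u)=1-F_\beta(u)$ then gives $v(2F_\beta(v)-1) = u(2F_\beta(u)-1)$, while $v^\beta=|u|^\beta$, so both cases combine into the stated expression. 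Finally, multiplying by $\lambda$ recovers the proposition.

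The only real subtlety is the sign-of-$u$ bookkeeping when connecting the half-line integral to $\Gamma(2/\beta,\,|u|^\beta)$; the substitution $t=y^{\beta}$ is cleanest on $\{y>0\}$, so the case $u<0$ is handled indirectly via the symmetry of $g$ rather than a direct substitution (which would otherwise require splitting the positive and negative tails separately). Everything else is routine.
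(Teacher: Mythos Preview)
Your proof is correct and follows essentially the same route as the paper: standardize, split $\E|Y-u|$ at $u$ to extract the $u(2F_\beta(u)-1)$ term, and reduce the remaining tail integrals to the upper incomplete gamma via the substitution $t=y^{\beta}$. The only cosmetic difference is how the sign of $u$ is handled: the paper keeps both signs together by exploiting the oddness of $\int_0^{u} z\,e^{-|z|^\beta}\,dz$ to write both half-line integrals directly in terms of $|u|$, whereas you treat $u\ge 0$ first and then invoke $Y\stackrel{d}{=}-Y$ and $F_\beta(-u)=1-F_\beta(u)$ to reduce $u<0$ to the positive case; both arrive at the same $\Gamma(2/\beta,|u|^\beta)$ expression.
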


\begin{proof}
  \begin{align*}
    \E(|Z - z'|) & = \int_{\R} |y - z'|\frac{\beta}{2\Gamma(1/\beta)\lambda}\exp\left[ - \left(\frac{|y - \mu|}{\lambda}\right)^{\beta}\right] \dd y                                                      \\
                 & = \lambda\frac{\beta}{2\Gamma(1/\beta)}\left(\int_{-\infty}^{u} (u - z)\exp\left( - |z|^{\beta}\right) \dd z + \int_{u}^{+\infty} (z - u)\exp\left( - |z|^{\beta}\right) \dd z \right) \\
                 & = \lambda\left[u(2F_{\beta}(u) - 1) + \frac{1}{\Gamma\left(1/\beta\right)}\Gamma\left(\frac{2}{\beta}, \left|u\right|^{\beta}\right) \right].
  \end{align*}
  The last line is now proved, first notice that by symmetry of $\int_{0}^{u} z\exp\left( - |z|^{\beta}\right) \dd z$:
  $$
    \left \{
    \begin{array}{rcl}
      \int_{u}^{+\infty} z\exp\left( - |z|^{\beta}\right) \dd z & = \frac{1}{\beta}\Gamma\left(\frac{2}{\beta}\right) - \int_{0}^{|u|} z\exp\left( - |z|^{\beta}\right) \dd z  \\
      \int_{-\infty}^{u} z\exp\left( - |z|^{\beta}\right) \dd z & = \int_{0}^{|u|} z\exp\left( - |z|^{\beta}\right) \dd z - \frac{1}{\beta}\Gamma\left(\frac{2}{\beta}\right),
    \end{array}
    \right.
  $$
  Finally,
  $$
    \int_{u}^{+\infty} z\exp\left( - |z|^{\beta}\right) \dd z - \int_{-\infty}^{u} z\exp\left( - |z|^{\beta}\right) \dd z = 2\frac{1}{\beta} \Gamma(2/\beta, |u|^{\beta})
  $$
  which proves the results.
\end{proof}

\begin{lemma}
  Let the random variables $Z', Z \sim GN(\beta, \mu, \lambda)$ and $U', U \sim GN(\beta, 0, 1)$, then $\E(|Z' - Z|) = \lambda\E(|U' - U|)$.
  \label{prop:scrps_gn_2}
\end{lemma}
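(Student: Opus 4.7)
The plan is to reduce the statement to the standard location--scale property of the generalized normal family. From the density in Appendix~\ref{apd:gnp},
$$
f_{\beta,\mu,\lambda}(z)
= \frac{\beta}{2\,\Gamma(1/\beta)\,\lambda}\,
  \exp\!\left[-\left(\frac{|z-\mu|}{\lambda}\right)^{\beta}\right],
$$
a direct change of variables $z = \mu + \lambda u$ shows that if $U\sim \mathcal{GN}(\beta,0,1)$ then $\mu+\lambda U$ has density $f_{\beta,\mu,\lambda}$. Hence I would first state the distributional identity
$$
Z \stackrel{d}{=} \mu + \lambda\, U,
\qquad
Z'\stackrel{d}{=} \mu + \lambda\, U',
$$
where $(U,U')$ are i.i.d.\ $\mathcal{GN}(\beta,0,1)$ and, by the independence assumption, $(Z,Z')$ can be coupled as $(\mu+\lambda U,\mu+\lambda U')$.

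Next I would subtract to cancel the common location parameter: $Z' - Z \stackrel{d}{=} \lambda(U' - U)$, so $|Z'-Z| \stackrel{d}{=} \lambda\,|U'-U|$ (here $\lambda>0$ is used). Taking expectations and pulling out the positive constant yields
$$
\E\bigl[|Z'-Z|\bigr] \;=\; \lambda\,\E\bigl[|U'-U|\bigr],
$$
which is the claim.

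The proof is essentially a one-line consequence of the location--scale structure, and I do not expect any real obstacle. The only point worth stating explicitly is the need for $Z, Z'$ to be independent (or at least coupled as $(\mu+\lambda U,\mu+\lambda U')$ with $U,U'$ independent), since otherwise the distributional identity for the difference does not follow from the marginal identity. If desired, one can alternatively verify the identity by a direct double-integral change of variables $(z,z') = (\mu+\lambda u,\mu+\lambda u')$ in $\E[|Z'-Z|] = \iint |z'-z|\,f_{\beta,\mu,\lambda}(z)\,f_{\beta,\mu,\lambda}(z')\,\mathrm{d}z\,\mathrm{d}z'$, which produces a factor $\lambda$ from $|z'-z|$ and cancels the two $1/\lambda$ factors from the densities against the Jacobian $\lambda^2$.
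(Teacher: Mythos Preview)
Your proof is correct and follows exactly the same approach as the paper, which simply notes that it ``follows from the fact that $\lambda U + \mu \sim GN(\beta, \mu, \lambda)$.'' Your version is just a more explicit unpacking of this location--scale argument, including the (correct) observation that independence of $Z,Z'$ is needed for the distributional identity on the difference.
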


\begin{proof}
    Follows from the fact that  $\lambda U + \mu \sim GN(\beta, \mu, \lambda)$.
\end{proof}

\subsection{SCRPS for empirical distributions}

Let an i.i.d sequence $Z_1, \ldots, Z_n$ and denote by
$F_n(z) =  \frac{1}{n} \sum_{i=1}^n \one(Z_i \leq z)$ its empirical CDF. The 
SCRPS for a random variable with CDF $F_n$ can be computed using 
Proposition~\ref{prop:dis_scrps}.

\begin{proposition}
  Let $z\in\R$ and $Z', Z \sim F_n$, then:
  \begin{equation}
    \E_{Z\sim F_n}(|Z - z|)       = \frac{1}{n}\sum_{i=1}^n |Z_i - z|, \quad \text{and} \quad
    \E_{Z', Z \sim F_n}(|Z' - Z|) = \frac{1}{n^2} \sum_{i=1}^n \sum_{j=1}^n |Z_i - Z_j|.
  \end{equation}
  \label{prop:dis_scrps}
\end{proposition}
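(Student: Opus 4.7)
The plan is essentially a direct computation from the definition of expectation under the empirical measure, so no serious obstacle is expected; the proof is almost tautological.

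First, I would observe that the empirical CDF $F_n$ corresponds to the discrete probability measure assigning mass $1/n$ to each atom $Z_i$, for $i=1,\ldots,n$ (with ties accumulating mass naturally). In particular, for any bounded or nonnegative measurable $g:\R\to\R$, the law-of-the-unconscious-statistician gives
$$
\E_{Y\sim F_n}[g(Y)] \;=\; \sum_{i=1}^n g(Z_i)\,\P(Y=Z_i) \;=\; \frac{1}{n}\sum_{i=1}^n g(Z_i).
$$
Applying this to $g(y)=|y-z|$ yields the first identity immediately.

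For the second identity, since $Z$ and $Z'$ are declared independent with common law $F_n$, the joint law of $(Z',Z)$ is the product measure $F_n\otimes F_n$, which assigns mass $1/n^2$ to each pair $(Z_i,Z_j)$ with $i,j\in\{1,\ldots,n\}$. By Fubini's theorem (applicable since $|Z'-Z|\ge 0$),
$$
\E_{Z',Z\sim F_n}\bigl[|Z'-Z|\bigr]
\;=\;\sum_{i=1}^n\sum_{j=1}^n |Z_i-Z_j|\,\P(Z'=Z_i,\,Z=Z_j)
\;=\;\frac{1}{n^2}\sum_{i=1}^n\sum_{j=1}^n |Z_i-Z_j|,
$$
which gives the second identity. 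Equivalently, one can first apply the single-sample formula to $g_{Z}(z')=|z'-Z|$ to obtain $\E_{Z'}[|Z'-Z|\mid Z]=\tfrac{1}{n}\sum_i |Z_i-Z|$, then take the expectation over $Z\sim F_n$ by the same rule. Both routes yield the stated double sum, completing the proof.
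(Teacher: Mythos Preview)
Your proposal is correct and matches the paper's approach: the paper's own proof consists of the single sentence ``The proof is straightforward,'' and what you have written is precisely the direct computation that makes this sentence explicit.
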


\begin{proof}
  The proof is straightforward.
\end{proof}

\section{Test functions used in the experiments}
\label{ap:test_functions}

\begin{tabularx}{\textwidth}{l l Y}
  \caption{Test functions used in the numerical experiments.
  Each function is continuous, deterministic, and defined on the domain~$\XX$.
  Expressions and domains follow the standard definitions provided in
  \citet{surjanovic_bingham_optimization}.}
\label{tab:test_functions}
\\
\toprule
\textbf{Name} & \textbf{Domain} & \textbf{Expression of $f(x)$}\\
\midrule
Branin & $[-5,10]\times[0,15]$ &
$\bigl(x_2 - \tfrac{5.1}{4\pi^2}x_1^2 + \tfrac{5}{\pi}x_1 - 6\bigr)^2 + 10\bigl(1 - \tfrac{1}{8\pi}\bigr)\cos(x_1) + 10$ \\
Goldstein--Price & $[-2,2]^2$ &
$(1 + (x_1 + x_2 + 1)^2(19 - 14x_1 + 3x_1^2 - 14x_2 + 6x_1x_2 + 3x_2^2))(30 + (2x_1 - 3x_2)^2(18 - 32x_1 + 12x_1^2 + 48x_2 - 36x_1x_2 + 27x_2^2))$ \\
Rosenbrock & $[-5,10]^d$ &
$\sum_{i=1}^{d-1} \bigl[100(x_{i+1}-x_i^2)^2 + (x_i-1)^2\bigr]$ \\
Ackley & $[-32.168,32.168]^d$ &
$-20\exp\!\left(-0.2\sqrt{\tfrac{1}{d}\sum_i x_i^2}\right) - \exp\!\left(\tfrac{1}{d}\sum_i \cos(\pi x_i)\right) + 20 + e$ \\
Beale & $[-4.5,4.5]^2$ &
$(1.5 - x_1 + x_1x_2)^2 + (2.25 - x_1 + x_1x_2^2)^2 + (2.625 - x_1 + x_1x_2^3)^2$ \\
Dixon--Price & $[-10,10]^d$ &
$(x_1 - 1)^2 + \sum_{i=2}^{d} i(2x_i^2 - x_{i-1})^2$ \\
Hartmann--3 & $[0,1]^3$ &
$-\sum_{i=1}^{4} c_i \exp\!\bigl(-\sum_{j=1}^{3} a_{ij}(x_j - p_{ij})^2\bigr)$ \\
Hartmann--6 & $[0,1]^6$ &
$-\sum_{i=1}^{4} c_i \exp\!\bigl(-\sum_{j=1}^{6} a_{ij}(x_j - p_{ij})^2\bigr)$ \\
\bottomrule
\end{tabularx}

\medskip
\noindent\textbf{Hartmann parameters.}
\scriptsize
$$
a^{(3)}=\begin{bmatrix}
3.0 & 10.0 & 30.0\\
0.1 & 10.0 & 35.0\\
3.0 & 10.0 & 30.0\\
0.1 & 10.0 & 35.0
\end{bmatrix},\quad
c^{(3)}=\begin{bmatrix}1.0 \\ 1.2 \\ 3.0 \\ 3.2\end{bmatrix},
\quad
p^{(3)}=10^{-1}\!\begin{bmatrix}
1 & 1 & 1\\
3 & 3 & 3\\
5 & 5 & 5\\
7 & 7 & 7
\end{bmatrix}.
$$
$$
a^{(6)}=\begin{bmatrix}
10 & 3 & 17 & 3.5 & 1.7 & 8\\
0.05 & 10 & 17 & 0.1 & 8 & 14\\
3 & 3.5 & 1.7 & 10 & 17 & 8\\
17 & 8 & 0.05 & 10 & 0.1 & 14
\end{bmatrix},\quad
c^{(6)}=\begin{bmatrix}1.0 \\ 1.2 \\ 3.0 \\ 3.2\end{bmatrix},\quad
p^{(6)}=10^{-2}\!\begin{bmatrix}
1312 & 1696 & 5569 & 124 & 8283 & 5886\\
2329 & 4135 & 8307 & 3736 & 1004 & 9991\\
2348 & 1451 & 3522 & 2883 & 3047 & 6650\\
4047 & 8828 & 8732 & 5743 & 1091 & 381
\end{bmatrix}.
$$
\normalsize

\bibliographystyle{plainnat}
\bibliography{main}
\end{document}